\documentclass[11pt]{article}

\usepackage{fullpage}
\usepackage[T1]{fontenc}
\usepackage[utf8]{inputenc}
\usepackage{times}
\usepackage{latexsym}
\usepackage{microtype}
\usepackage{inconsolata}

\usepackage{amsmath,amsfonts,amssymb,amsthm}
\usepackage{mathtools}
\usepackage{dsfont}
\allowdisplaybreaks
\theoremstyle{plain}

\usepackage{graphicx}
\usepackage{subcaption}
\usepackage{booktabs}
\usepackage{multirow}
\usepackage{colortbl}
\usepackage{float}
\usepackage{nicematrix}
\usepackage{caption}

\usepackage{enumitem}
\usepackage{pifont}

\usepackage{algorithm}
\usepackage{algpseudocode}

\usepackage[dvipsnames,table]{xcolor}
\usepackage[most]{tcolorbox}

\usepackage[colorlinks,
  linkcolor=red,
  anchorcolor=blue,
  citecolor=blue
]{hyperref}
\usepackage{cleveref}

\usepackage{tikz}
\usepackage{pgfplots}
\pgfplotsset{compat=1.18}
\usepgfplotslibrary{groupplots}

\usepackage{twemojis}

\definecolor{LightCyan}{rgb}{0.5,0.65,1}
\definecolor{CadetBlue}{RGB}{95,158,160}
\definecolor{commentgray}{RGB}{80,80,80}

\definecolor{bg_c}{HTML}{F2F9FF}
\definecolor{bg_m}{HTML}{E6F4FF}
\definecolor{bg_h}{HTML}{CCE8FF}

\definecolor{darkgreen}{HTML}{005e19}
\definecolor{darkblue}{HTML}{240394}

\definecolor{DeepSlate}{RGB}{47,79,79}
\definecolor{GoldDetail}{RGB}{184,134,11}
\definecolor{SoftGray}{RGB}{248, 249, 250}
\definecolor{MutedBlue}{RGB}{70,130,180}
\definecolor{DeepRed}{RGB}{165,42,42}

\definecolor{blue1}{HTML}{AEC7E8}
\definecolor{blue2}{HTML}{1F77B4}
\definecolor{orange1}{HTML}{FFBB78}
\definecolor{blue_c}{HTML}{2271B5}
\definecolor{blue_nc}{HTML}{9ABAD4}

\DeclareRobustCommand{\cccbox}[1]{%
  \raisebox{0.5\height}{\colorbox{#1}{\makebox(0.2ex,0.2ex){}}}%
}

\newcommand{\code}[1]{\texttt{#1}}

\newcommand{\examplegs}[1]{\textcolor{darkgreen}{\textbf{\scriptsize{\code{#1}}}}}
\newcommand{\exampleb}[1]{\textcolor{darkblue}{\textbf{\small{\code{#1}}}}}

\newcommand{\gold}{{\large\texttwemoji{1f642}}}
\newcommand{\lie}{{\large\texttwemoji{1f608}}}
\newcommand{\dstr}{{\large\texttwemoji{1f635-200d-1f4ab}}}

\algnewcommand{\LeftComment}[1]{\(\triangleright\) {\color{LightCyan}{#1}}}

\usepackage{smile}

\begin{document}

\title{\huge Farther the Shift, Sparser the Representation:\\ Analyzing OOD Mechanisms in LLMs}

\author{Mingyu Jin\thanks{Rutgers University}
~,~
Yutong Yin\thanks{Northwestern University}
~,~
Jingcheng Niu\thanks{UKP Lab, TU Darmstadt}
~,~
Qingcheng Zeng\footnotemark[2]
~,~\\
Wujiang Xu\footnotemark[1]
~,~
Mengnan Du\thanks{New Jersey Institute of Technology}
~,~
Wei Cheng\thanks{NEC Lab American}
~,~
Zhaoran Wang\footnotemark[2]
~,~\\
Tianlong Chen\thanks{University of North Carolina at Chapel Hill}
~,~
Dimitris N. Metaxas\footnotemark[1]
}

\date{}

\maketitle

\begin{abstract}
In this work, we investigate how Large Language Models (LLMs) adapt their internal representations when encountering inputs of increasing difficulty, quantified as the degree of out-of-distribution (OOD) shift. We reveal a consistent and quantifiable phenomenon: as task difficulty increases, whether through harder reasoning questions, longer contexts, or adding answer choices, the last hidden states of LLMs become substantially sparser. In short, \textbf{\textit{the farther the shift, the sparser the representations}}. This sparsity--difficulty relation is observable across diverse models and domains, suggesting that language models respond to unfamiliar or complex inputs by concentrating computation into specialized subspaces in the last hidden state. Through a series of controlled analyses with a learning dynamic explanation, we demonstrate that this sparsity is not incidental but an adaptive mechanism for stabilizing reasoning under OOD. Leveraging this insight, we design \textit{Sparsity-Guided Curriculum In-Context Learning (SG-ICL)}, a strategy that explicitly uses representation sparsity to schedule few-shot demonstrations, leading to considerable performance enhancements. Our study provides new mechanistic insights into how LLMs internalize OOD challenges.
The source code is available at the URL: ~\url{https://github.com/MingyuJ666/sparsityLLM}.  
\end{abstract}

\section{Introduction}

Large Language Models (LLMs) act as powerful substrates for reasoning \citep{wei2022chain, li2025system} and knowledge-intensive interaction \citep{wang2024knowledge, chen2023felm, tongyideepresearchteam2025tongyideepresearchtechnicalreport}. Yet, their reliability frequently falters when prompts demand deeper inference or deviate from training distributions, leading to sharp performance degradation in out-of-distribution (OOD) regimes \citep{lightman2024lets, li2025longcontext}. This distinction between memorized competence (in-distribution; ID) and generalized reasoning (OOD) has prompted a shift from behavioral metrics to internal investigation via mechanistic interpretability~\citep{qi2025quantifying, wang2025generalization}.

Currently, research split between a \emph{mechanistic} perspective that maps behavior to specific circuits \citep{wangInterpretabilityWildCircuit2022, conmy2023towards} and a \emph{statistical} view that analyzes the geometry of distributed representations \citep{hewittStructuralProbeFinding2019, heinzerlingMonotonicRepresentationNumeric2024}. What remains significantly less explored is whether the transition from mastered ID performance to the uncertain OOD frontier is governed by a consistent representational signature.
\begin{center}
\textit{How Representation adapts when language models face harder reasoning questions?}
\end{center}
We identify \textit{\textbf{sparsity}}, the property where a high-dimensional representation is dominated by a small subset of active units, as a promising candidate for this signal. While sparsity is a pervasive phenomenon extensively discussed as evidence for specialization or modularity \citep[{\it inter alia}]{olshausen1996emergence, glorot2011deep, frankle2018the, xiong-etal-2025-uncomp}, and has been examined in LLMs regarding intrinsic dimensionality \citep{aghajanyanIntrinsicDimensionalityExplains2021, decaoSparseInterventionsLanguage2022}. However, most interpretability work treats sparsity as a largely static background property~\cite{liu2025teal}, rather than as an explanatory variable that changes systematically with task conditions and can therefore explain differences in behavior.

\begin{figure}[t]
    \centering
    \setlength{\tabcolsep}{2pt}
    \renewcommand{\arraystretch}{0.72}

    \begin{tabular}{@{}c c c@{}}
    &
    (\S\ref{sub:reasoning_complexity}) Reasoning Complexity &
    (\S\ref{sub:perturbation}) Answer Choice \\

    \rotatebox{90}{\hspace{2em} Dense \(\rightarrow\) Sparse} &
    \includegraphics[width=0.3\linewidth]{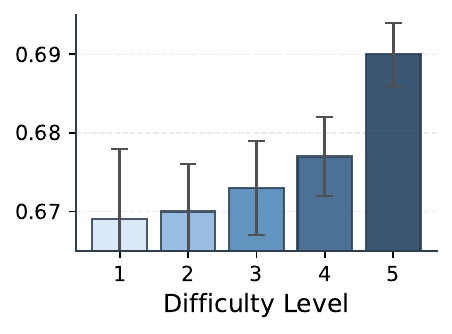} &
    \includegraphics[width=0.3\linewidth]{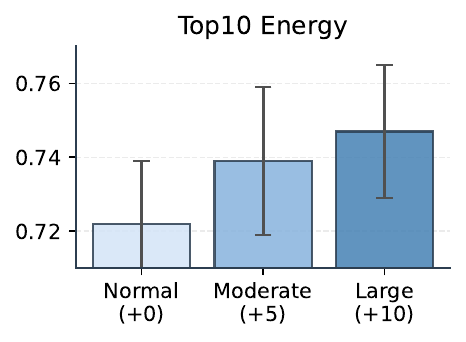} \\

    &

    \hspace{2em} Easy \(\rightarrow\) Hard &
    \hspace{2em} Easy \(\rightarrow\) Hard \\

    \midrule

    &
    (\S\ref{sub:knowledge_conflict}) Knowledge Conflict &
    (\S\ref{sub:long_context_reasoning}) Context Length \\

    \rotatebox{90}{\hspace{2em} Dense \(\rightarrow\) Sparse} &
    \includegraphics[width=0.3\linewidth]{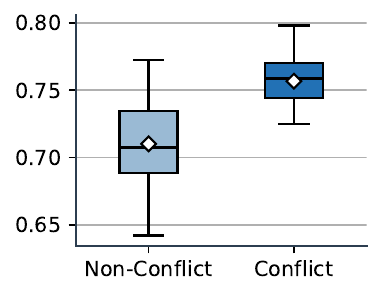} &
    \includegraphics[width=0.3\linewidth]{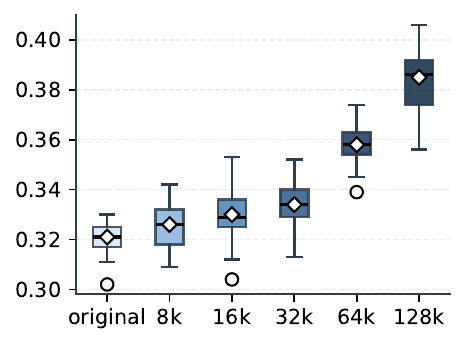} \\

    &
    \hspace{2em} Easy \(\rightarrow\) Hard &
    \hspace{2em} Easy \(\rightarrow\) Hard \\
    \end{tabular}

    \caption{\textbf{Harder Inputs Induce Sparser Representations.} Across all four controlled difficulty axes, the last hidden states become progressively sparser as tasks get harder. Results are shown for \exampleb{Qwen2.5-3B} using Top-10\% Energy; nevertheless, the same trend holds across difficulty settings, sparsity metrics, and LLM sizes.}
    \label{fig:open_fig}
\end{figure}

In this work, we make progress on these questions by uncovering a robust connection between representational sparsity and task difficulty. Concretely, when a language model is prompted with a task, \textbf{\textit{as task difficulty increases, the model's representations become systematically sparser}}.

We quantify \emph{sparsity} in the \emph{last hidden state} using the $\ell_1$ Norm and Top-$k$ Energy (for multiple $k$), and operationalize \emph{difficulty} by controlled increases in OOD shift across diverse reasoning tasks (reasoning complexity, number of answer choices, conflicting knowledge, and context length; \S\ref{sub:reasoning_complexity}–\S\ref{sub:long_context_reasoning}; summarizes in \autoref{fig:open_fig}). Across all settings, harder inputs consistently yield sparser last-layer representations, and this trend holds across tasks, sparsity metrics, and model families, suggesting a systematic aspect of LLM inference rather than an incidental artifact. In practical terms, increased sparsity means that fewer representation dimensions carry most of the activation mass, implying that harder prompts drive the model to rely on a more concentrated set of features. This observation bridges representational and mechanistic perspectives, motivating us to probe which internal pathways or features dominate as difficulty increases.

\begin{tcolorbox}[colback=gray!10, colframe=black!70, arc=2mm, boxrule=0.8pt]
This positions sparsity as a candidate organizing principle for studying how internal computation adapts under increased reasoning demands in the language model.
\end{tcolorbox}
Finally, we show that the sparsity-difficulty connection is actionable. By treating sparsity as a practical signal of task complexity, we design a \emph{sparsity-guided curriculum} strategy that selects in-context demonstrations whose difficulty aligns with the query. This principle yields consistent improvements in few-shot reasoning, establishing sparsity not only as an analysis tool but also as a lever for improving model reasoning performance.

To systematically address these gaps, we organize our study around three core research questions:
\begin{itemize}
    \item \textbf{RQ1:} How does the geometry of the last hidden state evolve as reasoning tasks become increasingly difficult?
    \item \textbf{RQ2:} What mechanisms drive the emergence of sparsity when models face OOD challenges?
    \item \textbf{RQ3:} How can this sparsity signal be practically leveraged to enhance model reasoning capabilities? 
\end{itemize}

Guided by these questions, our primary findings are summarized as follows:

\paragraph{\ding{182} Sparsity increases with difficulty in a robust and controlled manner.} We uncover a consistent relationship where last hidden state activations become sparser as question difficulty increases, across multiple reasoning benchmarks including MATH500 \citep{lightman2024lets} and LongReason \citep{li2025longcontext}. To enable precise control over difficulty, we introduce \textit{\textbf{MMLU-Robust}}, a benchmark that induces graded difficulty via controlled distractor augmentation.

\paragraph{\ding{183} Learning dynamics connect density to familiarity.} We identify a fundamental link between representation density and data familiarity. Our analysis suggests that high activation density is a learned attribute: as models master training data, they consolidate representations. Importantly, this trend already emerges during \emph{pretraining}, without any task-specific fine-tuning, suggesting it is a general property of learned representations rather than a downstream artifact. Conversely, sparsity serves as the intrinsic default state for harder or less familiar inputs. We also provide a \textit{\textbf{theoretical derivation}} to explain the learning dynamic U-Shape curve.

\paragraph{\ding{184} Sparsity-guided curricula improve reasoning.} By validating sparsity as a reliable proxy for difficulty, we propose Sparsity-Guided Curriculum In-Context Learning (SG-ICL). Unlike standard strategies that select demonstrations based solely on semantic similarity, SG-ICL introduces a difficulty-aware selection mechanism. This approach yields substantial gains; for instance, SG-ICL achieves 76.60\% accuracy on MATH500 with  \exampleb{Qwen2.5-7B}, surpassing the strong Auto-CoT baseline (75.20\%).

\section{Preliminaries}

\paragraph{\ding{72} Experimental Setup} We conduct all experiments under the \emph{Implicit Reasoning setting}. Although Chain-of-Thought \citep[CoT;][]{wei2022chain} prompting also produces observable sparsity patterns, its effects are noticeably less stable. In contrast, the Implicit reasoning setup yields clearer and more consistent activation dynamics, allowing us to present more interpretable visualizations. The prompt we use can be found in the Appendix~\ref{apps:rp}. We focus primarily on the \emph{last hidden state} for the same reason. Although related sparsity trends are also present in intermediate layers, they are typically weaker and less consistent across tasks and models. By contrast, the final layer exhibits the clearest and most stable pattern, making it the most interpretable layer for our analysis. 

\paragraph{\ding{72} Last Hidden State}
We consider a transformer-based LM $f_{\theta}(\cdot)$ with $L$ stacked decoder layers, where each layer $\ell$ consists of a multi-head self-attention ($\mathrm{Attn}^{(\ell)}(\cdot)$) block and a feed-forward network ($\mathrm{MLP}^{(\ell)}(\cdot)$). For an input token sequence $x = (x_1, x_2, \ldots, x_T)$, we denote the hidden representation of token $x_t$ at layer $\ell$ as $h_t^{(\ell)} \in \mathbb{R}^d$, where $d$ is the hidden dimension. Each layer maintains a residual connection, and its output is computed as:
\begin{equation}
\small
h_t^{(\ell+1)} 
  = \overbrace{h_t^{(\ell)} 
  + \mathrm{Attn}^{(\ell)}\!\big(h_t^{(\ell)}\big)}^{h_\text{mid}}
  + \mathrm{MLP}^{(\ell)}\!\big( h_\text{mid} \big).
\end{equation}
We focus on the output of each layer in the residual stream, which stores contextualized information aggregated across all tokens in the input prompt. Formally, we use the representation at the final layer of the final input token:
\begin{equation}
h_T^{(L)} = f_{\theta}^{(L)}(x_{1:T})[-1],
\end{equation}
which we refer to as the \emph{last hidden state}. This representation serves as the latent state used for next-token prediction and downstream tasks.

\paragraph{\ding{72} Sparsity}
Prior work has shown that neural networks often exhibit sparsity in their representations or effective parameterization. For example, fine-tuning can occur in low-dimensional subspaces \citep{aghajanyanIntrinsicDimensionalityExplains2021}, motivating adaptation methods such as LoRA \citep{huLoRALowRankAdaptation2022}. In this work, we study the \textit{representational sparsity} of the last hidden state, defined as the extent to which only a small subset of units exhibits large activations. We measure sparsity using several metrics.

First, we compute a normalized $\ell_1$ magnitude of the last-hidden-state vector:
\begin{equation}
\textstyle
S_{L_1} = \frac{1}{d} \sum_{i=1}^{d} |h_i|,
\end{equation}
where $h_i$ denotes the $i$-th coordinate of $h_T^{(L)}$ and $d$ is the hidden dimension. Lower $S_{L_1}$ indicates higher sparsity.

Second, we measure the concentration of activation energy using the Top-$k$ Energy:
\begin{equation}
\textstyle
S_{\text{Top-}k} =
\sum_{i \in \text{Top-}k\%} h_i^2 \ / \ \sum_{i=1}^{d} h_i^2.
\end{equation}
This metric quantifies the proportion of total activation energy captured by the largest $k\%$ components. Higher values indicate stronger concentration and thus greater sparsity. Additional metrics, including Hoyer Sparsity and Effective Rank, are reported in Appendix~\ref{apps:metrics_and_models}.

\paragraph{\ding{72} Difficulty and OOD}
While the term difficulty or OOD is often loosely used in the context of LLMs~\citep{koh2021wilds}, we adopt a more controlled and quantitative view. Given the vast and often opaque nature of LLM pretraining data, strict OOD boundaries are difficult to define. Instead, we operationalize OOD as a spectrum of distributional shift, constructing samples designed to deviate significantly from normal training patterns along four interpretable dimensions: (1)~\textbf{Task difficulty:} reasoning instances that require deeper or multi-step inference, such as higher-difficulty levels in benchmark datasets (e.g., MATH-500 and DeepMATH-103K has several difficulty levels); (2)~\textbf{Distractor interference:} problems containing more irrelevant or misleading information, analogous to multiple-choice items with a larger number of distractors; (3)~\textbf{Knowledge conflict:} prompts that introduce mutually inconsistent facts or constraints, requiring the model to detect and resolve contradictions rather than rely on the prompt; and (4)~\textbf{Contextual length:} inputs with longer or more compositional contexts.


\section{RQ1: How does the geometry of the last hidden state evolve as reasoning tasks become increasingly difficult?}
\label{sec:rq1}

In this section, we present results addressing our first research question: \emph{How do last hidden states behave when models encounter hard questions?} We measure sparsity metrics across tasks with varying levels of difficulty along four dimensions: reasoning complexity (\S\ref{sub:reasoning_complexity}), answer choice expansion (\S\ref{sub:perturbation}), knowledge conflict (\S\ref{sub:knowledge_conflict}), and context length (\S\ref{sub:long_context_reasoning}). \textbf{Spoiler alert}: across all four settings, we observe the same consistent pattern: the harder the task, the sparser the last hidden state.

\subsection{Sparsity \textit{vs} Reasoning Complexity}
\label{sub:reasoning_complexity}
MATH-500 is a subset of \citep{hendrycks2021measuring} MATH dataset that contains explicit difficulty annotations. Our initial analysis tests whether representation sparsity correlates with problem difficulty within this graded benchmark. MATH-500 contains 500 mathematical problems spanning five difficulty levels, from basic arithmetic (Level 1) to advanced competition mathematics (Level 5). We evaluate three LLMs~\citep{touvron2023llama, yang2024qwen2}: \exampleb{Qwen2.5-7B}, \exampleb{Llama3.2-3B}, and \exampleb{Llama3.1-8B}, and compute sparsity metrics from their last hidden state. As shown in ~\autoref{fig3}, all models display a consistent monotonic trend: the \textit{L1 norm} decreases as difficulty increases; the \textit{Top-10\%} Energy Ratio rises, suggesting that the small subset increasingly dominates the activation.
\begin{figure*}[h]  
    \centering
    \begin{subfigure}[b]{0.48\textwidth}
        \centering
        \includegraphics[width=\textwidth]{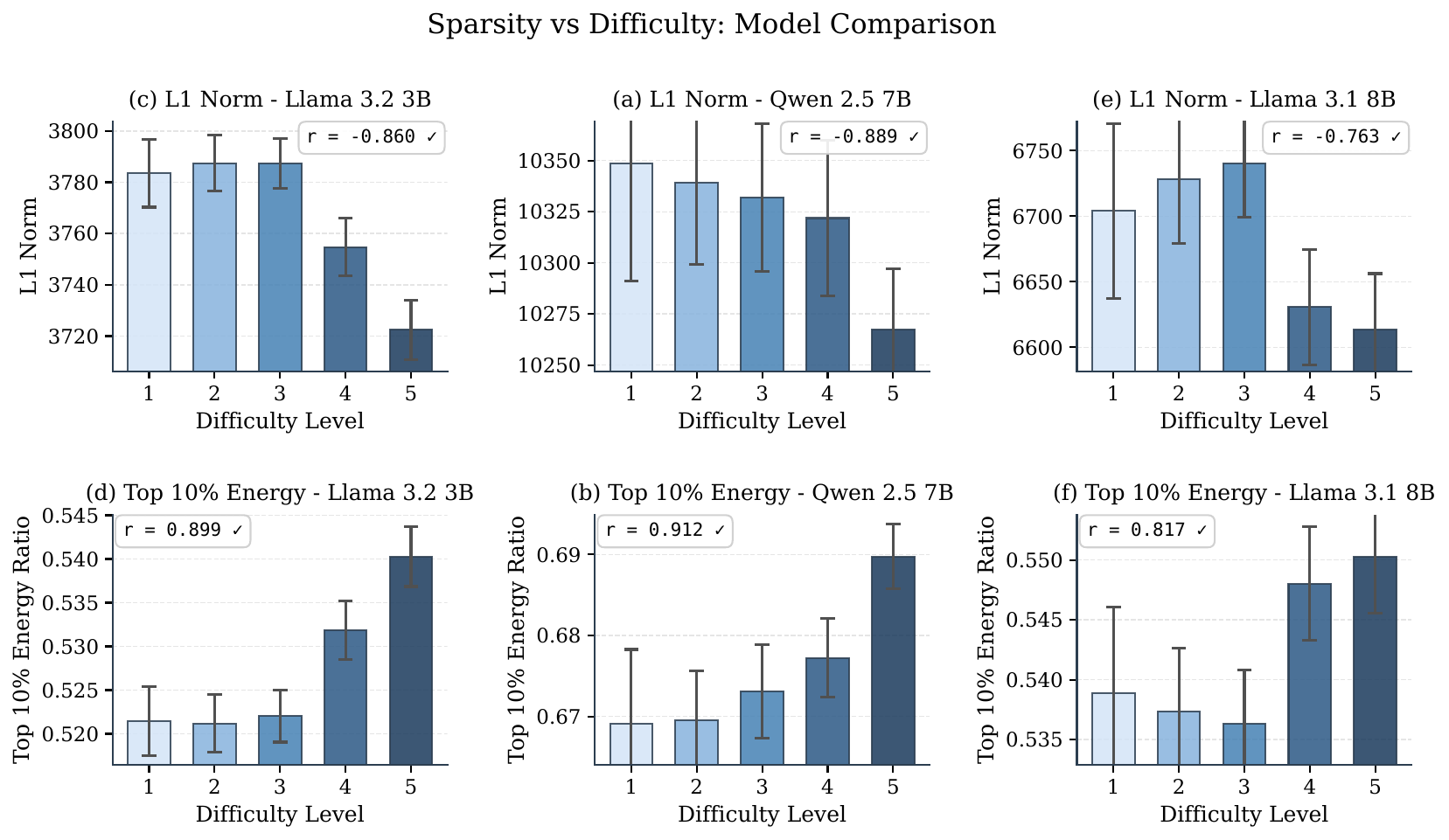}
        \caption{\textbf{Sparsity Differences under Different Difficulty Levels.} Last Hidden State sparsity metrics across five difficulty levels and three different models in MATH-500.}
        \label{fig3}
    \end{subfigure}
    \hfill 
    \begin{subfigure}[b]{0.49\textwidth}
        \centering
        \parbox{0.95\linewidth}{
        \centering
      \textcolor{darkgreen}
      {\textbf{\texttt{\scriptsize
      The  Strong correlation between LLMs' accuracy and Last Hidden State sparsity ($\ell_1$ Norm and Top10\% Energy).  
      }}}%
    }
        \includegraphics[width=\textwidth]{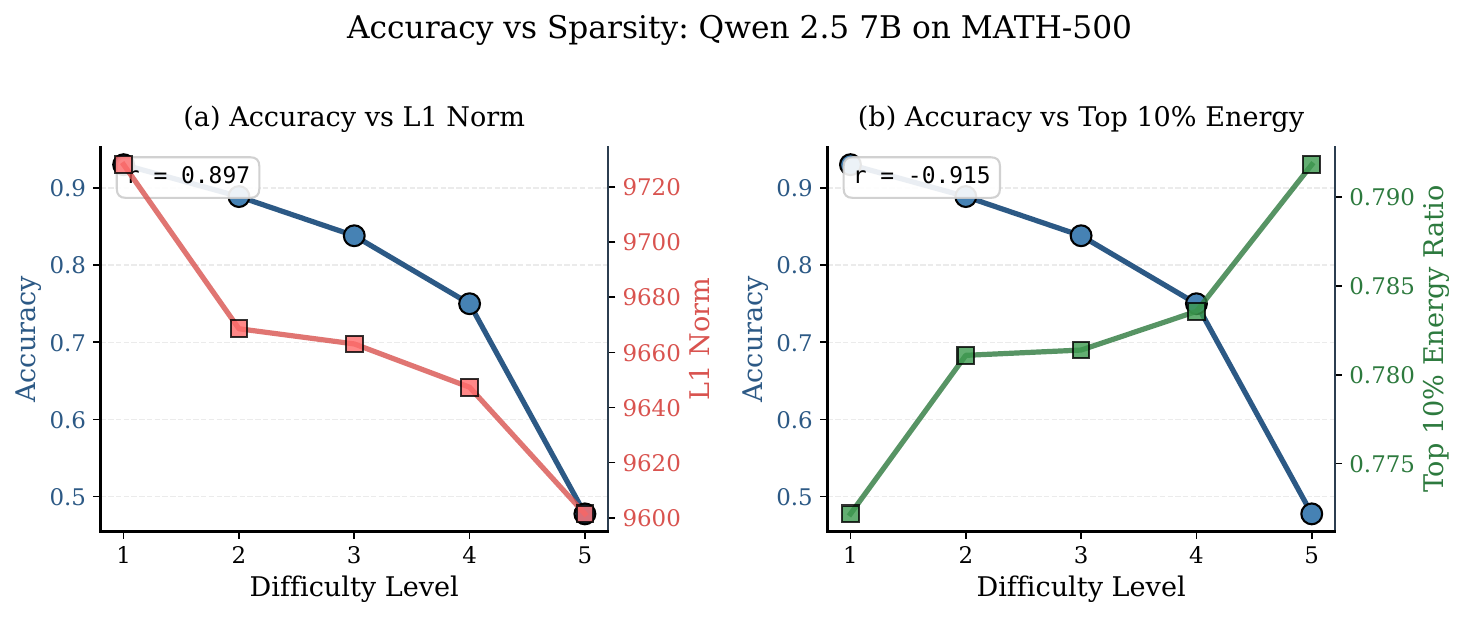}
        
        \vspace{0.3em}
        
         \parbox{0.95\linewidth}{
      \textcolor{darkgreen}{\textbf{\texttt{\scriptsize
   Versus sparsity metrics for Qwen2.5-7B on the MATH-500 dataset.  
      }}}%
    }

    \vspace{-0.5em}
        \caption{\textbf{Accuracy--Sparsity Correlation.} Sparsity strongly correlates with accuracy ($L_1$: $r$=0.897; Top-10\% Energy: $r$=-0.915).}
        \label{fig4}
    \end{subfigure}
    
    \caption{\textbf{Overview of Sparsity Analysis.} Together, the two subfigures paint a consistent picture: ~\autoref{fig3} (left) shows that difficulty increases sparsity, while ~\autoref{fig4} (right) shows that sparsity tracks accuracy degradation.}
    \label{fig:combined_figures}
    \vspace{-1em}
\end{figure*}

To further validate this connection between reasoning difficulty and sparsity, we examine how sparsity correlates with task performance. As illustrated in ~\autoref{fig4}, both sparsity metrics exhibit strong negative correlations with accuracy. When problems become harder, accuracy drops sharply, while the  $\ell_1$ Norm decreases and the Top-10\% Energy Ratio increases. Together, ~\autoref{fig3} and ~\autoref{fig4} paint a consistent picture. ~\autoref{fig3} demonstrates that higher reasoning difficulty systematically induces stronger sparsity, while ~\autoref{fig4} shows that increased sparsity is tightly coupled with accuracy degradation. Taken together, these findings establish a coherent causal chain: as tasks become harder, models' last hidden state exhibits sharper activation compression.

\subsection{Sparsity \textit{vs} Answer Choice Expansion}
\label{sub:perturbation}

\begin{figure}[h]
    \centering 
    \includegraphics[width=\linewidth]{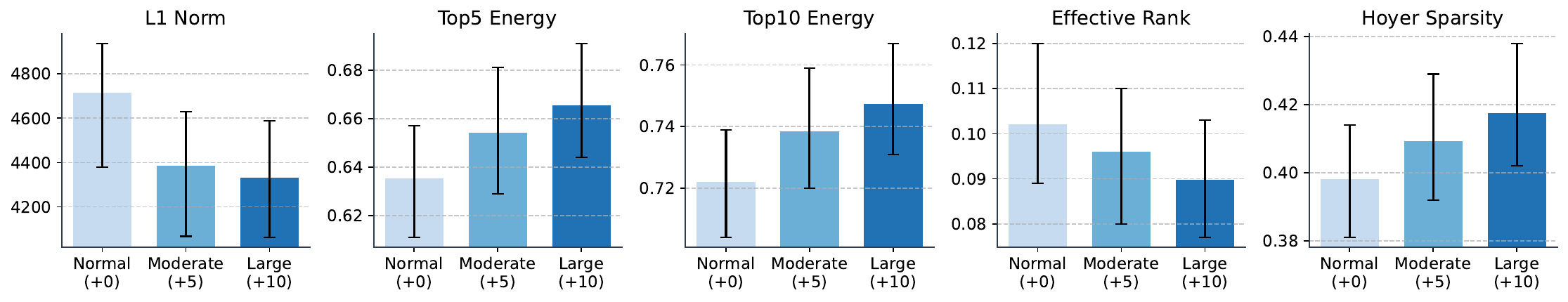}
    \caption{\textbf{Sparsity Metrics under Answer Choice Expansion.} Bar plots show mean sparsity across 14 disciplines for five metrics under Normal (+0), Moderate Expansion (+5), and Large Expansion (+10) on \exampleb{Qwen2.5-3B}. Error bars indicate the minimum and maximum across disciplines. Increasing task difficulty leads to higher sparsity.}
    \label{fig5}
\end{figure}

In this setting, we add plausible but incorrect distractor options to multiple-choice questions presented to the LLM. This makes the task strictly harder \textit{ceteris paribus}, as the question content remains unchanged. We obtain different difficulty levels by adding different numbers of distractors. The setup remains simple and controlled, allowing us to study difficulty without confounding textual variation and to isolate OOD behavior.

Particularly, we use MMLU-Pro \citep{wang2024mmlu}, a multi-domain multiple-choice benchmark covering 14 academic disciplines. The original dataset provides ten answer options, which we use as the baseline \textbf{Normal} (+0) configuration. We introduce two additional difficulty levels via answer choice expansion: \textbf{Moderate Expansion} (+5), which adds five high-quality, plausible distractors (15 options total), and \textbf{Large Expansion} (+10), which adds ten distractors (20 options total). This manipulation increases task difficulty by expanding the solution space while leaving the question content unchanged, providing a controlled way to probe OOD behavior. Using this setup, we construct \ding{80} \textit{\textbf{MMLU-Robust}}, a three-tier extension of MMLU-Pro with Normal (+0), Moderate (+5), and Large (+10) configurations for robustness evaluation. See Appendix~\ref{apps:mmlu} for details.

\autoref{fig5} presents the experiment results, confirming increased sparsity of final-layer hidden states when the tasks become more difficult. We analyzed five key sparsity metrics across all 14 academic areas in MMLU-Pro:  $\ell_1$ Norm, Top-5\% Energy, Top-10\% Energy, Effective Rank, and Hoyer Sparsity. Across virtually all academic disciplines and for all five sparsity metrics, we observe a \textit{consistent, monotonic trend}: as the question difficulty increases by adding distractors (moving from $\text{Normal (+0)} \rightarrow \text{Moderate (+5)} \rightarrow \text{Large (+10)}$), the last hidden state exhibits higher sparsity. A detailed per-discipline breakdown, along with results for additional models, is provided in Appendix~\ref{app:mmlu_results}, confirming the same trends.

\subsection{Sparsity \textit{vs} Knowledge Conflict}
\label{sub:knowledge_conflict}

\begin{figure}[h]
    \centering
    \scriptsize
    \setlength{\tabcolsep}{0pt}
    \scalebox{0.7}{%
    \begin{tabular}{ccc}
        \includegraphics[width=0.33\linewidth]{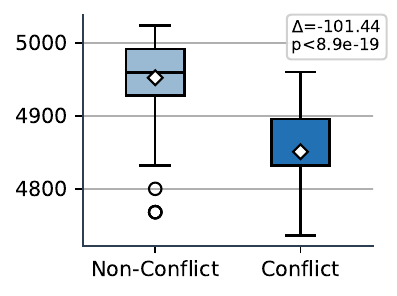} & 
        \includegraphics[width=0.33\linewidth]{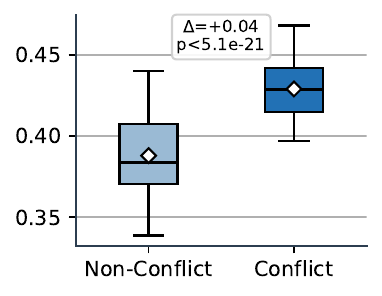} & 
        \includegraphics[width=0.33\linewidth]{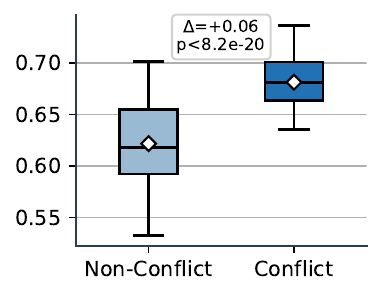} \\
        (a) $\ell_1$ Norm ($\downarrow$) &
        (b) Hoyer Sparsity ($\uparrow$) &
        (c) Top 5\% Energy ($\uparrow$) \\
        \includegraphics[width=0.33\linewidth]{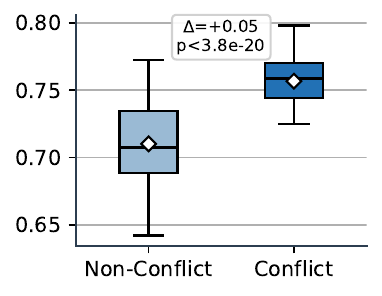} & 
        \includegraphics[width=0.33\linewidth]{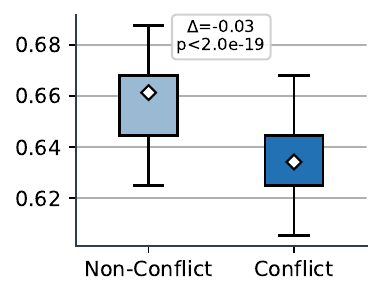} & 
        \includegraphics[width=0.33\linewidth]{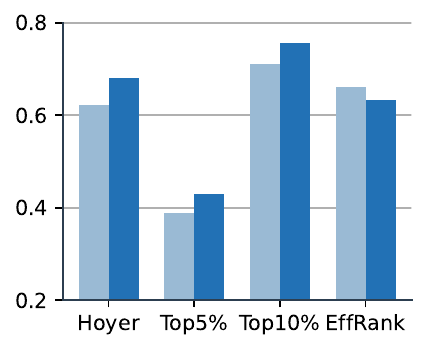} \\
        (d) Top 10\% Energy ($\uparrow$)&
        (e) Effective Rank ($\downarrow$)&
        (f) Summary \\
    \end{tabular}
     }
    \caption{\textbf{Sparsity Differences under Knowledge Conflict.} We measure the last hidden state sparsity for two conditions (\textit{non-conflict}~(\cccbox{blue_nc}) and \textit{conflict}~(\cccbox{blue_c})) across five metrics for \exampleb{Qwen2.5-3B}. All results are statistically significant. Arrows denote how each metric relates to sparsity~($\uparrow$: higher is sparser; $\downarrow$: lower is sparser). Again, the harder {\it conflict}~(\cccbox{blue_c}) condition is consistently sparser than the {\it non-conflict}~(\cccbox{blue_nc}) condition across all metrics.}
    \label{fig6}
\end{figure}

Moreover, we use \citep{wang2024knowledge} knowledge conflict dataset to further validate our hypothesis, which labels samples as \textit{conflict} or \textit{non-conflict} by pairing each question with either a truthful or a counterfactual context. In conflict scenarios, the provided context contradicts the model's parametric knowledge. We think Conflict Knowledge forcing the model to rethink the knowledge sources is a kind of out-of-distribution task. As shown in ~\autoref{fig6}, conflict cases consistently exhibit higher sparsity across all metrics: Specifically, the Hoyer Sparsity (b), Top 5\% Energy (c), and Top 10\% Energy (d) metrics, where higher values indicate greater sparsity, all show a statistically significant increase for the \textit{Conflict} group compared to the \textit{Non-Conflict} group ($\Delta \approx +0.0378, +0.0114$, and $+0.0102$ respectively).  These differences are \textbf{\textit{statistically significant}}, as confirmed by a paired \citep{studentProbableErrorMean1908} t-test, with all $p$-values $< 2.0 \times 10^{-29}$.  The Summary bar chart (f) synthesizes these findings, clearly illustrating that the latent last hidden representations for conflict instances are generally more compressed and concentrated on fewer dimensions. We provide additional analyses across more LLM families, more model sizes, and more knowledge conflict settings (e.g., the related \emph{distraction} phenomenon; \citep{Niu2025LlamaSL}) in Appendix~\ref{apps:kc}.

\subsection{Sparsity \textit{vs} Long Context Reasoning}
\label{sub:long_context_reasoning}

For the long-context reasoning setting, as shown in ~\autoref{fig2}, we use LongReason \cite{li2025longcontext}. This long context dataset offers controllable context lengths and incorporates diverse, realistic reasoning tasks. By analyzing the Last Hidden States across these controlled length variations, we discovered that the sparsity of the hidden states decreases as the input text length increases, as \autoref{fig2}. This finding further supports our hypothesis that increasing task difficulty consistently drives the model towards a state of sharper activation compression in the last hidden state. In fact, the sparsity phenomenon induced by OOD shifts is primarily a ~\textbf{terminal behavior}. While intermediate representations remain relatively stable, the decisive shift in activation density, which separates OOD from ID, exclusively emerges in the final layer. We provide additional analyses in Appendix~\ref{apps:long}.
\begin{figure*}[!t]  
    \centering
    \includegraphics[width=\textwidth]{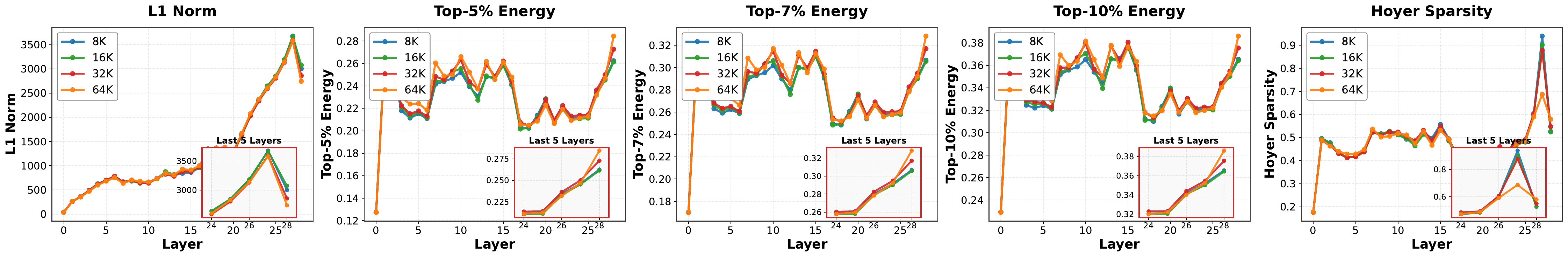}
    \parbox{0.95\linewidth}{
    \centering
      \selectfont
      \textcolor{blue2}{\textbf{\texttt{\scriptsize
      CONTEXT: Sparsity metrics across all layers for Qwen2.5-1.5B under varying context lengths (8K, 16K, 32K, 64K).  
      }}}%
    }

    \caption{\textbf{Layer-wise Sparsity across Context Lengths.} While intermediate layers show minimal variation across contexts, the final layers exhibit sharp divergence: longer contexts consistently produce sparser representations. This experiment was done at LongReasonQA~\citep{li2025longcontext}, which can control the background context length.}
    \label{fig2}
\end{figure*}

\section{RQ2: What mechanisms drive the emergence of sparsity when models face OOD challenges?}
\label{sec:rq2}

Now, we attempt to answer how the sparsity--OOD connection develops during pre-training by training a toy-sized LM from scratch on synthetic data. In this section, we first describe the construction of the synthetic pre-training dataset (\S\ref{sub:controlled_env}), then analyze the pre-training results (\S\ref{sub:pretraining_result}), and finally connect these dynamics to the emergence of the sparsity--OOD relationship through a combined empirical and theoretical account (\S\ref{sub:learning_dynamics}).

\subsection{Controlled Environment}
\label{sub:controlled_env}

We create a synthetic knowledge graph dataset that allows precise control over reasoning difficulty and OOD level. The setting is inspired by \citep{wang2024grokking, wang2025do, yin2025are, ye2025how}, but we focus on sparsity–difficulty patterns in this controlled symbolic environment rather than the emergence of implicit reasoning.


\ding{110} \textbf{\textit{We validate this mechanism by analyzing representations during pretraining.}} Even without task-specific fine-tuning, models pretrained solely on next-token prediction exhibit the same sparsity-difficulty pattern, confirming that this is a fundamental property of the language model.

Specifically, instead of relying on real-world knowledge graphs (where structural and semantic factors are entangled), we generate an artificial graph governed by a set of logical rules. Each rule defines how relations can be composed, e.g.,
\begin{equation}
    r_3(x, z) \Leftarrow r_1(x, y) \wedge r_2(y, z),
\end{equation}
ensuring that the resulting graph encodes multi-hop reasoning dependencies. Based on these rules, we construct the dataset in a three-stage process:

\paragraph{Preliminaries.}
We define a Knowledge Graph (KG) as $\mathcal{G} = (\mathcal{E}, \mathcal{R}, \mathcal{T})$, where $\mathcal{E}$ and $\mathcal{R}$ denote the sets of entities and relations, respectively. $\mathcal{T} = \{(h,r,t) \mid h,t \in \mathcal{E}, r \in \mathcal{R}\}$ represents the set of facts (triples). The problem difficulty is controlled by logical rules of varying lengths $L$.

\paragraph{\ding{182} Graph Construction.} 
We construct the controlled knowledge graph $\mathcal{G}$ through a systematic generation pipeline. We first initialize the vocabulary of entities $\mathcal{E}=\{e_1, \ldots, e_{N}\}$ and relations $\mathcal{R} = \{P_1, \ldots, P_{N}\}$, along with a set of $K$ randomly sampled acyclic logic rules (lengths $L \in [L_{\min}, L_{\max}]$). The graph is seeded with random \textit{atomic triples} (explicit facts). Subsequently, we exhaustively apply the logic rules to these atomic triples to infer all possible \textit{deductible triples} (implicit facts). This process yields a dual-structure graph containing both direct observations and edges logically entailed by reasoning chains.

\paragraph{\ding{183} Sequence Serialization.} 
To adapt the structured graph data for a language model, we linearize each triple $(h, r, t) \in \mathcal{T}$ into a token sequence (e.g., ``$e_1$ $P_5$ $e_2$''). The model is trained using the standard auto-regressive objective to predict the tail entity $t$ given the head $h$ and relation $r$.

\begin{table*}[t]
    \centering
    \resizebox{0.98\linewidth}{!}{ 
        \renewcommand{\arraystretch}{1.2} 
        \begin{tabular}{@{}l c c ccc ccc ccc ccc ccc@{}} 
        \toprule[1.5pt] 
        \multirow{2.5}{*}{Model} & 
        \multirow{2.5}{*}{Params} & 
        \multirow{2.5}{*}{Steps} & 
        \multicolumn{3}{c}{Accuracy (\%, $\uparrow$)} & 
        \multicolumn{3}{c}{ $\ell_1$ Norm ($\downarrow$)} & 
        \multicolumn{3}{c}{Top-5 Energy (\%, $\uparrow$)} & 
        \multicolumn{3}{c}{Top-10 Energy (\%, $\uparrow$)} & 
        \multicolumn{3}{c}{Eff. Rank ($\downarrow$)} \\
        
        \cmidrule(lr){4-6} \cmidrule(lr){7-9} \cmidrule(lr){10-12} \cmidrule(lr){13-15} \cmidrule(lr){16-18}
        
         & & &  $\mathcal{D}_{\text{mem}}$ & $\mathcal{D}_{\text{ID}}$ &$\mathcal{D}_{\text{OOD}}$&$\mathcal{D}_{\text{mem}}$ & $\mathcal{D}_{\text{ID}}$ &$\mathcal{D}_{\text{OOD}}$&$\mathcal{D}_{\text{mem}}$ & $\mathcal{D}_{\text{ID}}$ &$\mathcal{D}_{\text{OOD}}$&$\mathcal{D}_{\text{mem}}$ & $\mathcal{D}_{\text{ID}}$ &$\mathcal{D}_{\text{OOD}}$&$\mathcal{D}_{\text{mem}}$ & $\mathcal{D}_{\text{ID}}$ &$\mathcal{D}_{\text{OOD}}$  \\
        \midrule[1pt] 

        Trans-8-8 & 0.02B & 2500 & 
        64.00 & 42.00 & 31.00 & 413.49 & 413.80 & 412.79 & 26.11 & 26.03 & 26.26 & 42.12 & 42.00 & 42.34 & 0.502 & 0.504 & 0.500 \\
        \midrule[0.5pt]

        Trans-16-8 & 0.04B & 2000 & 
        71.00 & 50.00 & 24.00 & 413.38 & 413.30 & 413.04 & 25.96 & 25.94 & 26.06 & 41.98 & 41.98 & 42.15 & 0.503 & 0.503 & 0.501 \\
        \midrule[0.5pt]

        \multirow{2}{*}{Trans-16-16} & \multirow{2}{*}{0.17B} 
        & 2000 & 85.00 & 56.00 & 44.00 & 817.60 & 817.80 & 816.93 & 27.70 & 27.70 & 27.78 & 43.70 & 43.66 & 43.76 & 0.483 & 0.484 & 0.482 \\
        & & 2500 & 81.00 & 58.00 & 37.00 & 817.94 & 818.23 & 817.20 & 27.64 & 27.58 & 27.76 & 43.56 & 43.52 & 43.69 & 0.484 & 0.485 & 0.483 \\
        \midrule[0.5pt]

        \multirow{3}{*}{Trans-32-32} & \multirow{3}{*}{1.34B} 
        & 1500 & \cellcolor{bg_c}78.00 & \cellcolor{bg_m}57.00 & \cellcolor{bg_h}37.00 & \cellcolor{bg_c}1608.10 & \cellcolor{bg_m}1604.80 & \cellcolor{bg_h}1601.68 & \cellcolor{bg_c}30.28 & \cellcolor{bg_m}30.54 & \cellcolor{bg_h}30.84 & \cellcolor{bg_c}46.44 & \cellcolor{bg_m}46.72 & \cellcolor{bg_h}47.04 & \cellcolor{bg_c}0.454 & \cellcolor{bg_m}0.451 & \cellcolor{bg_h}0.448 \\
        & & 2000 & \cellcolor{bg_c}83.00 & \cellcolor{bg_m}63.00 & \cellcolor{bg_h}34.00 & \cellcolor{bg_c}1611.36 & \cellcolor{bg_m}1608.21 & \cellcolor{bg_h}1605.29 & \cellcolor{bg_c}30.00 & \cellcolor{bg_m}30.23 & \cellcolor{bg_h}30.55 & \cellcolor{bg_c}46.11 & \cellcolor{bg_m}46.39 & \cellcolor{bg_h}46.70 & \cellcolor{bg_c}0.458 & \cellcolor{bg_m}0.455 & \cellcolor{bg_h}0.452 \\
        & & 2500 & \cellcolor{bg_c}81.00 & \cellcolor{bg_m}63.00 & \cellcolor{bg_h}38.00 & \cellcolor{bg_c}1616.11 & \cellcolor{bg_m}1613.95 & \cellcolor{bg_h}1610.50 & \cellcolor{bg_c}29.60 & \cellcolor{bg_m}29.75 & \cellcolor{bg_h}30.11 & \cellcolor{bg_c}45.67 & \cellcolor{bg_m}45.86 & \cellcolor{bg_h}46.22 & \cellcolor{bg_c}0.463 & \cellcolor{bg_m}0.461 & \cellcolor{bg_h}0.457 \\

        \bottomrule[1.5pt] 
        \end{tabular}
    }
    
    \caption{\textbf{Emergence of the Sparsity Phenomenon.} 
    We report metrics across 3 levels. The \textbf{gradient blue shading}, observed uniquely in \textbf{Trans-32-32}, illustrates that the ``harder-is-sparser'' mechanism is a learned behavior.}
    \label{tab:maintable}
\end{table*}

\paragraph{\ding{184} Controlled Data Splitting.} 
To distinguish between rote memorization and logical generalization, we employ a rigorous data splitting strategy. Specifically, we partition the triples $\mathcal{T}$ into a training set $\mathcal{D}_{\text{train}}$ and three distinct evaluation subsets: $\mathcal{D}_{\text{mem}}$ for assessing rote memorization, $\mathcal{D}_{\text{ID}}$ for testing in-distribution rule composition, and $\mathcal{D}_{\text{hard}}$ for probing out-of-distribution robustness.

Appendix~\ref{apps:pre} provides additional details on the synthetic data construction and the experimental setup used for pre-training the toy-sized LM.

\subsection{Pretraining Result}
\label{sub:pretraining_result}

We use \textbf{Trans-$L$-$H$} to denote a Transformer with $L$ layers and $H$ attention heads (e.g., Trans-8-8 has 8 layers and 8 heads). We report Accuracy, $\ell_1$ Norm, Top-5/10 Energy, and Effective Rank across three difficulty levels:
Easy/Medium/Hard, which correspond to $\mathcal{D}_{\text{mem}}$, $\mathcal{D}_{\text{ID}}$, $\mathcal{D}_{\text{hard}}$ respectively. The findings highlight a clear correlation between model scale, training steps, and the emergence of sparsity. These results demonstrate that the relationship between task difficulty and sparsity is a learned phenomenon: it becomes consistent only once the model exits underfitting and reaches sufficient training convergence and capacity. The absence of a clear trend in smaller models may be attributed to the high data complexity relative to their limited capacity. We provide a detailed discussion on the graph data complexity in Appendix~\ref{apps:pre}.

We make a \textit{\textbf{fundamental observation}} about inference-time representation geometry: the previously reported “harder-is-sparser” effect is \emph{not only confirmed to instruction-tuned modern LLMs}, but also arises in \emph{standard pretrained Transformer models}, where it emerges as a pretraining-level property. In particular, the ``harder-is-sparser'' pattern emerges only once the model reaches sufficient capacity and is adequately converged (\autoref{tab:maintable}).

\subsection{Learning Dynamics: Sparsity as an Adaptive Representation Mechanism}
\label{sub:learning_dynamics}

Nevertheless, \autoref{tab:maintable} reveals a counterintuitive phenomenon regarding the temporal evolution of sparsity. Focusing on the \textbf{Trans-32-32} model, we observe a monotonic increase in the $\ell_1$ Norm across training steps ($1608.10 \to 1611.36 \to 1616.11$ on the Easy set), while accuracy stabilizes. This trend indicates that as training progresses within a healthy regime (avoiding both underfitting and overfitting), the model gradually becomes \textit{less sparse} for the same question.

\begin{figure}[t]  
    \centering
    \includegraphics[width=0.52\linewidth]{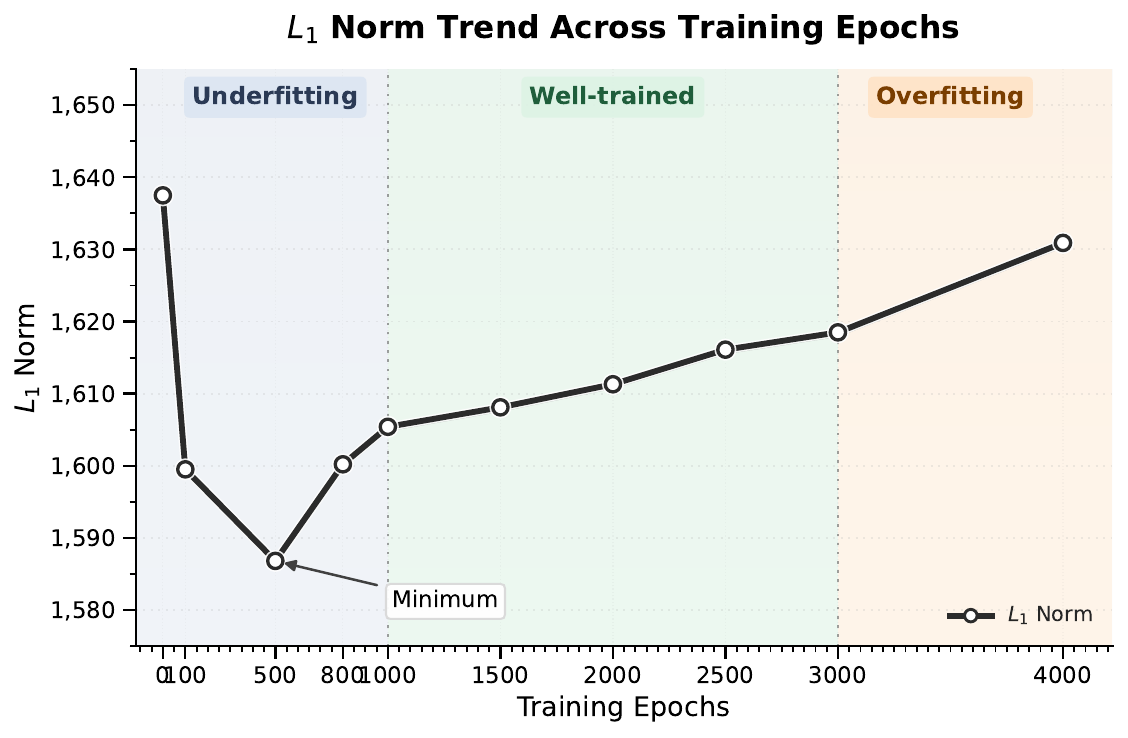}
    \vspace{-1em}
    \caption{\textbf{Learning dynamics via $\ell_1$ Norm.} The trajectory reveals a two-stage mechanism: initial \textit{feature pruning} (0--500 epochs) followed by \textit{feature consolidation} (green region), where the norm rises as the model learns a more distributed and robust representation.}
    \label{fig8}
\end{figure}
As illustrated in \autoref{fig8}, the training process exhibits a distinct two-phase dynamic. 

\paragraph{Phase I: Feature Selection (0–500 Epochs).} Initially, the $\ell_1$ Norm drops sharply, indicating a rapid increase in sparsity. In this phase, the model learns to discard noise, pruning irrelevant neurons to identify a core set of discriminative features.

\paragraph{Phase II: Feature Consolidation (Green Region).} 
After this initial pruning, the $\ell_1$ Norm begins a steady ascent ($1586.64 \to 1618.52$). Crucially, this rise does not imply a return to noise; rather, it signals \textit{representation maturation}. 
Instead of relying on a few sharp, unstable activations, the model learns to distribute information more evenly across its active neurons. 
This ``smoothing'' process creates a more robust representation manifold, allowing the model to generalize better by avoiding over-reliance on single, brittle features.

We give a \textbf{theoretical justification} for the emergence of a U-shaped learning dynamic in $\ell_{1}$ Norm using a simplified cross-entropy + weight-decay model with kernelized representation dynamics in Section~\ref{app:u_shape_theory}. We explain this as a shift from early weight decay driven feature contraction (weak head lowers normalized $\ell_{1}$) to later gradient-aligned amplification on persistently hard examples.

Therefore, high activation density emerges as a \textit{learned privilege} of data familiarity. Since the language model consolidates representations specifically for patterns it has mastered, it maintains high activation density (non-sparsity) in the last hidden state when facing in-distribution samples.
\begin{tcolorbox}[colback=gray!10, colframe=black!70, arc=2mm, boxrule=0.8pt]
So, difficult or OOD inputs effectively represent the \textbf{``unseen.''} Lacking the familiarity engage the distributed manifolds, the model fails to sustain high-density activations and reverts to a sparse state. This mechanism fundamentally explains our finding: \textit{\textbf{Farther the Shift, Sparser the Representation}}.   
\end{tcolorbox}

\subsection{A finite-horizon U-shape certificate for the normalized $\ell_1$ statistic}
\label{app:u_shape_theory}

In ~\autoref{fig8}, we observe a robust U-shape behavior of the representation sparsity during learning: the normalized $\ell_1$ magnitude of the last hidden state decreases early
(\emph{sparsification}), and later increases over another period (\emph{densification}). This appendix provides a finite-horizon theoretical justification on a simplified learning task. Informally, the mechanism is: (i) early training is dominated by weight decay acting on a weak/near-zero head, which contracts feature magnitudes and thus decreases the normalized $\ell_1$ statistic; (ii) if at some later time the features are already sparse enough but a nontrivial subset of examples remains persistently hard, then the loss gradient aligns with the active coordinates and dominates the contraction over a finite window, forcing the statistic to increase.

\subsubsection{Simplified setting: data, model, objectives and gradients}
\label{app:u_shape_theory:setting}
We fix a training set $\{(x_i,y_i)\}_{i=1}^n$ with labels $y_i\in\{1,\dots,C\}$.
Let $\theta(t)$ denote the parameters that generate the last-layer representation, and define
\[
h_i(t)\;:=\;h(x_i;\theta(t))\in\mathbb{R}^d,\qquad i=1,\dots,n,
\]
(e.g., the last hidden state $h_T^{(L)}$ in the main text, evaluated on the final token of $x_i$).
Let $W(t)\in\mathbb{R}^{C\times d}$ denote the linear readout with rows $W_c(t)\in\mathbb{R}^d$.
We express the output logits $z_i(t)\in\mathbb{R}^C$ by
\[
z_i(t) \;:=\; W(t)\,h_i(t),\qquad z_{i,c}(t)=\langle W_c(t),\,h_i(t)\rangle.
\]
Let $p_i(t)\in\Delta^{C-1}$ be the softmax probabilities:
\[
(p_i(t))_c := \frac{e^{z_{i,c}(t)}}{\sum_{c'=1}^C e^{z_{i,c'}(t)}},
\]
and let $e_{y_i}\in\mathbb{R}^C$ be the one-hot vector for label $y_i$.

\paragraph{Training objective (cross-entropy with parameter weight decay).} Define the per-example cross-entropy loss
\[
\ell_i(t)\;:=\;\ell_{\mathrm{ce}}\!\bigl(z_i(t),y_i\bigr)\;=\;-\log\bigl((p_i(t))_{y_i}\bigr),
\]
and the dataset-averaged cross-entropy
\[
\mathcal{L}_{\mathrm{ce}}(\theta,W)\;:=\;\frac{1}{n}\sum_{i=1}^n \ell_i(t).
\]
We regularize the \emph{parameters} that generate the representation (and optionally the readout) via $\ell_2$ weight decay:
\[
\mathcal{L}(\theta,W)
\;:=\;
\mathcal{L}_{\mathrm{ce}}(\theta,W)
+\frac{\lambda_h}{2}\|\theta\|_2^2
+\frac{\lambda_W}{2}\|W\|_F^2,
\]
where $\lambda_h\ge 0$ is the decay coefficient for $\theta$ and $\lambda_W\ge 0$ is optional readout decay.
In the analysis below, the effect of weight decay on $\theta$ is modeled at the feature level
as a contraction term $-\lambda_h h_i(t)$ (plus a residual), captured by the induced dynamics
in \eqref{eq:appC_induced_h_dynamics}.

\paragraph{Per-example representation gradient.}
Define the gradient of $\ell_i(t)$ with respect to the representation:
\[
g_i(t) \;:=\; \nabla_{h_i}\ell_i(t)
\;=\; W(t)^\top\bigl(p_i(t)-e_{y_i}\bigr)\in\mathbb{R}^d.
\]
Since $\|p_i(t)-e_{y_i}\|_2\le \sqrt{2}$, we have
\begin{equation}
\label{eq:appC_g_bound_basic}
\|g_i(t)\|_2 \le \sqrt{2}\,\|W(t)\|_{2\to 2},
\end{equation}
where $\|\cdot\|_{2\to 2}$ denotes the operator (spectral) norm.

\subsubsection{Induced (kernelized) representation dynamics with weight decay}
\label{app:u_shape_theory:induced_dyn}
Let $K(t)\in\mathbb{R}^{n\times n}$ be a symmetric matrix (a ``kernel'') with entries $K_{ik}(t)$.
We assume the induced dynamics of $h_i(t)=h(x_i;\theta(t))$ has the form
\begin{equation}
\label{eq:appC_induced_h_dynamics}
\dot h_i(t)
\;=\;
-\frac{1}{n}\sum_{k=1}^n K_{ik}(t)\,g_k(t)
\;-\;\lambda_h\,h_i(t)
\;+\;r_i(t),
\end{equation}
where $\lambda_h\ge 0$ is the (effective) feature weight decay coefficient and $r_i(t)$ is a residual term
(e.g., due to non-frozen lower layers), assumed bounded on the horizons of interest.

\begin{remark}[A simple setting where~\eqref{eq:appC_induced_h_dynamics} holds exactly]
Consider a frozen backbone feature map $\phi:\mathcal{X}\to\mathbb{R}^m$ and a trainable linear
representation head $V(t)\in\mathbb{R}^{d\times m}$ such that
\begin{equation*}
    \begin{aligned}
        &\theta(t)\equiv V(t),\\
        &h_i(t)=h(x_i;\theta(t))=V(t)\phi(x_i)\eqqcolon V(t)\phi_i.
    \end{aligned}
\end{equation*}
Let the logits be $z_i(t)=W(t)h_i(t)$ and define $\ell_i(t)=\ell_{\mathrm{ce}}(z_i(t),y_i)$ as above.
If we train $V$ by gradient flow on $\mathcal{L}_{\mathrm{ce}}(V,W)$ with $\ell_2$ weight decay
$\frac{\lambda_h}{2}\|V\|_F^2$ (while $W$ may also be trained), then a direct chain-rule computation gives
\[\dot V(t)=-\frac1n\sum_{k=1}^n g_k(t)\,\phi_k^\top-\lambda_h V(t),\] which leads to \[\dot h_i(t)=\dot V(t)\phi_i
=-\frac1n\sum_{k=1}^n \langle \phi_i,\phi_k\rangle\, g_k(t)-\lambda_h h_i(t).\]
Therefore~\eqref{eq:appC_induced_h_dynamics} holds with the (time-independent) kernel
$K_{ik}=\langle \phi_i,\phi_k\rangle$ and residual $r_i(t)\equiv 0$.
\end{remark}

\subsubsection{Normalized $\ell_1$ sparsity and a smooth proxy}
\label{app:u_shape_theory:sl1}
Recall the normalized $\ell_1$ statistic (main text):
\[
S_{L_1}(h)\;=\;\frac{1}{d}\sum_{j=1}^{d}|h_j|.
\]
To obtain a differentiable drift identity, we introduce a smoothed proxy.

\paragraph{Smoothed primitives.}
Fix $\varepsilon>0$ and define, for $u\in\mathbb{R}$,
\[
\rho_\varepsilon(u) := \sqrt{u^2+\varepsilon^2},
\qquad
\psi_\varepsilon(u) := \frac{u}{\sqrt{u^2+\varepsilon^2}}.
\]
For $h\in\mathbb{R}^d$, define $\psi_\varepsilon(h)\in\mathbb{R}^d$ entrywise:
$(\psi_\varepsilon(h))_j := \psi_\varepsilon(h_j)$.

\paragraph{Dataset-averaged smoothed $\ell_1$ statistic.}
Define
\begin{equation}
\label{eq:appC_Neps_def}
N_\varepsilon(t) := \frac{1}{n}\sum_{i=1}^n \frac{1}{d}\sum_{j=1}^d \rho_\varepsilon\big(h_{i,j}(t)\big).
\end{equation}
Note that $N_\varepsilon(t)\to \frac{1}{n}\sum_{i=1}^n S_{L_1}(h_i(t))$ as $\varepsilon\to 0$.

\subsubsection{Drift identity for $N_\varepsilon$}
\label{app:u_shape_theory:drift}
Define the drift components
\begin{align}
\label{eq:appC_DCR_def}
D_\varepsilon(t)
&:= \frac{1}{n^2 d}\sum_{i=1}^n \sum_{k=1}^n K_{ik}(t)\,\big\langle \psi_\varepsilon(h_i(t)),\,g_k(t)\big\rangle,\\
C_\varepsilon(t)
&:= \frac{\lambda_h}{n d}\sum_{i=1}^n \sum_{j=1}^d
\frac{h_{i,j}(t)^2}{\sqrt{h_{i,j}(t)^2+\varepsilon^2}},\nonumber\\
R_\varepsilon(t)
&:= \frac{1}{n d}\sum_{i=1}^n \big\langle \psi_\varepsilon(h_i(t)),\,r_i(t)\big\rangle.\nonumber
\end{align}

\begin{lemma}[Exact drift identity]
\label{lem:appC_drift_identity}
For all $t$ for which the trajectories are differentiable,
\[
\dot N_\varepsilon(t) = -D_\varepsilon(t) - C_\varepsilon(t) + R_\varepsilon(t).
\]
\end{lemma}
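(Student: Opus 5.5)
The identity follows from differentiating \eqref{eq:appC_Neps_def} term by term and substituting the induced dynamics \eqref{eq:appC_induced_h_dynamics}. Since $\rho_\varepsilon$ is $C^\infty$ on $\mathbb{R}$ (because $\varepsilon>0$), with $\rho_\varepsilon'(u)=u/\sqrt{u^2+\varepsilon^2}=\psi_\varepsilon(u)$, the chain rule gives, at every $t$ where $h_i(\cdot)$ is differentiable,
\[
\dot N_\varepsilon(t)=\frac{1}{nd}\sum_{i=1}^n\sum_{j=1}^d \psi_\varepsilon\big(h_{i,j}(t)\big)\,\dot h_{i,j}(t)
=\frac{1}{nd}\sum_{i=1}^n\big\langle \psi_\varepsilon(h_i(t)),\dot h_i(t)\big\rangle .
\]
Because the sum is finite, no interchange-of-limits issue arises.

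Next, insert \eqref{eq:appC_induced_h_dynamics} into the inner product and use linearity. This splits the expression into three pieces.

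\textbf{Kernel term.} It equals $-\frac{1}{nd}\sum_i\langle\psi_\varepsilon(h_i),\frac1n\sum_k K_{ik}g_k\rangle$. Pulling the scalars $K_{ik}(t)$ out of the inner product gives $-\frac{1}{n^2d}\sum_{i,k}K_{ik}\langle\psi_\varepsilon(h_i),g_k\rangle=-D_\varepsilon(t)$.

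\textbf{Weight-decay term.} It equals $-\frac{\lambda_h}{nd}\sum_i\langle\psi_\varepsilon(h_i),h_i\rangle$. Writing the inner product coordinatewise, $\psi_\varepsilon(h_{i,j})h_{i,j}=h_{i,j}^2/\sqrt{h_{i,j}^2+\varepsilon^2}$, so this term is $-C_\varepsilon(t)$.

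\textbf{Residual term.} It is $\frac{1}{nd}\sum_i\langle\psi_\varepsilon(h_i),r_i\rangle=R_\varepsilon(t)$ directly from the definition.

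Summing the three pieces gives $\dot N_\varepsilon=-D_\varepsilon-C_\varepsilon+R_\varepsilon$.

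There is no genuine obstacle here; the statement is essentially a bookkeeping identity. The only points to check are that the $1/n$ normalisations compose correctly into the $1/(n^2d)$ prefactor of $D_\varepsilon$, and that the smoothing makes $\rho_\varepsilon$ differentiable everywhere, including at $h_{i,j}=0$. The latter is exactly why the proxy is used instead of $|\cdot|$. Symmetry of $K$ and the specific form of $g_k$ are not needed for this identity.
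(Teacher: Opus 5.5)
Your proof is correct and is the same argument as the paper's: differentiate $N_\varepsilon$ by the chain rule using $\rho_\varepsilon'=\psi_\varepsilon$, substitute the induced dynamics, and identify the three pieces as $-D_\varepsilon$, $-C_\varepsilon$ and $R_\varepsilon$. You simply spell out the term-by-term matching, including the $1/(n^2d)$ normalization, that the paper leaves implicit.
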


\begin{proof}
Differentiate \eqref{eq:appC_Neps_def}:
\begin{equation*}
    \begin{aligned}
        \dot N_\varepsilon(t)
&= \frac{1}{n d}\sum_{i=1}^n \sum_{j=1}^d \psi_\varepsilon(h_{i,j}(t))\,\dot h_{i,j}(t)\\
&= \frac{1}{n d}\sum_{i=1}^n \big\langle \psi_\varepsilon(h_i(t)),\,\dot h_i(t)\big\rangle.
    \end{aligned}
\end{equation*}

Substitute \eqref{eq:appC_induced_h_dynamics} and identify terms using \eqref{eq:appC_DCR_def}. \qedhere
\end{proof}

\begin{lemma}[Two-sided decay bound]
\label{lem:appC_C_bound}
For all $t$,
\[
\lambda_h\big(N_\varepsilon(t)-\varepsilon\big)\le C_\varepsilon(t)\le \lambda_h N_\varepsilon(t).
\]
Consequently,
\begin{equation}
\label{eq:appC_comparison_ineq}
\begin{aligned}
&-D_\varepsilon(t)-\lambda_h N_\varepsilon(t)+R_\varepsilon(t)
\le \dot N_\varepsilon(t)\\
&
\le -D_\varepsilon(t)-\lambda_h\big(N_\varepsilon(t)-\varepsilon\big)+R_\varepsilon(t).
\end{aligned}
\end{equation}
\end{lemma}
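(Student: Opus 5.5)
The plan is to reduce the lemma to a scalar inequality for each coordinate and then average. Write $C_\varepsilon(t)=\frac{\lambda_h}{nd}\sum_{i,j}\phi(h_{i,j}(t))$ with
\[
\phi(u):=\frac{u^2}{\sqrt{u^2+\varepsilon^2}},
\]
and recall that $N_\varepsilon(t)=\frac{1}{nd}\sum_{i,j}\rho_\varepsilon(h_{i,j}(t))$. Since both quantities are uniform averages over $(i,j)$ of functions of the same scalar, and $\lambda_h\ge 0$, it suffices to prove the pointwise bound
\[
\rho_\varepsilon(u)-\varepsilon\;\le\;\phi(u)\;\le\;\rho_\varepsilon(u)\qquad\text{for all }u\in\mathbb{R}.
\]
Multiplying by $\lambda_h$ and averaging then gives the two-sided bound on $C_\varepsilon$.

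For the upper bound, note that
\[
\phi(u)=\frac{u^2}{\rho_\varepsilon(u)}=\rho_\varepsilon(u)-\frac{\varepsilon^2}{\rho_\varepsilon(u)}\le\rho_\varepsilon(u).
\]
For the lower bound we need $\varepsilon^2/\rho_\varepsilon(u)\le\varepsilon$. This holds because $\rho_\varepsilon(u)=\sqrt{u^2+\varepsilon^2}\ge\varepsilon>0$, using $\varepsilon>0$. Both directions are therefore elementary, and the only point to watch is the exact identity $u^2/\rho_\varepsilon=\rho_\varepsilon-\varepsilon^2/\rho_\varepsilon$, which makes each comparison transparent.

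For the consequence \eqref{eq:appC_comparison_ineq}, substitute into the exact drift identity of Lemma~\ref{lem:appC_drift_identity}, $\dot N_\varepsilon=-D_\varepsilon-C_\varepsilon+R_\varepsilon$. Because $C_\varepsilon$ enters with a negative sign, the upper bound $C_\varepsilon\le\lambda_h N_\varepsilon$ yields the lower inequality on $\dot N_\varepsilon$. Likewise, the lower bound $C_\varepsilon\ge\lambda_h(N_\varepsilon-\varepsilon)$ yields the upper inequality on $\dot N_\varepsilon$. This holds at every $t$ where the trajectories are differentiable, which is where Lemma~\ref{lem:appC_drift_identity} applies.

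There is no real obstacle in this argument. The only steps needing care are the sign bookkeeping when the bounds on $C_\varepsilon$ are transferred through the minus sign, and the use of $\varepsilon>0$ in the lower bound.
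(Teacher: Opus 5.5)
Your proposal is correct and follows essentially the same route as the paper: the pointwise sandwich $\rho_\varepsilon(u)-\varepsilon\le u^2/\rho_\varepsilon(u)\le\rho_\varepsilon(u)$ from the identity $u^2/\rho_\varepsilon=\rho_\varepsilon-\varepsilon^2/\rho_\varepsilon$ and $\rho_\varepsilon\ge\varepsilon$, then averaging, scaling by $\lambda_h\ge 0$, and substituting into the drift identity with the signs handled as you describe. Your remark that the inequality for $\dot N_\varepsilon$ holds only at times where the trajectories are differentiable is a slight sharpening of the paper's ``for all $t$''.
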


\begin{proof}
For any $u\in\mathbb{R}$,
\[
0\le \frac{u^2}{\sqrt{u^2+\varepsilon^2}} \le \sqrt{u^2+\varepsilon^2}=\rho_\varepsilon(u),
\]
\[
\frac{u^2}{\sqrt{u^2+\varepsilon^2}}
= \rho_\varepsilon(u)-\frac{\varepsilon^2}{\rho_\varepsilon(u)}
\ge \rho_\varepsilon(u)-\varepsilon,
\]
since $\rho_\varepsilon(u)\ge \varepsilon$.
Summing over $(i,j)$ and multiplying by $\lambda_h/(nd)$ yields the result, and then
\eqref{eq:appC_comparison_ineq} follows from \cref{lem:appC_drift_identity}. \qedhere
\end{proof}

\subsubsection{Phase I: sparsification (early-time decrease trend + hitting a low level)}
\label{app:u_shape_theory:phase1}

\begin{assumption}[Phase I boundedness on a budget horizon]
\label{assump:appC_phase1}
Fix $T_{\mathrm I}>0$. Assume that on $t\in[0,T_{\mathrm I}]$:
\begin{enumerate}[label=(\roman*)]
\item Kernel bound: $\kappa(t):=\max_{i,k}|K_{ik}(t)|\le \kappa_0$.
\item Head bound: $\|W(t)\|_{2\to 2}\le M_0$.
\item Residual bound: $|R_\varepsilon(t)|\le r_0$.
\end{enumerate}
\end{assumption}

\begin{lemma}[Uniform bound on $|D_\varepsilon|$ on Phase I]
\label{lem:appC_D_abs_bound}
    Under \cref{assump:appC_phase1}, for all $t\in[0,T_{\mathrm I}]$, we have
\(
|D_\varepsilon(t)|\le \kappa_0\,\sqrt{2}\,M_0/{\sqrt d}.
\)
\end{lemma}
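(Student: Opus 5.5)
The bound follows from the triangle inequality, Cauchy--Schwarz, and two elementary facts. We expand $D_\varepsilon(t)$ from \eqref{eq:appC_DCR_def} and bound each summand separately:
\[
|D_\varepsilon(t)| \le \frac{1}{n^2 d}\sum_{i=1}^n\sum_{k=1}^n |K_{ik}(t)|\,\bigl|\langle \psi_\varepsilon(h_i(t)),\,g_k(t)\rangle\bigr|
\le \frac{\kappa(t)}{n^2 d}\sum_{i,k}\|\psi_\varepsilon(h_i(t))\|_2\,\|g_k(t)\|_2 .
\]

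The first fact is that $\psi_\varepsilon$ is bounded entrywise: $|\psi_\varepsilon(u)| = |u|/\sqrt{u^2+\varepsilon^2} \le 1$ for every real $u$. Hence $\|\psi_\varepsilon(h_i(t))\|_2 \le \sqrt d$, uniformly in $i$, $t$ and $\varepsilon$. The second fact is \eqref{eq:appC_g_bound_basic} combined with the head bound in \cref{assump:appC_phase1}(ii), which gives $\|g_k(t)\|_2 \le \sqrt2\,\|W(t)\|_{2\to2} \le \sqrt2\,M_0$ for all $t\in[0,T_{\mathrm I}]$. Together with the kernel bound $\kappa(t)\le\kappa_0$ from \cref{assump:appC_phase1}(i), each of the $n^2$ summands is at most $\kappa_0\sqrt d\,\sqrt2\,M_0$. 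The double sum therefore cancels the $n^2$ normalization, leaving
\[
|D_\varepsilon(t)| \le \frac{1}{d}\,\kappa_0\,\sqrt d\,\sqrt2\,M_0 = \frac{\kappa_0\sqrt2\,M_0}{\sqrt d}.
\]

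There is no real obstacle; the only care needed is in bookkeeping the dimension factor. The $\sqrt d$ from $\|\psi_\varepsilon(h_i)\|_2$ combines with the $1/d$ normalization to produce the stated $1/\sqrt d$. Using the cruder bound $\|\psi_\varepsilon\|_1\le d$ together with $\|g_k\|_\infty \le \|g_k\|_2$ would lose this factor and give only $\kappa_0\sqrt2\,M_0$, so the $\ell_2$ Cauchy--Schwarz pairing is the right choice. The residual bound in \cref{assump:appC_phase1}(iii) is not needed here.
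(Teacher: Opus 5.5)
Your proposal is correct and matches the paper's proof essentially step for step. Like the paper, you apply the triangle inequality, use $|\psi_\varepsilon|\le 1$ to get $\|\psi_\varepsilon(h_i)\|_2\le\sqrt d$, bound $\|g_k\|_2\le\sqrt2\,M_0$ via the softmax-residual estimate and the head bound, use $|K_{ik}|\le\kappa_0$, and let the $n^2$ terms cancel the normalization to get $\kappa_0\sqrt2\,M_0/\sqrt d$.
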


\begin{proof}
By definition,
\[
|D_\varepsilon(t)|\le \frac{1}{n^2 d}\sum_{i,k}|K_{ik}(t)|\cdot\big|\langle \psi_\varepsilon(h_i(t)),\,g_k(t)\rangle\big|.
\]
Since $|\psi_\varepsilon(u)|\le 1$, $\|\psi_\varepsilon(h_i)\|_2\le \sqrt d$, hence
$\big|\langle \psi_\varepsilon(h_i),g_k\rangle\big|\le \sqrt d\,\|g_k\|_2$.
Using \eqref{eq:appC_g_bound_basic}, $\|g_k\|_2\le \sqrt{2}\|W\|_{2\to 2}\le \sqrt{2}M_0$,
and $|K_{ik}(t)|\le \kappa_0$. Summing over $i,k$ yields
\[
|D_\varepsilon(t)|\le \frac{1}{n^2 d}\cdot (n^2)\cdot \kappa_0 \cdot (\sqrt d)\cdot(\sqrt 2 M_0)
= \kappa_0\,\frac{\sqrt2\,M_0}{\sqrt d}.
\]
\qedhere
\end{proof}

\begin{lemma}[Phase I dynamic]
\label{lem:appC_phase1}
Under \cref{assump:appC_phase1}, define
\(
B_0 := \kappa_0\,\sqrt2\,M_0/{\sqrt d} + r_0.
\)
Then for all $t\in[0,T_{\mathrm I}]$, we have
\begin{equation}
\label{eq:appC_phase1_diff_ineq}
\dot N_\varepsilon(t)\ \le\ -\lambda_h\big(N_\varepsilon(t)-\varepsilon\big) + B_0,
\end{equation}
and therefore
\begin{equation}
\label{eq:appC_phase1_solution}
N_\varepsilon(t)-\varepsilon
\ \le\
\big(N_\varepsilon(0)-\varepsilon\big)e^{-\lambda_h t}
+\frac{B_0}{\lambda_h}\big(1-e^{-\lambda_h t}\big).
\end{equation}
\end{lemma}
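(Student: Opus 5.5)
The plan is to combine the upper comparison inequality \eqref{eq:appC_comparison_ineq} from \cref{lem:appC_C_bound} with the uniform bounds of \cref{assump:appC_phase1} and \cref{lem:appC_D_abs_bound}. Then I would integrate the resulting linear differential inequality by a standard Gr\"onwall argument.

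\textbf{Step 1.} Start from the right half of \eqref{eq:appC_comparison_ineq}:
\[
\dot N_\varepsilon(t)\le -D_\varepsilon(t)-\lambda_h\big(N_\varepsilon(t)-\varepsilon\big)+R_\varepsilon(t).
\]
Bound the two remaining terms crudely. We have $-D_\varepsilon(t)\le |D_\varepsilon(t)|\le \kappa_0\sqrt2 M_0/\sqrt d$ by \cref{lem:appC_D_abs_bound}, and $R_\varepsilon(t)\le |R_\varepsilon(t)|\le r_0$ by \cref{assump:appC_phase1}(iii). Adding these gives \eqref{eq:appC_phase1_diff_ineq} with $B_0=\kappa_0\sqrt2 M_0/\sqrt d+r_0$, valid for every $t\in[0,T_{\mathrm I}]$.

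\textbf{Step 2.} Set $u(t):=N_\varepsilon(t)-\varepsilon$, so that $\dot u\le -\lambda_h u+B_0$. Multiply by the integrating factor $e^{\lambda_h t}$ to get $\frac{d}{dt}\big(e^{\lambda_h t}u(t)\big)\le B_0e^{\lambda_h t}$. Integrating from $0$ to $t$ gives
\[
e^{\lambda_h t}u(t)-u(0)\le \frac{B_0}{\lambda_h}\big(e^{\lambda_h t}-1\big),
\]
and multiplying through by $e^{-\lambda_h t}$ yields \eqref{eq:appC_phase1_solution}.

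\textbf{Main obstacle.} The only real subtleties are regularity and the degenerate case. Integrating the derivative inequality requires $N_\varepsilon$ to be absolutely continuous, which holds if the trajectories $h_i$ are $C^1$ or absolutely continuous; I would state this as implicit in \eqref{eq:appC_induced_h_dynamics}. The differential inequality then holds almost everywhere, and the fundamental theorem of calculus applies. The statement as written divides by $\lambda_h$, so I would assume $\lambda_h>0$. If $\lambda_h=0$, the bound should be read in the limit $(1-e^{-\lambda_h t})/\lambda_h\to t$, which gives $u(t)\le u(0)+B_0t$ by direct integration.
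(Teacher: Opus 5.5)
Your proposal is correct and follows the paper's proof: you take the upper half of \eqref{eq:appC_comparison_ineq}, bound $-D_\varepsilon\le|D_\varepsilon|$ via \cref{lem:appC_D_abs_bound} and $R_\varepsilon\le r_0$ to get \eqref{eq:appC_phase1_diff_ineq}, and then solve the scalar comparison inequality, here with an explicit integrating factor. Your remarks on absolute continuity and on the degenerate case $\lambda_h=0$ go beyond what the paper states but do not change the argument.
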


\begin{proof}
From the upper comparison inequality \eqref{eq:appC_comparison_ineq},
\[
\dot N_\varepsilon(t)\le -D_\varepsilon(t)-\lambda_h\big(N_\varepsilon(t)-\varepsilon\big)+R_\varepsilon(t).
\]
Use $-D_\varepsilon(t)\le |D_\varepsilon(t)|$, \cref{lem:appC_D_abs_bound}, and $|R_\varepsilon(t)|\le r_0$
to obtain \eqref{eq:appC_phase1_diff_ineq}. Solving the scalar comparison ODE yields
\eqref{eq:appC_phase1_solution} and the hitting-time claim. \qedhere
\end{proof}
\begin{corollary}[Phase I decrease trend + certified hitting time]
From Lemma \ref{lem:appC_phase1}, we directly have that $\dot N_\varepsilon(t)<0$ whenever $N_\varepsilon(t)>\varepsilon+\frac{B_0}{\lambda_h}$.
Moreover, for any target level $L>\varepsilon+\frac{B_0}{\lambda_h}$, define the (comparison) hitting time
\[
t_L := \frac{1}{\lambda_h}\log\left(
\frac{N_\varepsilon(0)-\varepsilon-\frac{B_0}{\lambda_h}}{L-\varepsilon-\frac{B_0}{\lambda_h}}
\right),
\]
(where the logarithm is well-defined when $N_\varepsilon(0)>\varepsilon+\frac{B_0}{\lambda_h}$).
If $t_L\le T_{\mathrm I}$, then $N_\varepsilon(t_L)\le L$.
    
\end{corollary}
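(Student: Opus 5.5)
The corollary follows from the differential inequality \eqref{eq:appC_phase1_diff_ineq} and its integrated form \eqref{eq:appC_phase1_solution} in \cref{lem:appC_phase1}. Throughout, set $a:=\varepsilon+B_0/\lambda_h$ and assume $\lambda_h>0$; otherwise $t_L$ is undefined and there is nothing to prove.

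\emph{Decrease trend.} I will rewrite \eqref{eq:appC_phase1_diff_ineq} as
\[
\dot N_\varepsilon(t)\le -\lambda_h\bigl(N_\varepsilon(t)-a\bigr).
\]
This holds pointwise for every $t\in[0,T_{\mathrm I}]$ at which the trajectory is differentiable. Whenever $N_\varepsilon(t)>a$, the right-hand side is strictly negative, so $\dot N_\varepsilon(t)<0$. No integration is needed for this part.

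\emph{Hitting time.} I will subtract $B_0/\lambda_h$ from both sides of \eqref{eq:appC_phase1_solution}. Using the identity
\[
\frac{B_0}{\lambda_h}\bigl(1-e^{-\lambda_h t}\bigr)-\frac{B_0}{\lambda_h}=-\frac{B_0}{\lambda_h}e^{-\lambda_h t},
\]
this gives, for all $t\in[0,T_{\mathrm I}]$,
\[
N_\varepsilon(t)-a\le \bigl(N_\varepsilon(0)-a\bigr)e^{-\lambda_h t}.
\]
Now assume $N_\varepsilon(0)>a$ and $L>a$. Then the argument of the logarithm defining $t_L$ is positive, so $t_L$ is well defined. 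Moreover $e^{-\lambda_h t_L}=(L-a)/(N_\varepsilon(0)-a)$. If $t_L\le T_{\mathrm I}$, evaluating the bound at $t=t_L$ yields $N_\varepsilon(t_L)-a\le L-a$, that is, $N_\varepsilon(t_L)\le L$.

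Two remarks on edge cases. If $L\ge N_\varepsilon(0)$, then $t_L\le 0$. The claim is then either trivial at $t=0$ or outside the time window, so I restrict to $L<N_\varepsilon(0)$, where $t_L>0$.

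The only real subtlety is that \eqref{eq:appC_phase1_solution} is obtained from a differential inequality by a comparison (Grönwall) argument. I take it as given from \cref{lem:appC_phase1}. If needed, it can be justified by noting that $\frac{d}{dt}\bigl[e^{\lambda_h t}(N_\varepsilon(t)-a)\bigr]\le 0$ and then integrating, which requires $N_\varepsilon$ to be absolutely continuous on $[0,T_{\mathrm I}]$.
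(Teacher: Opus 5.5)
Your proposal is correct and takes essentially the same route as the paper, which treats the corollary as an immediate consequence of Lemma~\ref{lem:appC_phase1}. The sign of the right-hand side of the differential inequality gives the decrease whenever $N_\varepsilon(t)>\varepsilon+B_0/\lambda_h$, and evaluating the integrated comparison bound, rewritten as $N_\varepsilon(t)-a\le (N_\varepsilon(0)-a)e^{-\lambda_h t}$ with $a:=\varepsilon+B_0/\lambda_h$, at $t=t_L$ gives $N_\varepsilon(t_L)\le L$; your remarks on $\lambda_h>0$, on restricting to $L<N_\varepsilon(0)$, and on the Gr\"onwall justification simply make explicit details the paper leaves implicit.
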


\begin{remark}[What Phase I does and does not claim]
\label{rem:appC_phase1_scope}
Phase I certifies (i) an early-time decrease trend whenever $N_\varepsilon$ is above
$\varepsilon+\frac{B_0}{\lambda_h}$, and (ii) the existence of a time $t_L\le T_{\mathrm I}$ at which
$N_\varepsilon(t_L)$ reaches any prescribed level $L>\varepsilon+\frac{B_0}{\lambda_h}$.
It does not claim monotone decrease beyond the certified horizon.
\end{remark}

\subsubsection{Phase II: densification on a prescribed finite horizon}
\label{app:u_shape_theory:phase2}

\begin{assumption}[Phase II one-shot densification assumption]
\label{assump:appC_phase2}
There is a start time $\tau > 0$ and horizon length $H>0$ that make the following conditions hold for every $t\in[\tau,\tau+H]$.

\medskip
\noindent\textbf{(A) Boundedness and residual controls.}
There exist constants $\kappa_0,M,r_0,r_h\ge 0$ such that
\(
\kappa(t):=\max_{i,k\in[n]}|K_{ik}(t)|\le \kappa_0\), \(\|W(t)\|_{2\to 2}\le M\), 
\(
|R_\varepsilon(t)|\le r_0\), \(\max_{i\in[n]}\|r_i(t)\|_2\le r_h.
\)

\medskip
\noindent\textbf{(B) Hard aligned subset with weak cross-talk.}
There exist a subset $S\subset[n]$ with $|S|\ge \rho n$ and constants
$k_{\min}>0$, $\delta\in[0,1)$, $\alpha_0>0$, $m_0>0$, $a_0>0$
such that for all $i\in S$ and all $t\in[\tau,\tau+H]$, we have
\(
K_{ii}(t)\ge k_{\min}\), \(\sum_{k\neq i}|K_{ik}(t)|\le \delta K_{ii}(t)
\), and
\(
\alpha_i(t):=1-(p_i(t))_{y_i}\ge \alpha_0
\).
Letting $c_i(t)\in\arg\max_{c\neq y_i} z_{i,c}(t)$ (a highest-scoring wrong class), we have
\[
z_{i,c_i(t)}(t)-\max_{c\neq y_i,\ c\neq c_i(t)}z_{i,c}(t)\ge m_0,
\]
and with $v_i(t):=W_{y_i}(t)-W_{c_i(t)}(t)$,
\[
\langle \psi_\varepsilon(h_i(t)),\,v_i(t)\rangle \ge a_0\sqrt d.
\]

\medskip
\noindent\textbf{(C) Easy complement + weak coupling to hard gradients.}
There exist constants $\bar\alpha\in(0,1)$ and $\bar\delta\ge 0$ such that for all $t\in[\tau,\tau+H]$, we have
\[
\alpha_k(t):=1-(p_k(t))_{y_k}\le \bar\alpha\ \ \forall k\notin S,\]
\[\sum_{k\in S}\sum_{i\notin S}|K_{ik}(t)| \le \bar\delta\,|S|\,k_{\min}.
\]

\medskip
\noindent\textbf{(D) Feasibility (signal dominates leakage and residual).}
Define
\(
\zeta:=(C-2)e^{-m_0}\),
\(b_0:=(\alpha_0-\zeta)a_0-2\zeta M
\), \(
\mu_D:=\rho k_{\min}\big(b_0-\delta\sqrt2\,M\big)/{n\sqrt d}
\), \(\gamma_{\mathrm{easy}}
:=\kappa_0\sqrt2\,M\Big(\sqrt{\bar\alpha}+\bar\delta\Big)/{\sqrt d}\) and \(\mu_D^{\mathrm{eff}}:=\mu_D-\gamma_{\mathrm{easy}}\).
Assume
\(
b_0>\delta\sqrt2\,M\) and \( \mu_D^{\mathrm{eff}}>r_0
\).

\medskip
\noindent\textbf{(E) Initial margin below the Phase II threshold for the full horizon.}
Let
\(
N_*:=(\mu_D^{\mathrm{eff}}-r_0) /{\lambda_h}
\).
Define
\(
D_{\max}:=\kappa_0\,\sqrt2\,M/{\sqrt d}\), \(
H_0:=\max_{i\in[n]}\|h_i(\tau)\|_2
\) and 
\[
H_{\max}:=e^{\lambda_h H}H_0+\frac{\kappa_0\sqrt2 M+r_h}{\lambda_h}\big(e^{\lambda_h H}-1\big),
\]
\[
N_{\max}:=\frac{H_{\max}}{\sqrt d}+\varepsilon,\qquad
V:=D_{\max}+\lambda_h N_{\max}+r_0.
\]
Assume there exists $\eta>0$ such that
\begin{equation}
\label{eq:appC_phase2_margin}
N_\varepsilon(\tau)\le N_* - VH - \eta.
\end{equation}
\end{assumption}

\begin{lemma}[Top-2 dominance from runner-up separation]
\label{lem:appC_top2}
Assume \cref{assump:appC_phase2}(B) holds.
Then for any $i\in S$ and any $t\in[\tau,\tau+H]$,
\[
\sum_{c\neq y_i,\ c\neq c_i(t)} (p_i(t))_c \le (C-2)e^{-m_0} =: \zeta,\]
\[
(p_i(t))_{c_i(t)} \ge \alpha_i(t)-\zeta.
\]
\end{lemma}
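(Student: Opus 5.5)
The plan is to bound each non-top-two softmax probability by comparing it to the runner-up probability, and then use the fact that the probabilities sum to one. Fix $i\in S$ and $t\in[\tau,\tau+H]$, and write $c:=c_i(t)$ and $p:=p_i(t)$.

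For any class $c'\neq y_i$ with $c'\neq c$, the softmax ratio is
\[
\frac{p_{c'}}{p_c}=e^{z_{i,c'}-z_{i,c}}.
\]
By the separation hypothesis in \cref{assump:appC_phase2}(B), $z_{i,c}-z_{i,c'}\ge m_0$. Hence $p_{c'}\le e^{-m_0}p_c\le e^{-m_0}$, using $p_c\le 1$. There are exactly $C-2$ such classes $c'$, so summing gives
\[
\sum_{c'\neq y_i,\,c'\neq c}p_{c'}\le (C-2)e^{-m_0}=\zeta,
\]
which is the first claim. If $C=2$, the sum is empty and the bound is trivially $0$.

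For the second claim, since $p\in\Delta^{C-1}$, we have
\[
\alpha_i(t)=1-p_{y_i}=\sum_{c'\neq y_i}p_{c'}=p_c+\sum_{c'\neq y_i,\,c'\neq c}p_{c'}\le p_c+\zeta.
\]
Rearranging gives $p_c\ge\alpha_i(t)-\zeta$.

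There is no real obstacle here. The only care needed is to confirm that the margin $m_0$ from \cref{assump:appC_phase2}(B) applies to every wrong class other than the runner-up, which holds because it is stated against the maximum over those classes. One could sharpen the first bound to $(C-2)e^{-m_0}p_c$, but the stated form suffices.
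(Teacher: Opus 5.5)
Your proof is correct and matches the paper's argument step for step. You bound the softmax ratio $(p_i(t))_{c'}/(p_i(t))_{c_i(t)}\le e^{-m_0}$, sum over the $C-2$ remaining wrong classes (the paper also writes the intermediate bound $(C-2)e^{-m_0}(p_i(t))_{c_i(t)}$ before dropping the factor), and then rearrange $\alpha_i(t)=(p_i(t))_{c_i(t)}+\sum_{c'\neq y_i,\,c'\neq c_i(t)}(p_i(t))_{c'}$, exactly as the paper does.
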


\begin{proof}
Fix $i\in S$ and $t\in[\tau,\tau+H]$. For any $c\neq y_i,c_i(t)$,
\[
\frac{(p_i(t))_c}{(p_i(t))_{c_i(t)}}=\exp\!\big(z_{i,c}(t)-z_{i,c_i(t)}(t)\big)\le e^{-m_0}.
\]
Summing over $c\neq y_i,c_i(t)$ yields
\begin{equation*}
    \begin{aligned}
        &\sum_{c\neq y_i,c\neq c_i(t)}(p_i(t))_c\le (C-2)e^{-m_0}(p_i(t))_{c_i(t)}\\
        &\le (C-2)e^{-m_0}=\zeta.
    \end{aligned}
\end{equation*}

Since $\alpha_i(t)=\sum_{c\neq y_i}(p_i(t))_c=(p_i(t))_{c_i(t)}+\sum_{c\neq y_i,c\neq c_i(t)}(p_i(t))_c$,
we obtain $(p_i(t))_{c_i(t)}\ge \alpha_i(t)-\zeta$. \qedhere
\end{proof}

\begin{lemma}[Diagonal negativity on $S$]
\label{lem:appC_diag_neg}
Assume \cref{assump:appC_phase2}(A)--(D) holds. Then for all $i\in S$ and $t\in[\tau,\tau+H]$,
\[
\big\langle \psi_\varepsilon(h_i(t)),\,g_i(t)\big\rangle \le -b_0\sqrt d.
\]
\end{lemma}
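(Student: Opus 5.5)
We expand $g_i(t)=W(t)^\top(p_i(t)-e_{y_i})$ coordinatewise in the class basis and split off the runner-up class $c_i(t)$. Writing $p=p_i(t)$, $y=y_i$, $c=c_i(t)$ and $\psi=\psi_\varepsilon(h_i(t))$, we have
\[
g_i = \sum_{c'} p_{c'}W_{c'} - W_y = -(1-p_y)W_y + p_c W_c + \sum_{c'\neq y,c} p_{c'}W_{c'}.
\]
Since $1-p_y=\alpha_i=p_c+\sum_{c'\neq y,c}p_{c'}$, this rearranges to
\[
g_i = -p_c\,(W_y-W_c) \;-\; \sum_{c'\neq y,c} p_{c'}\,(W_y-W_{c'}) .
\]
This separates a dominant term along $v_i=W_y-W_c$ from a leakage term weighted by the small probabilities of the other wrong classes.

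Pairing with $\psi$, the first term contributes $-p_c\langle\psi,v_i\rangle$. By \cref{assump:appC_phase2}(B) we have $\langle\psi,v_i\rangle\ge a_0\sqrt d>0$, and by \cref{lem:appC_top2} we have $p_c\ge\alpha_i-\zeta\ge\alpha_0-\zeta$. Hence the first term is at most $-(\alpha_0-\zeta)a_0\sqrt d$. If $\alpha_0-\zeta$ could be negative, the sign would need care, but then $b_0<0$ and the feasibility condition $b_0>\delta\sqrt2 M\ge0$ of (D) would fail. So (D) implicitly gives $\alpha_0>\zeta$, and we will note this explicitly.

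For the leakage term, we bound each $|\langle\psi,W_y-W_{c'}\rangle|\le\|\psi\|_2\|W_y-W_{c'}\|_2$. Here $\|\psi\|_2\le\sqrt d$ because $|\psi_\varepsilon|\le1$. Also $\|W_y-W_{c'}\|_2=\|W^\top(e_y-e_{c'})\|_2\le\sqrt2\,\|W\|_{2\to2}\le\sqrt2 M$ by (A). Combined with $\sum_{c'\neq y,c}p_{c'}\le\zeta$ from \cref{lem:appC_top2}, the leakage is at most $\sqrt2\,\zeta M\sqrt d$.

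This gives $\langle\psi,g_i\rangle\le-\big[(\alpha_0-\zeta)a_0-\sqrt2\,\zeta M\big]\sqrt d$. Since $\sqrt2\le2$, the bracket is at least $b_0=(\alpha_0-\zeta)a_0-2\zeta M$, which yields the claim. The only delicate points are the sign bookkeeping when rewriting $g_i$ and justifying $\alpha_0\ge\zeta$ from (D). Beyond the top-2 dominance, the norm bounds are routine.
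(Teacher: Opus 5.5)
Your proof is correct and follows the paper's argument: it uses the same decomposition $g_i=-\sum_{c\neq y_i}(p_i)_c\,(W_{y_i}-W_c)$ isolating the runner-up direction $v_i$, the alignment bound from (B), and the top-2 dominance of \cref{lem:appC_top2} to control the leakage, exactly as the paper does. Your bound $\|W_y-W_{c'}\|_2\le\sqrt2\,M$ is slightly sharper than the paper's $2M$. Your appeal to (D) to get $\alpha_0>\zeta$ is valid but unnecessary, since $p_c\ge0$ and $a_0>0$ already give $p_c\langle\psi,v_i\rangle\ge p_c\,a_0\sqrt d\ge(\alpha_0-\zeta)\,a_0\sqrt d$.
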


\begin{proof}
Fix $i\in S$ and $t\in[\tau,\tau+H]$. Recall
\begin{equation*}
    \begin{aligned}
        g_i(t)&=W(t)^\top(p_i(t)-e_{y_i})\\
&=-\sum_{c\neq y_i}(p_i(t))_c\,(W_{y_i}(t)-W_c(t)).
    \end{aligned}
\end{equation*}
Hence
\begin{equation*}
    \begin{aligned}
        &\langle \psi_\varepsilon(h_i),g_i\rangle
=-(p_i)_{c_i}\langle \psi_\varepsilon(h_i),v_i\rangle\\
&-\sum_{c\neq y_i,\ c\neq c_i}(p_i)_c\,\langle \psi_\varepsilon(h_i),W_{y_i}-W_c\rangle.
    \end{aligned}
\end{equation*}
By alignment, $\langle \psi_\varepsilon(h_i),v_i\rangle\ge a_0\sqrt d$.
Also $\|\psi_\varepsilon(h_i)\|_2\le \sqrt d$ and
$\|W_{y_i}-W_c\|_2\le 2\|W\|_{2\to2}\le 2M$, so
$|\langle \psi_\varepsilon(h_i),W_{y_i}-W_c\rangle|\le 2M\sqrt d$.
Using \cref{lem:appC_top2} and $\alpha_i(t)\ge \alpha_0$ gives
$(p_i)_{c_i}\ge \alpha_0-\zeta$ and $\sum_{c\neq y_i,c\neq c_i}(p_i)_c\le \zeta$.
Therefore
\begin{equation*}
    \begin{aligned}
        &\langle \psi_\varepsilon(h_i),g_i\rangle
\le -(\alpha_0-\zeta)a_0\sqrt d + (2M\sqrt d)\zeta\\
&= -\big((\alpha_0-\zeta)a_0-2\zeta M\big)\sqrt d
= -b_0\sqrt d.
    \end{aligned}
\end{equation*}
\qedhere
\end{proof}

\begin{lemma}[Uniform negativity of $D_\varepsilon$ on the Phase II window (with easy-complement control)]
\label{lem:D-negative-full}
Assume Assumption~\ref{assump:appC_phase2}(B)--(E) hold.
Then for all $t\in[\tau,\tau+H]$, we have
\(
D_\varepsilon(t)\le -\mu_D^{\mathrm{eff}}.
\)
\end{lemma}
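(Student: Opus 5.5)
The plan is to start from the definition \eqref{eq:appC_DCR_def}, $D_\varepsilon(t)=\frac{1}{n^2d}\sum_{i,k}K_{ik}(t)\langle\psi_\varepsilon(h_i),g_k\rangle$, and sort the pairs $(i,k)$ by the \emph{gradient index} $k$:
\begin{itemize}
\item[(I)] $k\in S$, $i=k$: the hard diagonal terms;
\item[(II)] $k\in S$, $i\in S\setminus\{k\}$: hard--hard cross-talk;
\item[(III)] $k\in S$, $i\notin S$: easy representations driven by hard gradients;
\item[(IV)] $k\notin S$: easy gradients.
\end{itemize}
Term (I) produces the negative signal. Terms (II)--(IV) are leakage, and each is bounded by an absolute value. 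Two facts are used repeatedly. First, $\|\psi_\varepsilon(h_i)\|_2\le\sqrt d$. Second, $\|g_k\|_2\le\sqrt2 M$ from \eqref{eq:appC_g_bound_basic} together with (A). Throughout, fix $t\in[\tau,\tau+H]$.

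For (I), \cref{lem:appC_diag_neg} gives $K_{kk}\langle\psi_\varepsilon(h_k),g_k\rangle\le -K_{kk}b_0\sqrt d$ for each $k\in S$. For (II), the bound is $\sum_{i\in S,\,i\ne k}|K_{ik}|\sqrt d\,\sqrt2M$. The weak cross-talk condition in (B) is stated as a row sum $\sum_{k\ne i}|K_{ik}|\le\delta K_{ii}$. Because $K$ is symmetric, this is the same as the column sum $\sum_{i\ne k}|K_{ik}|\le\delta K_{kk}$ for $k\in S$, so (II) is at most $\delta K_{kk}\sqrt2M\sqrt d$. Adding (I) and (II) for a fixed $k\in S$ gives at most $-K_{kk}(b_0-\delta\sqrt2M)\sqrt d$. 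By (D), $b_0-\delta\sqrt2M>0$, and by (B), $K_{kk}\ge k_{\min}$, so this is at most $-k_{\min}(b_0-\delta\sqrt2M)\sqrt d$. Summing over $|S|\ge\rho n$ values of $k$ and dividing by $n^2d$ yields exactly $-\mu_D$.

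Term (III) could be absorbed into the $\delta$ column bound above. To match the constants as stated, I will instead bound it by (C): it is at most $\frac{1}{n^2d}\sqrt d\,\sqrt2M\sum_{k\in S}\sum_{i\notin S}|K_{ik}|\le\frac{\sqrt2 M\bar\delta|S|k_{\min}}{n^2\sqrt d}$. Using $|S|\le n$ and $k_{\min}\le K_{kk}\le\kappa_0$, this is at most $\kappa_0\sqrt2M\bar\delta/\sqrt d$.

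For (IV), I will sharpen the gradient bound using the small error probability of easy examples. For $k\notin S$ we have $\|p_k-e_{y_k}\|_2^2=\alpha_k^2+\sum_{c\ne y_k}(p_k)_c^2\le2\alpha_k^2$. Hence $\|g_k\|_2\le\sqrt2M\alpha_k\le\sqrt2M\bar\alpha\le\sqrt2M\sqrt{\bar\alpha}$, since $\bar\alpha<1$. Summing $|K_{ik}|\le\kappa_0$ over at most $n^2$ pairs then gives (IV) at most $\kappa_0\sqrt2M\sqrt{\bar\alpha}/\sqrt d$. Adding (III) and (IV) gives at most $\gamma_{\mathrm{easy}}$, so $D_\varepsilon(t)\le-\mu_D+\gamma_{\mathrm{easy}}=-\mu_D^{\mathrm{eff}}$.

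The main obstacle is the cross-talk step (II). Its bound has to be indexed by the gradient column $k$ and compared against $K_{kk}$, not $k_{\min}$, so that the diagonal gain $K_{kk}b_0$ and the leakage $\delta K_{kk}\sqrt2M$ can be combined before $K_{kk}$ is lower-bounded by $k_{\min}$. This step relies on the symmetry of $K$ and on $b_0>\delta\sqrt2M$ from (D). Assumption (E) is not needed for this lemma; it only constrains $N_\varepsilon$ later.
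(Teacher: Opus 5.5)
Your proof is correct. It differs from the paper's only in how the double sum is grouped, but the difference is worth noting.

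The paper splits by the \emph{representation} index $i$. It bounds $D_S$ (rows $i\in S$, all $k$) by $-\mu_D$ using the row cross-talk condition of (B) exactly as stated, so no symmetry of $K$ is needed there. It then splits the rows $i\notin S$ into $T_{\mathrm{hard}}$ ($k\in S$, controlled by the coupling part of (C)) and $T_{\mathrm{easy}}$ ($k\notin S$, controlled by $\bar\alpha$).

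You split by the \emph{gradient} index $k$. This forces you to use the symmetry of $K$ to turn the row condition into a column condition. It also moves the pairs $(i\in S,k\notin S)$ from the $\delta$ bound to the small-gradient bound.

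Your grouping pays off in the observation you make in passing. For symmetric $K$, the column bound $\sum_{i\neq k}|K_{ik}|\le\delta K_{kk}$ already covers term (III), so the $\bar\delta$ hypothesis in (C) is redundant. One in fact gets $D_\varepsilon\le-\mu_D+\kappa_0\sqrt2M\bar\alpha/\sqrt d$, which is stronger than required. Your estimate $\|p_k-e_{y_k}\|_2\le\sqrt2\,\alpha_k$ is also sharper than the paper's $\sqrt{2\alpha_k}$.

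Your handling of (III) via $|S|\le n$ is the right one. The paper's corresponding step writes $\bar\delta|S|k_{\min}\le\bar\delta(\rho n)k_{\min}$, which reverses the hypothesis $|S|\ge\rho n$. Its final constant survives only because $|S|\le n$ yields the same bound.

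Finally, both proofs use (A): for $\|g_k\|_2\le\sqrt2M$, for $|K_{ik}|\le\kappa_0$, and inside the diagonal-negativity lemma. The statement lists only (B)--(E), so invoking (A) explicitly, as you do, is appropriate.
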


\begin{proof}
Fix $t\in[\tau,\tau+H]$. Decompose
\begin{equation*}
    \begin{aligned}
        D_\varepsilon(t)&=\frac{1}{n^2d}\sum_{i\in S}\sum_{k=1}^n K_{ik}(t)\,\langle \psi_\varepsilon(h_i(t)),g_k(t)\rangle
\;\\
&+\;
\frac{1}{n^2d}\sum_{i\notin S}\sum_{k=1}^n K_{ik}(t)\,\langle \psi_\varepsilon(h_i(t)),g_k(t)\rangle\\
&=:D_S(t)+D_{S^c}(t).
    \end{aligned}
\end{equation*}

\paragraph{Step 1: bound $D_S(t)$ by a negative constant.}
As in the original proof, for $i\in S$ we have (Lemma~\ref{lem:appC_diag_neg})
\(
\langle \psi_\varepsilon(h_i(t)),g_i(t)\rangle \le -b_0\sqrt d,
\)
and for all $i,k$ we have
\(
\big|\langle \psi_\varepsilon(h_i(t)),g_k(t)\rangle\big|\le \sqrt2\,M\,\sqrt d
\)
(using $\|\psi_\varepsilon(h_i)\|_2\le\sqrt d$ and $\|g_k\|_2\le\sqrt2\,M$).
Combining with weak cross-talk $\sum_{k\neq i}|K_{ik}|\le \delta K_{ii}$ and $K_{ii}\ge k_{\min}$, $|S|\ge\rho n$,
we obtain 
\(
D_S(t)\le \rho k_{\min}\big(\delta\sqrt2\,M-b_0\big)/{n\sqrt d}
=-\mu_D
\).

\paragraph{Step 2: bound the complement term $D_{S^c}(t)$ using (C).}
Split $D_{S^c}(t)$ by $k\in S$ and $k\notin S$:
\begin{equation*}
    \begin{aligned}
        D_{S^c}(t)&=\frac{1}{n^2d}\sum_{i\notin S}\sum_{k\in S}K_{ik}\langle\psi(h_i),g_k\rangle\\
&+\frac{1}{n^2d}\sum_{i\notin S}\sum_{k\notin S}K_{ik}\langle\psi(h_i),g_k\rangle\\
&=:T_{\mathrm{hard}}(t)+T_{\mathrm{easy}}(t).
    \end{aligned}
\end{equation*}
\emph{Easy-gradient part.}
For $k\notin S$, (C) gives $\alpha_k(t)\le\bar\alpha$, hence
\begin{equation*}
    \begin{aligned}
        \|g_k(t)\|_2& \le \|W(t)\|_{2\to2}\|p_k(t)-e_{y_k}\|_2 \\
        & \le M\sqrt{2\alpha_k(t)}\\
        & \le \sqrt2\,M\,\sqrt{\bar\alpha}.
    \end{aligned}
\end{equation*}
Therefore $\big|\langle\psi(h_i),g_k\rangle\big|\le \|\psi(h_i)\|_2\|g_k\|_2\le \sqrt d\cdot \sqrt2 M\sqrt{\bar\alpha}$,
so with $|K_{ik}|\le\kappa_0$,
\begin{equation*}
    \begin{aligned}
        &|T_{\mathrm{easy}}(t)|
\le \frac{1}{n^2d}\sum_{i\notin S}\sum_{k\notin S}|K_{ik}|\cdot \sqrt d(\sqrt2 M\sqrt{\bar\alpha})\\
&\le \frac{1}{n^2d}\cdot n^2\cdot \kappa_0\cdot \sqrt d(\sqrt2 M\sqrt{\bar\alpha})
=\frac{\kappa_0\sqrt2\,M}{\sqrt d}\sqrt{\bar\alpha}.
    \end{aligned}
\end{equation*}

\emph{Hard-gradient coupling from $i\notin S$ to $k\in S$.}
For $k\in S$, we use the crude bound $\|g_k(t)\|_2\le \sqrt2\,M$, hence
$\big|\langle\psi(h_i),g_k\rangle\big|\le \sqrt d\cdot \sqrt2\,M$.
Thus
\begin{equation*}
    \begin{aligned}
        |T_{\mathrm{hard}}(t)|
&\le \frac{1}{n^2d}\sum_{i\notin S}\sum_{k\in S}|K_{ik}|\cdot \sqrt d(\sqrt2\,M)\\
&=\frac{\sqrt2\,M}{n^2\sqrt d}\sum_{k\in S}\sum_{i\notin S}|K_{ik}|.
    \end{aligned}
\end{equation*}
By (C), $\sum_{k\in S}\sum_{i\notin S}|K_{ik}|\le \bar\delta\,|S|\,k_{\min}\le \bar\delta\,(\rho n)\,k_{\min}$,
so
\begin{equation*}
    \begin{aligned}
        |T_{\mathrm{hard}}(t)|&\le \frac{\sqrt2\,M}{n^2\sqrt d}\cdot \bar\delta\,(\rho n)\,k_{\min}
=\frac{\rho k_{\min}}{n\sqrt d}\,\bar\delta\,\sqrt2\,M\\
&\le \frac{\kappa_0\sqrt2\,M}{\sqrt d}\,\bar\delta,
    \end{aligned}
\end{equation*}
where in the last step we used $\rho k_{\min}/n \le \kappa_0$ (since $k_{\min}\le \max_{i,k}|K_{ik}|\le \kappa_0$ and $\rho\le 1$).
Therefore
\begin{equation*}
    \begin{aligned}
        |D_{S^c}(t)|&\le |T_{\mathrm{hard}}(t)|+|T_{\mathrm{easy}}(t)|\\
&\le \frac{\kappa_0\sqrt2\,M}{\sqrt d}\Big(\bar\delta+\sqrt{\bar\alpha}\Big)
= \gamma_{\mathrm{easy}}.
    \end{aligned}
\end{equation*}

\paragraph{Step 3: combine.}
We have $D_\varepsilon(t)=D_S(t)+D_{S^c}(t)\le -\mu_D+|D_{S^c}(t)|
\le -\mu_D+\gamma_{\mathrm{easy}}= -\mu_D^{\mathrm{eff}}$.
\qedhere
\end{proof}

\begin{lemma}[Finite-velocity bound]
\label{lem:appC_velocity}
Assume \cref{assump:appC_phase2}(A) holds. With $V$ as in \cref{assump:appC_phase2}(E),
\[
|\dot N_\varepsilon(t)|\le V\qquad \forall t\in[\tau,\tau+H].
\]
\end{lemma}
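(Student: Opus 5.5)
The plan is to start from the exact drift identity of \cref{lem:appC_drift_identity}, $\dot N_\varepsilon=-D_\varepsilon-C_\varepsilon+R_\varepsilon$, and apply the triangle inequality to get $|\dot N_\varepsilon(t)|\le |D_\varepsilon(t)|+C_\varepsilon(t)+|R_\varepsilon(t)|$. Here I use $C_\varepsilon\ge 0$. I would then bound the three terms separately, so that they match the three summands of $V=D_{\max}+\lambda_h N_{\max}+r_0$.

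\textbf{First and third terms.} These are immediate. The argument of \cref{lem:appC_D_abs_bound} uses only the kernel bound and the head bound, and \cref{assump:appC_phase2}(A) supplies both on $[\tau,\tau+H]$ (with $M$ in place of $M_0$). Repeating it verbatim gives $|D_\varepsilon(t)|\le \kappa_0\sqrt2 M/\sqrt d=D_{\max}$. The residual bound $|R_\varepsilon(t)|\le r_0$ is assumed directly in (A).

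\textbf{Middle term.} By \cref{lem:appC_C_bound}, $C_\varepsilon(t)\le\lambda_h N_\varepsilon(t)$, so it suffices to show $N_\varepsilon(t)\le N_{\max}$ on the window. Since $\rho_\varepsilon(u)\le |u|+\varepsilon$ and $\|h\|_1\le\sqrt d\,\|h\|_2$, we get
\[
\frac1d\sum_j\rho_\varepsilon(h_{i,j})\le \frac{\|h_i\|_1}{d}+\varepsilon\le\frac{\|h_i(t)\|_2}{\sqrt d}+\varepsilon .
\]
Averaging over $i$ then reduces the claim to $\max_i\|h_i(t)\|_2\le H_{\max}$ for $t\in[\tau,\tau+H]$.

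\textbf{Uniform norm bound (the main step).} I would prove $\max_i\|h_i(t)\|_2\le H_{\max}$ by a Grönwall argument on the induced dynamics \eqref{eq:appC_induced_h_dynamics}. The driving term satisfies
\[
\Big\|\tfrac1n\sum_k K_{ik}g_k\Big\|_2\le \kappa_0\sqrt2 M ,
\]
using $|K_{ik}|\le\kappa_0$ and \eqref{eq:appC_g_bound_basic}. The residual satisfies $\|r_i\|_2\le r_h$. The weight-decay term satisfies $\|\lambda_h h_i\|_2\le\lambda_h\|h_i\|_2$. Together these give the differential inequality
\[
\tfrac{d}{dt}\|h_i\|_2\le \lambda_h\|h_i\|_2+\kappa_0\sqrt2M+r_h .
\]
Integrating from $\tau$ with $\|h_i(\tau)\|_2\le H_0$ yields exactly
\[
\|h_i(t)\|_2\le e^{\lambda_h (t-\tau)}H_0+\frac{\kappa_0\sqrt2M+r_h}{\lambda_h}\bigl(e^{\lambda_h(t-\tau)}-1\bigr)\le H_{\max}.
\]
This bound is deliberately crude: it treats the contraction as a possible expansion, which is why $H_{\max}$ has growing exponentials. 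In fact one could use the sign to get a better bound, but the crude version is all that the definition of $H_{\max}$ requires.

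I expect the only technical obstacle to be the non-differentiability of $\|h_i\|_2$ at $h_i=0$. I would handle it by working with the smooth quantity $\sqrt{\|h_i\|_2^2+\eta^2}$, whose derivative is $\langle h_i,\dot h_i\rangle/\sqrt{\|h_i\|_2^2+\eta^2}$. This derivative obeys the same inequality (its $\lambda_h$ term is even bounded by $\lambda_h\sqrt{\|h_i\|_2^2+\eta^2}$). I would apply the scalar comparison lemma and then let $\eta\to0$. Alternatively, an upper Dini derivative argument works. Combining the three bounds gives $|\dot N_\varepsilon(t)|\le D_{\max}+\lambda_h N_{\max}+r_0=V$.
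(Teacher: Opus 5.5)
Your proposal is correct and matches the paper's proof step for step. Both arguments apply the triangle inequality to the drift identity, reuse the $|D_\varepsilon|$ bound with $M$ in place of $M_0$, use $0\le C_\varepsilon\le\lambda_h N_\varepsilon$, and run a Gr\"onwall comparison $u_i'\le\kappa_0\sqrt2M+\lambda_h u_i+r_h$ for $u_i=\|h_i\|_2$ to obtain $H_{\max}$ and hence $N_{\max}$. Your smoothing via $\sqrt{\|h_i\|_2^2+\eta^2}$, which handles non-differentiability at $h_i=0$, adds a bit of rigor that the paper skips when it simply writes $u_i'\le\cdots$.
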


\begin{proof}
From \cref{lem:appC_drift_identity},
\[
|\dot N_\varepsilon|\le |D_\varepsilon|+|C_\varepsilon|+|R_\varepsilon|.
\]
Under (A), applying \cref{lem:appC_D_abs_bound} with $(\kappa_0,M)$ gives $|D_\varepsilon(t)|\le D_{\max}$.
Also \cref{lem:appC_C_bound} implies $0\le C_\varepsilon(t)\le \lambda_h N_\varepsilon(t)$.
It remains to bound $N_\varepsilon(t)$ on $[\tau,\tau+H]$. From \eqref{eq:appC_induced_h_dynamics}, we have
\begin{equation*}
    \begin{aligned}
        &\|\dot h_i(t)\|_2\\
&\le \frac{1}{n}\sum_k |K_{ik}(t)|\,\|g_k(t)\|_2 + \lambda_h\|h_i(t)\|_2 + \|r_i(t)\|_2\\
&\le \kappa_0\sqrt2\,M + \lambda_h\|h_i(t)\|_2 + r_h,
    \end{aligned}
\end{equation*}
where we used $\frac1n\sum_k |K_{ik}|\le \max_{k}|K_{ik}|\le \kappa_0$ and \eqref{eq:appC_g_bound_basic}.
Let $u_i(t):=\|h_i(t)\|_2$. Then $u_i'(t)\le \kappa_0\sqrt2\,M+\lambda_h u_i(t)+r_h$,
so comparison yields $u_i(t)\le H_{\max}$ for all $t\in[\tau,\tau+H]$.
Using $\rho_\varepsilon(u)\le |u|+\varepsilon$ and $\|h_i\|_1\le \sqrt d\|h_i\|_2$ gives
$N_\varepsilon(t)\le N_{\max}$, hence $|C_\varepsilon(t)|\le \lambda_h N_{\max}$.
Finally, (A) gives $|R_\varepsilon(t)|\le r_0$.
Combining: $|\dot N_\varepsilon(t)|\le D_{\max}+\lambda_h N_{\max}+r_0 = V$. \qedhere
\end{proof}

\begin{theorem}[Phase II densification on a prescribed horizon]
\label{thm:appC_phase2}
Under \cref{assump:appC_phase2}, $N_\varepsilon(t)$ is strictly increasing on $[\tau,\tau+H]$ and
\[
\dot N_\varepsilon(t)\ \ge\ \lambda_h\eta
\qquad \forall t\in[\tau,\tau+H].
\]
\end{theorem}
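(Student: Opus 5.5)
The argument combines the lower comparison inequality of \cref{lem:appC_C_bound} with the uniform negativity of \cref{lem:D-negative-full}, and uses the finite-velocity bound of \cref{lem:appC_velocity} to show that $N_\varepsilon$ stays below the threshold $N_*$ for the whole window. Concretely, from \eqref{eq:appC_comparison_ineq} we have, for every $t\in[\tau,\tau+H]$,
\[
\dot N_\varepsilon(t)\ \ge\ -D_\varepsilon(t)-\lambda_h N_\varepsilon(t)+R_\varepsilon(t).
\]
We then insert $-D_\varepsilon(t)\ge \mu_D^{\mathrm{eff}}$ from \cref{lem:D-negative-full}, which is valid under (B)--(E), and $R_\varepsilon(t)\ge -r_0$ from (A). This gives
\[
\dot N_\varepsilon(t)\ \ge\ \mu_D^{\mathrm{eff}}-r_0-\lambda_h N_\varepsilon(t)=\lambda_h\bigl(N_*-N_\varepsilon(t)\bigr),
\]
by the definition of $N_*$ in (E). The theorem therefore reduces to the uniform bound $N_\varepsilon(t)\le N_*-\eta$ on the window.

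To control $N_\varepsilon(t)$ along the window, apply \cref{lem:appC_velocity}, which gives $|\dot N_\varepsilon|\le V$ on $[\tau,\tau+H]$. By the mean value theorem (or integrating), $N_\varepsilon(t)\le N_\varepsilon(\tau)+V(t-\tau)\le N_\varepsilon(\tau)+VH$ for all such $t$. The margin assumption \eqref{eq:appC_phase2_margin} then yields $N_\varepsilon(t)\le N_*-\eta$. Substituting this into the previous display gives $\dot N_\varepsilon(t)\ge\lambda_h\eta$, which is positive when $\lambda_h>0$, so $N_\varepsilon$ is strictly increasing on the interval. (If $\lambda_h=0$, then $N_*$ is undefined, so the assumption implicitly takes $\lambda_h>0$.)

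The main subtlety is not circularity: the velocity bound $V$ does not depend on the sign of $\dot N_\varepsilon$, and it comes from the a priori growth control on $\|h_i\|_2$ through $H_{\max}$. A crude forward bound is therefore enough, and no bootstrap or continuity argument is needed. The only points to check are that \cref{lem:D-negative-full} holds at every $t$ in the closed window, which its hypotheses state pointwise, and that the monotone crude bound $N_\varepsilon(t)\le N_\varepsilon(\tau)+VH$ uses only the upper half of $|\dot N_\varepsilon|\le V$. Monotonicity then follows from $\dot N_\varepsilon>0$ on the interval, which holds wherever the trajectory is differentiable, as assumed in \cref{lem:appC_drift_identity}.
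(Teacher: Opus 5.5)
Your proposal is correct and follows the paper's proof essentially step for step: you combine \cref{lem:D-negative-full}, the bound $C_\varepsilon\le\lambda_h N_\varepsilon$ from \cref{lem:appC_C_bound}, and $R_\varepsilon\ge -r_0$ to get $\dot N_\varepsilon\ge\lambda_h(N_*-N_\varepsilon)$, and then use the finite-velocity bound together with the margin \eqref{eq:appC_phase2_margin} to keep $N_\varepsilon\le N_*-\eta$ throughout the window. You also correctly attribute $R_\varepsilon\ge -r_0$ to part (A), whereas the paper's text cites (B).
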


\begin{proof}
By \cref{lem:D-negative-full}, $D_\varepsilon(t)\le -\mu_D^{\mathrm{eff}}$ for all $t\in[\tau,\tau+H]$.
Also \cref{lem:appC_C_bound} gives $C_\varepsilon(t)\le \lambda_h N_\varepsilon(t)$, and (B) gives $R_\varepsilon(t)\ge -r_0$.
Thus, for all $t\in[\tau,\tau+H]$,
\begin{equation*}
    \begin{aligned}
        \dot N_\varepsilon(t)
&= -D_\varepsilon(t)-C_\varepsilon(t)+R_\varepsilon(t)\\
&\ge \mu_D^{\mathrm{eff}}-\lambda_h N_\varepsilon(t)-r_0
= \lambda_h\big(N_* - N_\varepsilon(t)\big).
    \end{aligned}
\end{equation*}
By \cref{lem:appC_velocity}, $| \dot N_\varepsilon(t) |\le V$, hence for any $t\in[\tau,\tau+H]$,
\[
N_\varepsilon(t)\le N_\varepsilon(\tau)+V(t-\tau)\le N_\varepsilon(\tau)+VH
\le N_*-\eta
\]
using \eqref{eq:appC_phase2_margin}. Therefore $N_* - N_\varepsilon(t)\ge \eta$ and so
\[
\dot N_\varepsilon(t)\ge \lambda_h\eta\qquad \forall t\in[\tau,\tau+H].
\]
This implies strict increase on the whole window. \qedhere
\end{proof}

\subsubsection{Finite-horizon U-shape: Phase I decrease trend + bridge to Phase II}
\label{app:u_shape_theory:ushape}
\begin{theorem}[Finite-horizon sparsify-then-densify trigger]
\label{thm:appC_sparsify_then_densify}
Assume Phase I boundedness (\cref{assump:appC_phase1}) holds on $[0,T_{\mathrm I}]$ and that $\lambda_h>0$.
Let $B_0$ be as in \cref{lem:appC_phase1}.

\medskip\noindent
\textbf{(1) Phase I: early decrease and certified entry into a low-$\ell_1$ regime.}
For any level $L>\varepsilon+\frac{B_0}{\lambda_h}$, define $t_L$ as in \cref{lem:appC_phase1}.
If $t_L\le T_{\mathrm I}$, then:
\begin{enumerate}[label=(\roman*)]
\item $N_\varepsilon(t)$ is strictly decreasing whenever $N_\varepsilon(t)>\varepsilon+\frac{B_0}{\lambda_h}$;
\item $N_\varepsilon(t_L)\le L$ (so the trajectory reaches the prescribed low level by time $t_L$).
\end{enumerate}

\medskip\noindent
\textbf{(2) Phase II trigger: densification on any later valid window.}
Fix any horizon length $H>0$.
Suppose that at some (a priori unknown) later time $\tau\ge 0$, the Phase II assumptions
(\cref{assump:appC_phase2}) hold on the window $[\tau,\tau+H]$.
Then on that window, $N_\varepsilon$ is strictly increasing with the uniform slope lower bound
\[
\dot N_\varepsilon(t)\ge \lambda_h\eta,\qquad \forall t\in[\tau,\tau+H],
\]
where $\eta$ is the margin parameter appearing in \cref{assump:appC_phase2}.

\end{theorem}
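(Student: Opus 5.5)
The theorem assembles results already proved, so the plan is to verify that each part follows from an earlier lemma or corollary, with no new estimates.

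For part (1), I would invoke \cref{lem:appC_phase1}, which holds under \cref{assump:appC_phase1} on $[0,T_{\mathrm I}]$ and gives the differential inequality \eqref{eq:appC_phase1_diff_ineq}:
\[
\dot N_\varepsilon(t)\le -\lambda_h\bigl(N_\varepsilon(t)-\varepsilon\bigr)+B_0 .
\]
Claim (i) is then immediate. If $N_\varepsilon(t)>\varepsilon+B_0/\lambda_h$, the right-hand side is strictly negative, because $\lambda_h>0$.

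For claim (ii), I would use the integrated comparison bound \eqref{eq:appC_phase1_solution}. Put $A:=N_\varepsilon(0)-\varepsilon-B_0/\lambda_h$, which is positive whenever $t_L$ is well defined. Rewriting \eqref{eq:appC_phase1_solution} gives
\[
N_\varepsilon(t)-\varepsilon-\tfrac{B_0}{\lambda_h}\le A\,e^{-\lambda_h t}.
\]
Substituting $t=t_L$, which by its definition satisfies $A e^{-\lambda_h t_L}=L-\varepsilon-B_0/\lambda_h$, yields $N_\varepsilon(t_L)\le L$. This step needs $t_L\le T_{\mathrm I}$ so that the bound is valid at $t_L$; that hypothesis is given. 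This reproduces the Phase I corollary, and I would simply cite it.

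For part (2), I would quote \cref{thm:appC_phase2} directly. Its hypotheses are exactly \cref{assump:appC_phase2} on $[\tau,\tau+H]$, and its conclusion is the slope bound $\dot N_\varepsilon\ge\lambda_h\eta$ and hence strict increase. Behind that theorem are \cref{lem:D-negative-full}, which gives $D_\varepsilon\le-\mu_D^{\mathrm{eff}}$, the drift identity \cref{lem:appC_drift_identity} combined with $C_\varepsilon\le\lambda_h N_\varepsilon$, and the velocity bound \cref{lem:appC_velocity}, which keeps $N_\varepsilon\le N_*-\eta$ throughout the window.

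The only delicate point is bookkeeping. The constants $\kappa_0$, $M_0$/$M$ and $r_0$ in the Phase I and Phase II assumptions are separate and need not coincide, and the two phases need not overlap. The theorem asserts no continuity between them: it claims only the two certified behaviours, each on its own window. I would make this explicit so the result is not read as proving that the trajectory actually enters the Phase II regime. With that noted, the proof is a two-line citation for each part.
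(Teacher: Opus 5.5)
Your proposal is correct and follows the paper's proof: the paper also gets part (1) from \cref{lem:appC_phase1} and its hitting-time corollary, and part (2) by applying \cref{thm:appC_phase2} to the window $[\tau,\tau+H]$, so your explicit derivations of (i) and (ii) just spell out what the paper leaves as a citation. One small point that both you and the paper leave implicit is that evaluating the integrated bound at $t=t_L$ also requires $t_L\ge 0$, i.e.\ $L\le N_\varepsilon(0)$; when $L>N_\varepsilon(0)$, $t_L<0$ falls outside the time domain, and part (ii) holds in the trivial sense that $N_\varepsilon(0)<L$ already.
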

\begin{proof}
Part (1) is exactly \cref{lem:appC_phase1}.
Part (2) is exactly \cref{thm:appC_phase2} applied to the (hypothetical) window $[\tau,\tau+H]$. \qedhere
\end{proof}
\begin{remark}[How to read the certificate]
This theorem does \emph{not} claim that a single continuous interval exhibits a full U-shape without further information.
Instead it provides two independent, checkable guarantees along a training trajectory:
(i) an \emph{early-time sparsification certificate} that forces $N_\varepsilon$ to decrease and reach any prescribed
reasonable level $L>\varepsilon+\frac{B_0}{\lambda_h}$ within the Phase I validity horizon, and
(ii) a \emph{densification trigger}: whenever (at any later time) the trajectory enters a finite window on which
the persistent-hardness and alignment conditions of \cref{assump:appC_phase2} hold, $N_\varepsilon$ must increase
throughout that window at rate at least $\lambda_h\eta$.
In this sense, a U-shape pattern arises whenever the dynamics first satisfy the Phase I conditions and later
\emph{activates} a Phase II window.
\end{remark}

\section{RQ3: How can this sparsity signal be practically leveraged to enhance model reasoning capabilities? }

Based on our insight that the last hidden representation sparsity serves as a reliable rule for task complexity, we propose a novel few-shot example selection strategy for LLM reasoning: \textit{\textbf{Sparsity-Guided Curriculum In-Context Learning (SG-ICL)}}. While standard in-context learning often selects demonstrations randomly or relies solely on semantic similarity, such as \citep{ma2023query} or Auto-CoT~\citep{zhang2023automatic}, it ignores the cognitive load required to process them. We argue that an effective prompt should act as a \textit{developmental curriculum}, guiding the model from rote, simple pattern matching to complex reasoning.

First, we assess the difficulty of all candidate examples in the demonstration pool. We compute the sparsity score $S(x)$ for each example $x$ using the $L_1$ norm of its last hidden state $\mathbf{h}_L$. We then sort the examples and group them into $K$ distinct difficulty levels (or bins) $\mathcal{B}_1, \dots, \mathcal{B}_K$:
\begin{equation}
\small
    S(x) = \|\mathbf{h}_L(x)\|_1, 
    \text{\scriptsize with } x \in \mathcal{B}_k \Leftrightarrow  \tau_{k-1} \le S(x) < \tau_k
\end{equation}
Here, $\tau$ represents the threshold boundaries derived from the percentiles of the sparsity distribution, allowing us to categorize examples from ``Easy'' to ``Hard'' systematically. During the inference phase for a specific test query $x_{query}$, our strategy employs a \textbf{\textit{dual-criteria}} selection process:

\begin{enumerate}[leftmargin=*, topsep=2pt, itemsep=2pt]
    \item \textbf{Semantic Filtering:} We first retrieve a candidate set $\mathcal{N}_{sem}$ consisting of the top-$N$ examples most semantically similar to $x_{query}$ by leveraging Sentence-BERT~\citep{reimers2019sentence} to compute dense vector embeddings.
    \item \textbf{Difficulty Matching:} We then calculate the sparsity $S(x_{query})$ of the incoming query $x_{query}$ to identify its corresponding difficulty level $\mathcal{B}_{target}$. From $\mathcal{N}_{sem}$, we specifically select $k$ examples that not only maintain high semantic similarity but also align with the target difficulty level (i.e., falling within $\mathcal{B}_{target}$).
\end{enumerate}
This process ensures the few-shot prompt is both contextually relevant and cognitively aligned with the query's content and complexity.

\begin{figure}[t] 
   \centering
        \includegraphics[width=0.4\linewidth]{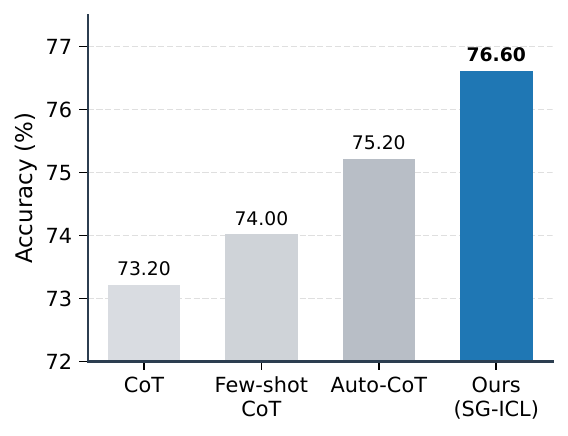}

    \caption{\textbf{Performance Comparison of Reasoning Strategies.} Our proposed Sparsity-Guided Curriculum In-Context Learning (SG-ICL) achieves an accuracy of 76.60\% on MATH-500 with Qwen2.5-7B, substantially outperforming standard CoT baselines (zero-shot and few-shot) as well as the strong Auto-CoT baseline (75.20\%).}
    
    \label{fig:main_results_comparison} 
    \vspace{-1em}
\end{figure}
To validate this approach, we conducted extensive experiments on the \exampleb{Qwen2.5-7B} model. Empirical results on the MATH-500 dataset demonstrate that our Sparsity-Guided Curriculum strategy significantly outperforms strong baselines, including Auto-CoT and random selection, proving that incorporating Curriculum into the retrieval process yields substantial gains in reasoning accuracy, as illustrated in \autoref{fig:main_results_comparison}. Furthermore, we extended the application of our sparsity metric beyond inference-time retrieval to the training phase, organizing training data from easy to hard.

\section{Conclusion}

In this work, we have established a fundamental connection between the internal representation geometry of Large Language Models and the difficulty of the tasks they face. Through a rigorous analysis across diverse models, benchmarks, and OOD settings, we validated the phenomenon that \textbf{\textit{``the farther the shift, the sparser the representation.''}} Our findings reveal that this sparsification is not a random artifact but a consistent, adaptive mechanism localized primarily in the final transformer layers, acting as a selective filter to stabilize reasoning under uncertainty. Ultimately, our study bridges the gap between mechanistic interpretability and the reasoning domain, offering a new perspective on how LLMs internalize complexity. We hope this work inspires future research like sparsity-aware training objectives.

\section{Acknowledgement}
I am grateful to Fei Sun, Jinman Zhao, Mengru Wang, and Zirui Liu for many helpful discussions that improved this paper. And I would like to especially thank Zirui Liu, Minghao Guo, and Xi Zhu for their encouragement and support throughout the process.

\bibliographystyle{acl_natbib}
\bibliography{ref}


\appendix
\section{Appendix}
\label{sec:appendix}
\begin{quote}
\emph{``What I cannot create, I do not understand.''}

\hfill--- Richard P.~Feynman\footnote{This quotation is commonly attributed to Richard P. Feynman and was reportedly written on his office blackboard at the time of his death in 1988; see Smithsonian Magazine, \href{https://www.smithsonianmag.com/smart-news/everyone-can-learn-physics-nobel-prizewinner-richard-feynman-180956909/}{\it Learn Physics From Nobel Prizewinner Richard Feynman for Free}.}
\end{quote}

\subsection{Related Work}
\subsubsection{Sparsity in Deep Neural Network}
Sparsity in deep learning is traditionally studied through two distinct lenses: computational efficiency and representational disentanglement~\citep{hoefler2021sparsity, cheng2024survey}.

From an efficiency perspective, weight sparsity aims to reduce model size and inference cost. Pioneering work in network pruning demonstrated that significant portions of parameters can be removed without performance degradation~\citep{han2015learning}. This line of inquiry culminated in the \textit{Lottery Ticket Hypothesis}, which posits that dense networks contain sparse, trainable subnetworks capable of matching the original model's accuracy~\citep{frankle2018the}. This hypothesis was subsequently extended to Transformer architectures, particularly BERT. Research indicates that pre-trained BERT models contain matching subnetworks at extreme sparsity levels (e.g., 40\%--90\%) that perform comparably on downstream tasks~\citep{chen2020lottery, prasanna2020bert, gordon2020compressing}.

Beyond imposed sparsity, recent empirical studies reveal that LLMs exhibit a high degree of \textit{intrinsic activation sparsity}. Despite being trained as dense networks, the activations within Transformer Feed-Forward Networks (FFNs) are highly sparse, with only a small fraction of neurons firing for any given input token. This behavior, often termed the ``Lazy Neuron'' phenomenon, becomes more pronounced as model scale increases~\citep{li2023the}. 

Further analysis characterizes this as \textit{contextual sparsity}: specific inputs activate predictable, sparse sub-graphs of the network, suggesting that LLMs implicitly learn modular structures without explicit architectural constraints~\citep{liu2023deja, jin2025massive}. Further analysis characterizes this as \textit{contextual sparsity}: specific inputs activate predictable, sparse sub-graphs of the network, suggesting that LLMs implicitly learn modular structures without explicit architectural constraints~\citep{liu2023deja}. \citep{zhao2025on, zhao2025geometry, zhao2024implicit} view language modeling as a classification problem of ``predicting the distribution of the next word for each context,'' the sparsity pattern of the language data strongly determines the geometry of the trained representation. They found that Next Token Prediction (NTP) training implicitly favors a ``sparse + low-rank'' structure in the logit space.

\subsubsection{LLM Curriculum Reasoning:} 
Curriculum Learning (CL), introduced by \citep{Bengio2009CurriculumL}, is a training strategy that exposes models to examples in an easy-to-hard progression, rather than in random order~\citep{wang2021survey}. Early NLP applications likewise leveraged CL principles. In unsupervised grammar induction, a dependency parser was trained on short, simple sentences first and incrementally included longer, more complex sentences, yielding improved parsing accuracy~\citep{spitkovsky2010baby}. With the advent of deep learning, Curriculum Learning (CL) was widely adopted in Neural Machine Translation (NMT), where examples were typically sorted by sentence length or word rarity to accelerate convergence \citep{platanios2019competence, xu-etal-2020-curriculum}. In the era of Pre-trained Language Models (PLMs) such as BERT, CL strategies shifted towards optimizing data scheduling to improve sample efficiency during the massive pre-training phase~\citep{nagatsuka-etal-2021-pre, lee2022efficient}.

Most recently, the rise of LLMs has reinvigorated CL, particularly in enhancing complex reasoning capabilities and instruction following \citet{wang2025dump}. Unlike traditional methods relying on superficial metrics (e.g., length), modern CL for LLMs focuses on \textit{semantic complexity} and \textit{reasoning depth}. For instance, \citet{xu2024wizardlm} proposed \textit{Evol-Instruct}, a method that incrementally rewrites instructions to increase difficulty, effectively creating a curriculum for instruction tuning. Similarly, \citet{mukherjee2023orca} demonstrated that learning from "explanation traces" in a progressive manner allows smaller models to imitate the reasoning processes of larger foundation models. Furthermore, in mathematical and logical reasoning tasks, CL has been utilized to transition models from simple single-step problems to multi-step reasoning chains, significantly mitigating the difficulty of solving complex problems directly \citep{luo2025wizardmath}.

\subsubsection{Interpretability in LLM Reasoning}

The interpretation of reasoning in Large Language Models (LLMs) has evolved from analyzing surface-level generations to probing the internal causal mechanisms that drive them~\citep{zhao2024explainability, zhao2024towards, wang2024knowledge}.

A natural starting point for interpretability is to treat generated explanations---most prominently CoT---as a proxy for the model's decision-making process~\citep{wei2022chain}. However, recent scholarship challenges this assumption, characterizing CoT as often unfaithful. Evidence suggests that intermediate reasoning steps can be post-hoc rationalizations that do not causally determine the final prediction~\citep{turpin2023language}, and increasing reasoning burden does not necessarily equate to computational transparency~\citep{lanham2023measuring, arcuschin2025chainofthought, jin2025disentangling}. Consequently, the field has shifted focus from verifying \textit{plausible explanations} to establishing \textit{causal accounts} of model behavior.

To bridge the gap between model inputs and outputs, mechanistic interpretability develops tools that aim to isolate components that are necessary and sufficient for particular behaviors, and to characterize how information is represented and transformed across the network~\citep{wangInterpretabilityWildCircuit2022, gantla2025exploring}. A prominent line of work uses causal interventions: most notably, causal tracing and activation patching to perform controlled swaps between activations induced by “clean” versus “corrupted” inputs, thereby localizing where task-relevant signals reside and empirically testing how those signals propagate through layers and attention pathways~\citep{wangInterpretabilityWildCircuit2022, zhao2025beyond, meng2022locating, jin2025exploring}. Complementary approaches focus less on direct causal disruption and more on interpretability via readouts: representation-decoding frameworks such as Patchscopes map hidden states into structured textual probes, offering a unified interface for inspecting intermediate computations under a variety of intervention patterns~\citep{ghandeharioun2024patchscopes}. Related “lens” methods, including the Logit Lens, project intermediate activations into the vocabulary (logit) space to track how candidate outputs become linearly recoverable across depth; while lightweight and often used diagnostically, these readouts are frequently most informative when paired with causal tests that distinguish genuine computation from superficial decodability~\citep{nostalgebraist2020logit, wendler2024llamas, geva2021transformer, belrose2023eliciting}.

At a finer level of analysis, mechanistic interpretability work tries to break down a model’s behavior into concrete computational pathways (“circuits”) and smaller features that can be followed as they appear and change across different inputs~\citep{elhage2021framework, elhage2022toy}. A central thread in this direction explains in-context learning through specific attention-head motifs: most notably induction heads that appear to implement a concrete pattern-copying mechanism~\citep{olsson2022context}. To move beyond correlational stories, subsequent work has introduced more stringent validation procedures, such as causal scrubbing and automated circuit discovery (ACDC), which use behavior-preserving resampling and controlled interventions to test whether a proposed circuit is genuinely explanatory rather than an artifact of spurious co-activation~\citep{conmy2023towards}.

In parallel, feature-centric approaches address the pervasive polysemanticity of dense activations by learning sparse decompositions: Sparse Autoencoders (SAEs) attempt to “unsuperpose” representations into more interpretable directions that are often closer to monosemantic features, enabling a complementary level of analysis that is not limited to predefined motifs like individual heads~\citep{huben2024sparse, bricken2023monosemanticity, he2025sae, han2025sage}. 

Overall, the literature increasingly reflects a convergence in methodology, pairing explanatory narratives with rigorous causal tests and feature-level decomposition in order to debug and interpret reasoning processes in modern LLMs.

\subsubsection{Learning Dynamics}
A complementary line of work studies how optimization shapes representations over the course of training. Classical results on the \emph{implicit bias} of gradient-based optimization show that, even without explicit regularization, gradient descent on separable classification problems converges in direction to max-margin solutions, linking training dynamics to margin growth and norm evolution \citep{soudry2018implicit, Ji2019TheIB}; related analyses extend these behaviors to stochastic gradient descent \citep{nacson2019sgd}.

More recently, phase-transition-like learning phenomena such as \emph{grokking} highlight that models can move from memorization to generalization late in training, motivating mechanistic accounts in terms of regime changes in training dynamics and feature learning \citep{kumar2024grokking, demoss2025complexity}. Complementary to this line, \citep{ren2025learningdynamics} propose a learning-dynamics framework for \emph{LLM fine-tuning} (including SFT and preference optimization), characterizing how learning on specific examples influences predictions on others and using this lens to explain several counter-intuitive fine-tuning behaviors. These perspectives motivate using simple representation-level statistics (e.g., norms, energy concentration) as probes of how effective features are selected and amplified throughout training.

\subsection{All Metrics}
\label{apps:metrics_and_models}
\subsubsection{Sparsity Metrics}
We analyze the sparsity of the activation vector $\mathbf{h} \in \mathbb{R}^{d}$ extracted from the last hidden state of the model. Since raw activations are rarely absolute zeros in floating-point representations, we utilize these metrics that capture the \textit{effective sparsity} (i.e., the peakedness of the distribution).

\paragraph{Hoyer Sparsity.} 
Derived from the relationship between the $\ell_1$ and $\ell_2$ Norms, Hoyer sparsity measures how close a vector is to being sparse (containing mostly zeros). For an activation vector $\mathbf{h}$ of dimension $d$:
\begin{equation}
    \text{Hoyer}(\mathbf{h}) = \frac{\sqrt{d} - \frac{\|\mathbf{h}\|_1}{\|\mathbf{h}\|_2}}{\sqrt{d} - 1}
\end{equation}
where $\|\mathbf{h}\|_1$ is the $\ell_1$ norm and $\|\mathbf{h}\|_2$ is the spectral norm.  A value of \textbf{0} indicates a dense, uniform vector ($h_i = c, \forall i$) and a value of \textbf{1} indicates maximum sparsity (only one component is non-zero).

\paragraph{Gini Index.} 
Originally used in economics to measure income inequality, the Gini index here quantifies the inequality of neural activation strengths. High inequality implies that a few neurons dominate the representation while the majority remain suppressed (sparse).
\begin{equation}
    \text{Gini}(\mathbf{h}) = \frac{\sum_{i=1}^{d} (2i - d - 1) |h_{(i)}|}{d \sum_{i=1}^{d} |h_{(i)}|}
\end{equation}
where $|h_{(1)}| \le |h_{(2)}| \le \dots \le |h_{(d)}|$ are the sorted absolute values of the activations. A higher Gini index correlates with a sparser representation.

\paragraph{Effective Rank.}
Effective rank quantifies how concentrated the activation energy is across dimensions.
Given $\mathbf{h}\in\mathbb{R}^d$, define a distribution over dimensions by
\begin{equation}
    p_i=\frac{h_i^2}{\sum_{j=1}^{d} h_j^2},\qquad i=1,\dots,d,
\end{equation}
and entropy $H(\mathbf{p})=-\sum_{i=1}^{d} p_i\log(p_i+\epsilon)$.
We then compute
\begin{equation}
    \text{EffRank}(\mathbf{h})=\frac{\exp(H(\mathbf{p}))}{d}\in(0,1].
\end{equation}
Larger values indicate more evenly spread energy (denser representations), while smaller values indicate stronger
concentration on a few dimensions (sparser representations).

\subsubsection{Model Parameter Metrics}
We validate our findings across a range of transformer model sizes, ranging from 0.3M to 1.3B parameters, using the Llama architecture, we need to know how to compute the model Parameter size  $N_{params}$. As detailed in ~\autoref{tab:maintable}, the model complexity is controlled by the hidden size ($d_{model}$), the intermediate MLP size ($d_{mlp}$), the number of attention heads ($H$), and the network depth ($L$). Assuming a standard Llama-based architecture (utilizing SwiGLU activation and Rotary Embeddings), the approximate parameter count $N_{params}$ for the non-embedding layers is calculated as:
\begin{equation}
    N_{params} \approx L \cdot \left( \underbrace{4 d_{model}^2}_{\text{Attention}} + \underbrace{3 d_{model} \cdot d_{mlp}}_{\text{FeedForward (SwiGLU)}} \right)
\end{equation}
where the attention block consists of $W_Q$, $W_K$, $W_V$, $W_O$, contributing $4 d_{model}^2$. The FeedForward block (SwiGLU) involves three projections (gate, up, down), contributing $3 d_{model} \cdot d_{mlp}$.

\begin{figure}[h]
    \centering
    \begin{tikzpicture}
        \begin{axis}[
            width=0.8\linewidth,   
            height=6cm,
            xlabel={\textbf{Model Size}},
            ylabel={\textbf{Count (\#)}},
            title={\textbf{Model Depth \& Attention Heads Scaling}}, 
            xtick={1,2,3,4,5,6,7,8,9,10,11,12,13}, 
            xticklabels={0.3M, 0.7M, 1.3M, 2.6M, 5.3M, 10.5M, 21M, 42M, 83.9M, 167.8M, 335.6M, 671.2M, 1342.4M},
            xticklabel style={rotate=45, anchor=east, font=\footnotesize},
            ymin=0, ymax=35, 
            ylabel style={font=\bfseries},
            grid=major,
            grid style={dashed, gray!30},
            mark size=2.5pt, 
            line width=1.2pt,
            legend pos=north west, 
            legend style={font=\small, fill=white, draw=gray!20, fill opacity=0.9}
        ]

            \addplot[color=DeepRed, mark=triangle*, mark options={fill=DeepRed}] coordinates {
                (1, 2) (2, 4) (3, 2) (4, 4) (5, 8) (6, 4) (7, 8) (8, 16)
                (9, 8) (10, 16) (11, 32) (12, 16) (13, 32)
            };
            \addlegendentry{\# Layers (Depth)}

            \addplot[color=GoldDetail, mark=diamond*, mark options={fill=GoldDetail}] coordinates {
                (1, 2) (2, 2) (3, 4) (4, 4) (5, 4) (6, 8) (7, 8) (8, 8)
                (9, 16) (10, 16) (11, 16) (12, 32) (13, 32)
            };
            \addlegendentry{\# Heads (Parallelism)}

        \end{axis}
    \end{tikzpicture}
    \caption{Scaling of computational depth (Layers) and parallel attention capacities (Heads) across Llama models. Both metrics show a step-wise increase consistent with the model's total parameter growth.}
    \label{fig:depth_heads_scaling}
\end{figure}

\autoref{fig:depth_heads_scaling} summarizes how $L$ (layers) and $H$ (attention heads) scale with model size in our
Llama sweep. Both increase in discrete steps as the parameter count grows.

\subsection{Reasoning Prompt}
\label{apps:rp}
The following demonstrates the implicit reasoning prompts designed to address knowledge conflicts, mathematical reasoning, and multiple-choice questions, and related tasks:
\begin{tcolorbox}[colback=gray!5!white,colframe=black!75!white,title=Example Reasoning Prompt 1]
\textbf{System:} You are a helpful assistant. For each multiple choice question, you must answer with ONLY a single letter: A, B, C, D... Do not explain or add any other text. 
\medskip

\textbf{User:} \{prompt\_text\}. Answer the question with only the letter:
\end{tcolorbox}

\begin{tcolorbox}[colback=gray!5!white,colframe=black!75!white,title=Example Reasoning Prompt 2]
\textbf{System:} You are a helpful Knowledge Conflict assistant. You should judge the knowledge in the sentence is correct or not. You should answer with ONLY a single letter: True or False. Do not explain or add any other text.
\medskip

\textbf{User:} \{prompt\_text\}. Answer the question with only the letter:
\end{tcolorbox}

\begin{tcolorbox}[colback=gray!5!white,colframe=black!75!white,title=Example Reasoning Prompt 3]
\textbf{System:} You are a helpful math assistant. Provide the final answer in the end.
\medskip

\textbf{User:} \{prompt\_text\}. Please provide the final answer in the end.
\end{tcolorbox}

\subsection{More Details about Knowledge Conflict and different types of In-context Learning}
\label{apps:kc}

\subsubsection{Details about Knowledge Conflict Dataset}
This dataset ~\cite{wang2024knowledge} operates on a \emph{Knowledge Injection and Interference} logic through a two step. First, it establishes Parametric Knowledge by defining ground-truth facts the model acquired during pre-training, such as the standard definition of a programming variable. Second, it performs Conflict Generation using a ``shuffling'' method, where the core entity is replaced with a phonetically or conceptually similar but semantically distinct term (e.g., substituting \emph{variable} with \emph{random variable}). This creates a strategic contradiction between the provided context and the model's internal common sense. 

This dataset is particularly suitable for our research because it allows for the precise control of Out-of-Distribution (OOD) intensity by manipulating the degree of external knowledge conflict. As we posit that adversarial distractors can transform an original ``easy'' input into a ``conflict'' perturbed variant, this framework provides a quantifiable way to increase task difficulty. By introducing these specific knowledge conflicts as distractor items, we can systematically simulate OOD scenarios where the model's internal parametric memory directly clashes with contradictory external evidence.

\begin{figure}
\small
\begin{tcolorbox}
[colback=gray!5!white,colframe=black!75!white,title=Knowledge Conflict Dataset Entry 1]
\textbf{Main Entity:} Variable \\
\textbf{Conflict Method:} Shuffling (Variable $\rightarrow$ Random variable)

\medskip
\hrule
\medskip

\textbf{Parametric Knowledge (Ground Truth):} \\
{\small In programming, a variable is a named container that stores a value. It can hold different data types, such as numbers or strings, and its value can be changed during the execution of a program.}

\medskip

\textbf{Conflicting Knowledge (Injected Noise):} \\
{\small In programming, a \textbf{Random variable} is a named container that stores a value. It can hold different data types, such as numbers or strings, and its value can be changed during the execution of a program.}

\medskip
\hrule
\medskip

\textbf{Generated Questions:}
What is a named container that stores a value and can be changed during the execution of a program?
\end{tcolorbox}
\end{figure}

\begin{figure}
\small
\begin{tcolorbox}[colback=gray!5!white,colframe=black!75!white,title=Knowledge Conflict Dataset Entry 2]
\textbf{Main Entity:} Equation \\
\textbf{Conflict Method:} Substitution (two $\rightarrow$ 9)

\medskip
\hrule
\medskip

\textbf{Parametric Knowledge (Ground Truth):} \\
{\small An equation is a mathematical statement that shows the equality of \textbf{two} expressions. It typically consists of variables, constants, and mathematical operations.}

\medskip

\textbf{Conflicting Knowledge (Injected Noise):} \\
{\small An equation is a mathematical statement that shows the equality of \textbf{\textcolor{GoldDetail}{9}} expressions. It typically consists of variables, constants, and mathematical operations.}

\medskip
\hrule
\medskip

\textbf{Generated Questions:}
How many expressions does an equation typically show the equality of?
\end{tcolorbox}
\end{figure}

\subsubsection{More Results on Larger Model}
\begin{figure*}[t]
    \centering 
    
    \begin{subfigure}[b]{0.44\textwidth} 
        \centering
        \includegraphics[width=\textwidth]{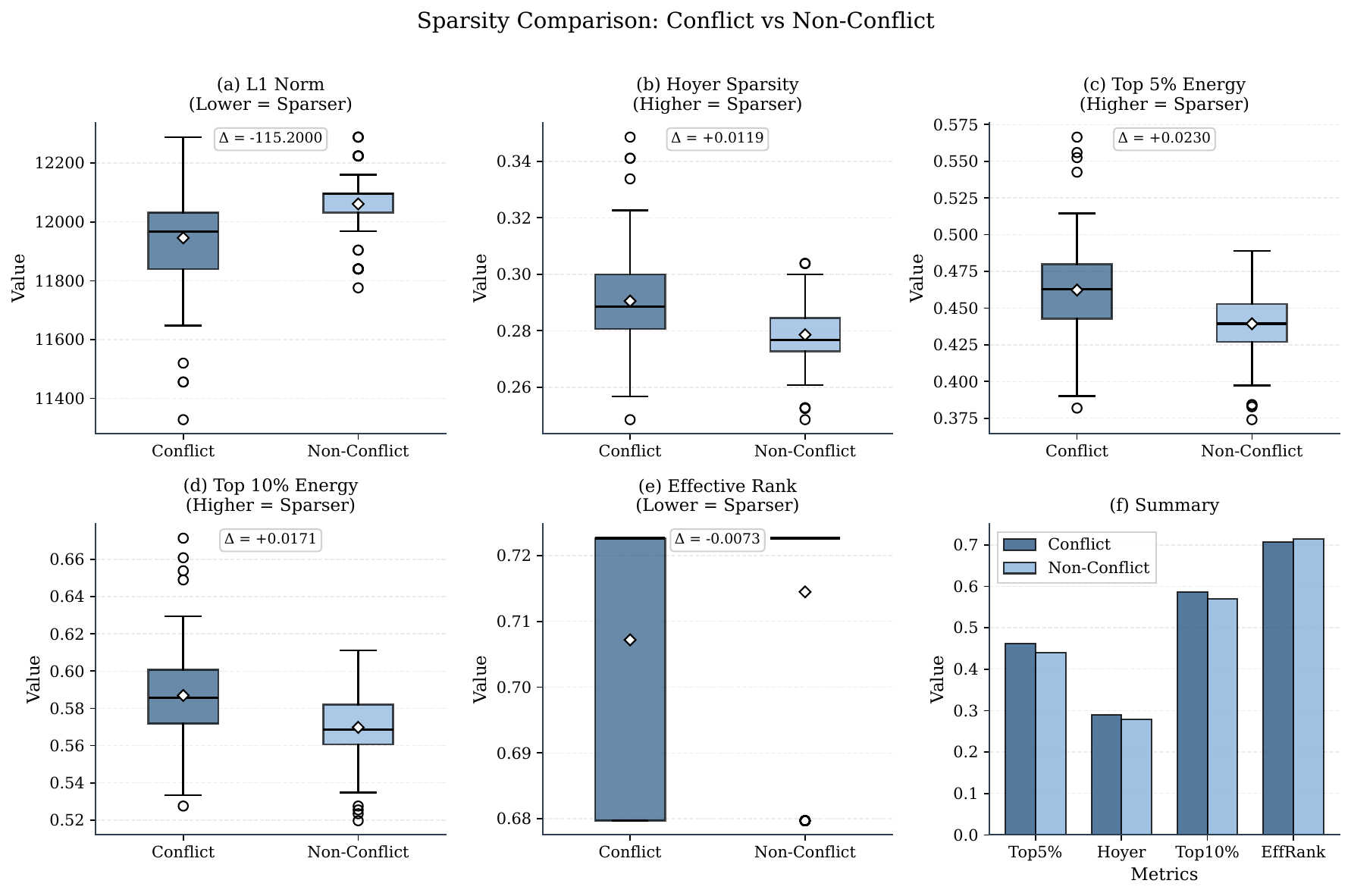}
        \caption{Llama3.1-70B}
        \label{fig:llama70b}
    \end{subfigure}
    
    \vspace{0.3cm} 
    
    \begin{subfigure}[b]{0.44\textwidth}
        \centering
        \includegraphics[width=\textwidth]{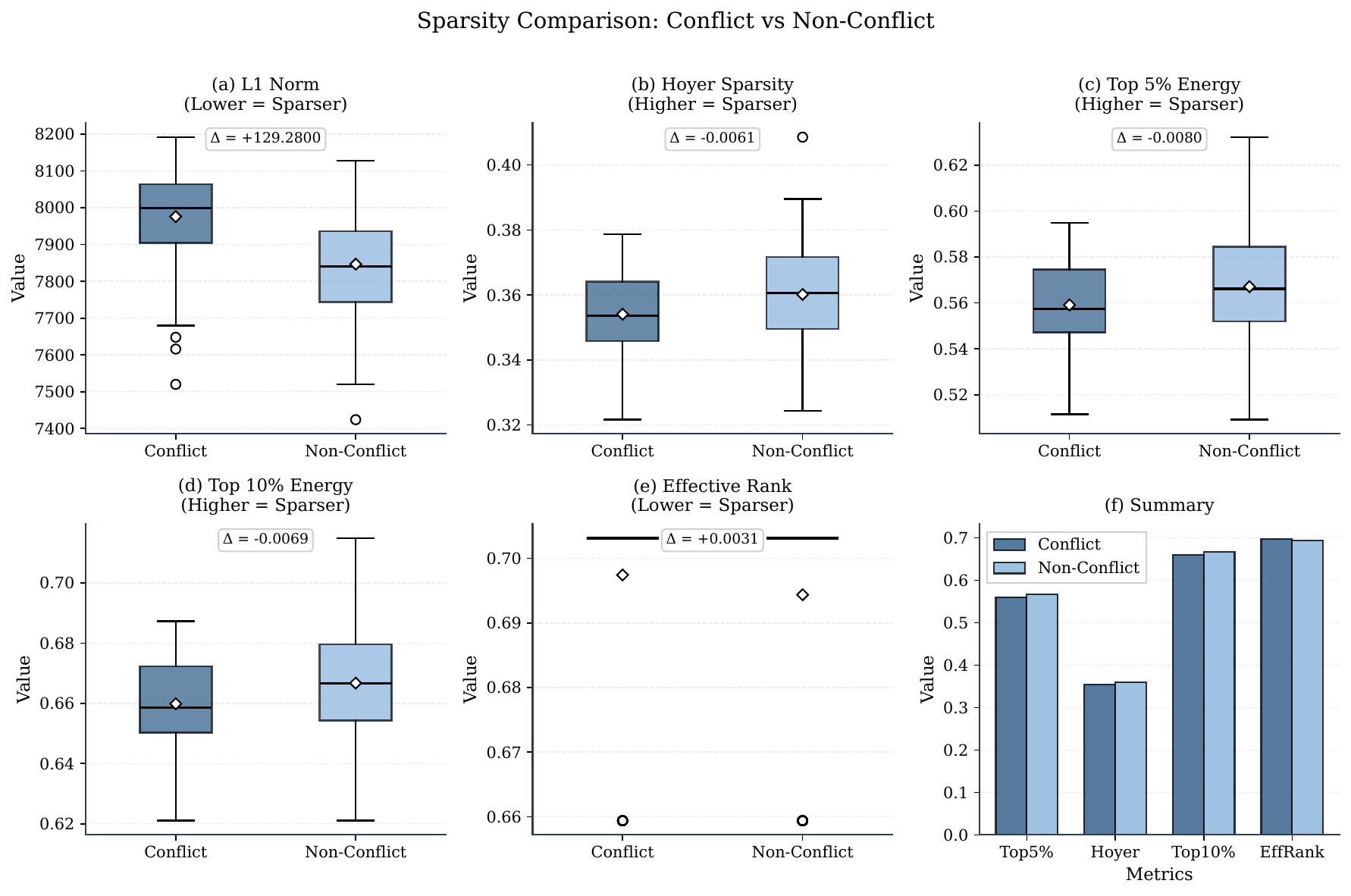}
        \caption{Qwen2.5-32B}
        \label{fig:qwen32b}
    \end{subfigure}
    \hfill 
    \begin{subfigure}[b]{0.44\textwidth}
        \centering
        \includegraphics[width=\textwidth]{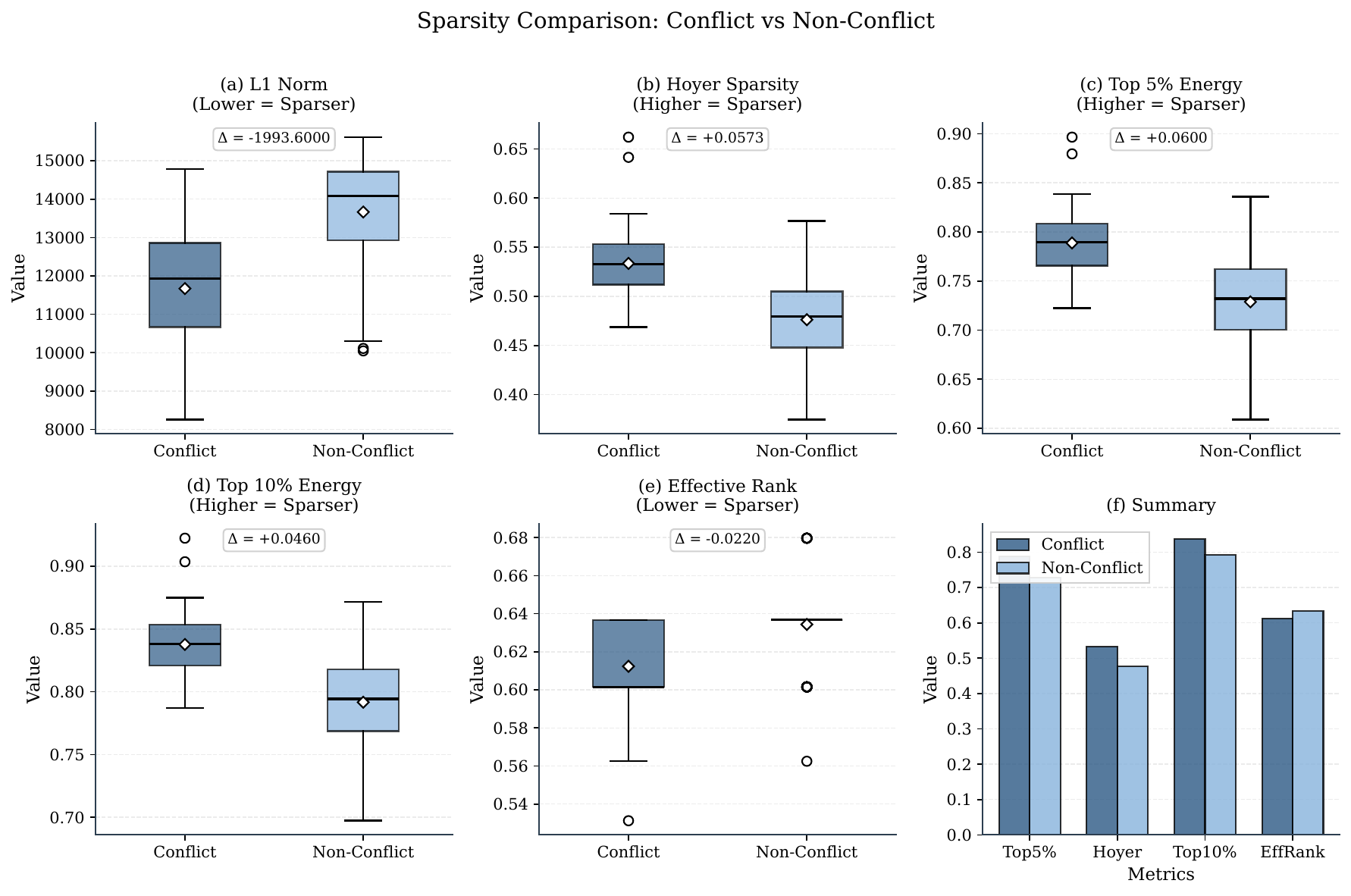}
        \caption{Qwen2.5-70B}
        \label{fig:qwen70b}
    \end{subfigure}
    
    \caption{\textbf{Sparsity analysis across different models.} (a) Visualizes the sparsity pattern of Llama3.1-70B. (b) and (c) compare the Qwen2.5 series models (32B and 70B).}
    \label{fig:three_models_comparison_v2}
\end{figure*}

To verify the universality of our hypothesis that OOD samples trigger higher activation sparsity in the last hidden layer, we extended our evaluation to a diverse set of large-scale foundation models and used the same knowledge conflict dataset as ~\cite{wang2024knowledge}. Specifically, we selected models varying in both architecture and parameter size, including \exampleb{Qwen2.5-32B}, \exampleb{Qwen2.5-70B}, and \exampleb{Llama-3.1-70B}. 

As illustrated in ~\autoref{fig:three_models_comparison_v2}, the experimental results are highly consistent across all tested models. We observe that samples involving knowledge conflicts (a representative form of OOD data) systematically induce significantly sparser activation patterns compared to consistent (in-distribution) samples. This phenomenon remains robust regardless of the model's scaling or architectural design. For instance, even in the 70B parameter language models, the activation density drops sharply when the model encounters conflicting information. These findings strongly corroborate our core conclusion: activation sparsity serves as an architecture-agnostic proxy for sample difficulty and OOD status, validating the effectiveness of using sparsity for data ranking.

\subsubsection{Different Kind of In-Context Learning}
Following the experimental framework established by ~\cite{Niu2025LlamaSL}, we devised multiple In-Context Learning (ICL) strategies to comprehensively evaluate the model's reasoning capabilities under different prompting conditions. Specifically, we implemented four distinct prompting variants, taken from ~\cite{Niu2025LlamaSL}, as shown in \autoref{tab:prompt_settings}.

\begin{table*}
  \centering\small
  \setlength\tabcolsep{4pt}
  \begin{tabular}{l|p{4.2cm}|p{5.9cm}|cccccc} \toprule
    Context Setting & Context Prompt & Query Prompt & \multicolumn{3}{c|}{\dstr} & \multicolumn{3}{c}{\gold} \\
    \midrule
    Related & \examplegs{On the inside, bananas are white.} & \examplegs{What color are mangoes on the inside? They are} & \multicolumn{3}{c|}{\examplegs{white}} & \multicolumn{3}{c}{\examplegs{\examplegs{orange}}} \\
    Irrelevant & \examplegs{The capital of Canada is Ottawa.} & \examplegs{What color are mangoes on the inside? They are} & \multicolumn{3}{c|}{\examplegs{Ottawa}} & \multicolumn{3}{c}{\examplegs{orange}} \\
    Random & \examplegs{Promotion} & \examplegs{What color are mangoes on the inside? They are} & \multicolumn{3}{c|}{\examplegs{Promotion}} & \multicolumn{3}{c}{\examplegs{orange}} \\
    \midrule
    \multicolumn{3}{c}{\it Distraction over Counterfactual Context} & \multicolumn{2}{|c|}{\lie} & \multicolumn{2}{c|}{\dstr} & \multicolumn{2}{c}{\gold} \\
    \midrule
    Counterfactual & \examplegs{On the inside, bananas are green.} & \examplegs{What color are mangoes on the inside? They are} & \multicolumn{2}{c|}{\examplegs{green}} & \multicolumn{2}{c|}{\examplegs{white}} & \multicolumn{2}{c}{\examplegs{orange}} \\
    \bottomrule
  \end{tabular}
  \caption{Prompt Setup \citep{Niu2025LlamaSL}. We follow the exact same prompt setup as \citet{Niu2025LlamaSL}. The emojis represent the target tokens: \lie: counterfactual; \dstr: distracting; \gold: correct.}
  \label{tab:prompt_settings}
\end{table*}

\begin{table*}[h]
    \centering
    \small
    \setlength{\tabcolsep}{6pt} 
    \begin{tabular}{lcccc}
        \toprule
        \textbf{Context Setting} & \textbf{$L_1$ Norm} & \textbf{Top-5\% Energy} & \textbf{Top-10\% Energy} & \textbf{Eff. Rank} \\
        \midrule
        Related & 5066.96 & 0.5723 & 0.6715 & 0.6715 \\
        Irrelevant & 5072.15 & 0.5816 & 0.6772 & 0.6764 \\
        Random & 3689.13 & 0.7600 & 0.8203 & 0.5987 \\
        Counterfactual & 4938.74 & 0.5978 & 0.6937 & 0.6680 \\
        \bottomrule
    \end{tabular}
    \caption{Sparsity metrics averaged over samples across different context settings in Qwen2.5-1.5B. }
    \label{tab:sparsity_stats}
\end{table*}

We categorize these prompting strategies into two primary dimensions: \textit{Contextual Distraction} and \textit{Counterfactual Conflict}. In settings of Contextual Distraction: comprising Related, Irrelevant, and Random contexts, the input provides truthful but potentially noisy information, aiming to measure the model's susceptibility to blindly copying context tokens versus accessing its internal parametric knowledge (\gold). Specifically, the `Related' context tests the capability to distinguish between semantically similar attributes within a domain, whereas `Irrelevant' and `Random' contexts serve as baselines to assess the model's resilience against pure recency bias and token repetition. Conversely, the Counterfactual Conflict setting introduces a direct contradiction between the context and ground truth, creating a tripartite competition in the probability space. Following this categorization, we further investigated the sparsity patterns of the model's last hidden state representations under these distinct settings as ~\autoref{tab:sparsity_stats} shows.

Our results provide empirical support for the hypothesis that inputs deviating further from the pre-training distribution (OOD) induce higher activation sparsity. We observe a clear gradient of sparsity corresponding to the degree of distribution shift. These metrics collectively validate that sparsity is positively correlated with the OOD nature of the input. As the context shifts from natural language (Related) to semantic conflict (Counterfactual) and finally to complete noise (Random), the model's last hidden state representations become increasingly sparse and low-rank.

As shown in \autoref{tab:sparsity_stats}, our results provide empirical support for the hypothesis that inputs deviating further from the pre-training distribution (OOD) induce higher activation sparsity. The \texttt{Random} setting, representing the most severe OOD scenario characterized by semantic and structural noise, exhibits distinct hyper-sparsity characteristics. It records the lowest activation magnitude ($L_1 \approx 3689.13$) while simultaneously showing the highest energy concentration (Top-5\% Energy $\approx 0.7600$), indicating that nonsensical inputs fail to activate broad semantic circuits, thereby concentrating signal flow into a narrow subset of neurons and collapsing the representation into a lower-dimensional subspace as evidenced by the significantly lower Effective Rank ($0.5987$). A more subtle trend is observed when comparing \texttt{Counterfactual} to \texttt{Related} contexts; although both are grammatically well-formed, the semantic conflict in the \texttt{Counterfactual} setting induces slightly higher sparsity than \texttt{Related} inputs ($L_1$: $4938.74 < 5066.96$; Top-5\% Energy: $0.5978 > 0.5723$). This suggests that processing ``lies'' or counter-knowledge conflicts engages fewer knowledge retrieval circuits compared to processing truthful, coherent information, effectively ``pruning'' the activation of conflicting parametric knowledge. Conversely, the \texttt{Related} and \texttt{Irrelevant} settings serve as in-distribution baselines, showing the highest $L_1$ norms and lowest energy concentrations, which reflect a dense, broad activation of the model's semantic networks in response to natural linguistic structures.

\subsection{More Details about MMLU-Robust}
\label{apps:mmlu}
\subsubsection{Example in MMLU-PRO and MMLU-Robust}
MMLU-Pro~\citep{wang2024mmlu} is an enhanced version of the original Massive Multitask Language Understanding (MMLU) benchmark~\citep{hendrycks2021measuring}, developed to address shortcomings that have emerged as modern LLMs have improved. In particular, the original MMLU has become increasingly \emph{too easy} for state-of-the-art models and is affected by data noise in certain subsets. MMLU-Pro introduces several key improvements over its predecessor.
\ding{117} \textbf{Increased difficulty.} Overly simple factual questions are filtered out and replaced with more challenging problems that require multi-step and rigorous reasoning.
\ding{117} \textbf{Expanded answer space.} The number of multiple-choice options is increased from four to ten, substantially reducing the likelihood of correct answers due to random guessing and better reflecting genuine reasoning ability.
\ding{117} \textbf{Broader coverage.} The benchmark comprises over 12,000 questions across 14 core academic domains, spanning STEM, humanities, and social sciences.

To systematically evaluate the last hidden state under progressively intensifying OOD conditions, we construct a variant dataset, \textbf{MMLU-Robust}. Building on the already challenging MMLU-Pro benchmark, which features ten multiple-choice options, we introduce an iterative distractor augmentation mechanism. The idea is simple: adding more plausible options makes the task harder. We borrow some ideas from the robust attack of LLM~\cite{zhou2024mathattack, zhang2024goal, zhang2024target}

In particular, we employ a perturbation function $\mathcal{P}(\cdot)$ to generate plausible but incorrect distractors derived from the existing option set. For a given question with an initial option set $\mathcal{O}_{10}$ ($|\mathcal{O}_{10}| = 10$), we select a subset of non-ground-truth options and apply semantic or numerical perturbations (e.g., sign flipping, unit alteration, or logical inversion) to create new adversarial options. These perturbed variants are injected back into the pool to form expanded sets $\mathcal{O}_{15}$ and $\mathcal{O}_{20}$.

This process creates a controlled ``interference gradient.'' By increasing the option count from 10 to 15 and finally to 20, we densify the solution space with highly correlated noise, forcing the model to discern increasingly subtle differences. We hypothesize that this artificially induced complexity acts as a proxy for severe OOD shifts, triggering the sparse representation mechanism described in our main findings.
\begin{figure}
\small
\begin{tcolorbox}[
    colback=SoftGray, 
    colframe=DeepSlate, 
    coltitle=white,
    fonttitle=\bfseries\sffamily,
    sharp corners, 
    boxrule=0.8pt,
    title=MMLU-Pro Dataset Samples 
]

\textbf{\textcolor{DeepSlate}{Sample 1: Number Theory}} \\
\textbf{Question:} Let $A$ be the set of all ordered pairs of integers $(m, n)$ such that $7m + 12n = 22$. What is the greatest negative number in the set $B = \{m + n : (m, n) \in A\}$?

\vspace{0.5em}
\textbf{Options:}
\begin{enumerate}[label=(\Alph*), itemsep=0pt, parsep=0pt, topsep=2pt, leftmargin=2em]
    \item[] (A) -5 \quad (B) 0 \quad (C) -3 \quad
    \item[] (D) -7 \quad 
    \item[] (E) \textbf{-4} \textit{ (Correct)} \quad (F) -6 \quad
    \item[]  (G) -1 \quad (H) -2 \quad (I) -9 \quad (J) N/A
\end{enumerate}
\vspace{0.5em}
\textbf{\small [CoT Reasoning]} {\small We have $12n = 22 - 7m$. A particular solution is $m = -2, n = 3$, so $m+n=1$. The general solution is $m = -2 + 12k, n = 3 - 7k$. Then $m+n = 1 + 5k$. For $m+n < 0$, we need $5k < -1$, so $k \le -1$. Max value is at $k=-1$: $1 + 5(-1) = -4$.}

\end{tcolorbox}
\end{figure}

\begin{figure}
\small
\begin{tcolorbox}[
    colback=SoftGray, 
    colframe=DeepSlate, 
    coltitle=white,
    fonttitle=\bfseries\sffamily,
    sharp corners, 
    boxrule=0.8pt,
    title=MMLU-Robust Dataset Samples 
]
\textbf{\textcolor{DeepSlate}{Sample 1: Number Theory}} \\
\textbf{Question:} Let $A$ be the set of all ordered pairs of integers $(m, n)$ such that $7m + 12n = 22$. What is the greatest negative number in the set $B = \{m + n : (m, n) \in A\}$?

\vspace{0.5em}
\textbf{Options:}
\begin{enumerate}[label=(\Alph*), itemsep=0pt, parsep=0pt, topsep=2pt, leftmargin=2em]
   \item[] (A) -5 \quad (B) 0 \quad (C) -3 \quad 
      \item[] (D) -7 \quad 
    \item[] (E) \textbf{-4} \textit{ (Correct)} \quad (F) -6 \quad
    \item[]  (G) -1 \quad (H) -2 \quad (I) -9 \quad (J) N/A
    \item[] (H) \textbf{{1}} \quad \textit{\scriptsize [Valid member, but positive]} 
    \item[] (I) \textbf{-14} \quad \textit{\scriptsize [Valid member, but < -4]}
    \item[] (J) \textbf{4} \quad \textit{\scriptsize [Sign flip]}
    \item[] (K) \textbf{-8} \quad \textit{\scriptsize [Dense noise]}
    \item[] (M) \textbf{125} \quad \textit{\scriptsize [Dense noise]}
\end{enumerate}

\end{tcolorbox}
\end{figure}

\subsubsection{More Results in MMLU-Robust}
\label{app:mmlu_results}

\begin{figure*}
    \centering
    \includegraphics[width=0.9\linewidth]{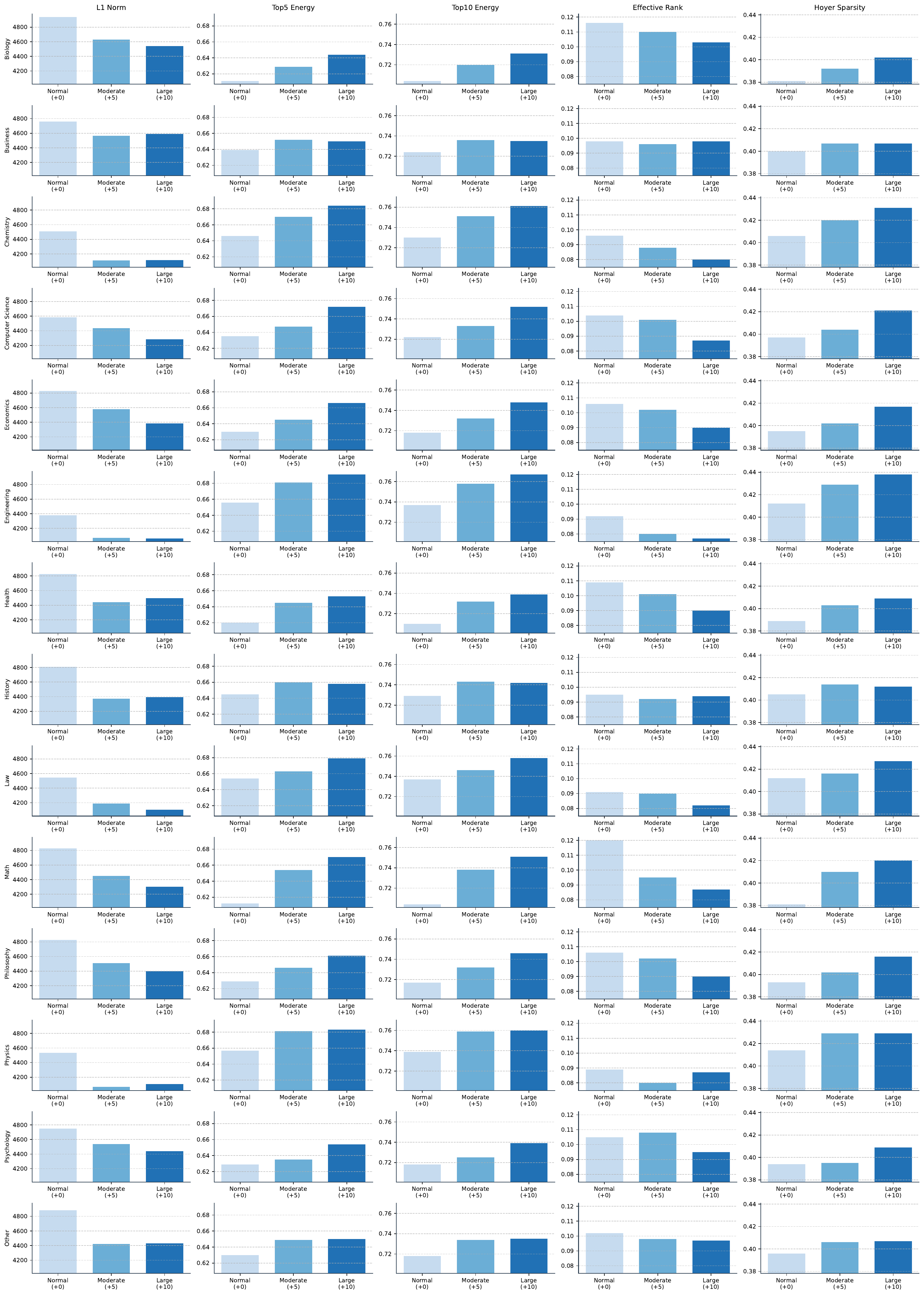}
    \caption{Per-domain sparsity statistics for \exampleb{Qwen2.5-3B-Instruct} on MMLU-Robust.}

    \label{fig:qwen_breakdown}
\end{figure*}

\begin{figure*}
    \centering
    \begin{subfigure}[t]{\linewidth}
        \centering
        \includegraphics[width=\linewidth]{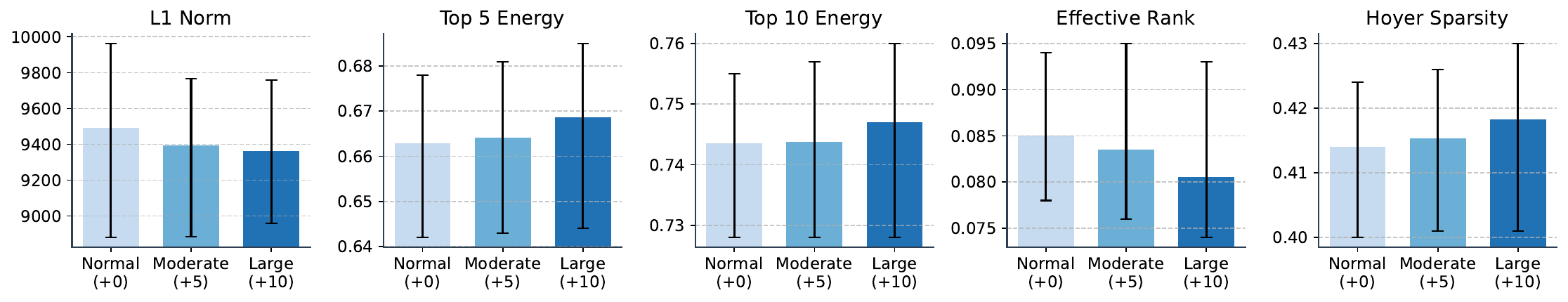}
        \caption{\exampleb{Qwen2.5-7B-Instruct}}
    \end{subfigure}

    \vspace{0.5em}

    \begin{subfigure}[t]{\linewidth}
        \centering
        \includegraphics[width=\linewidth]{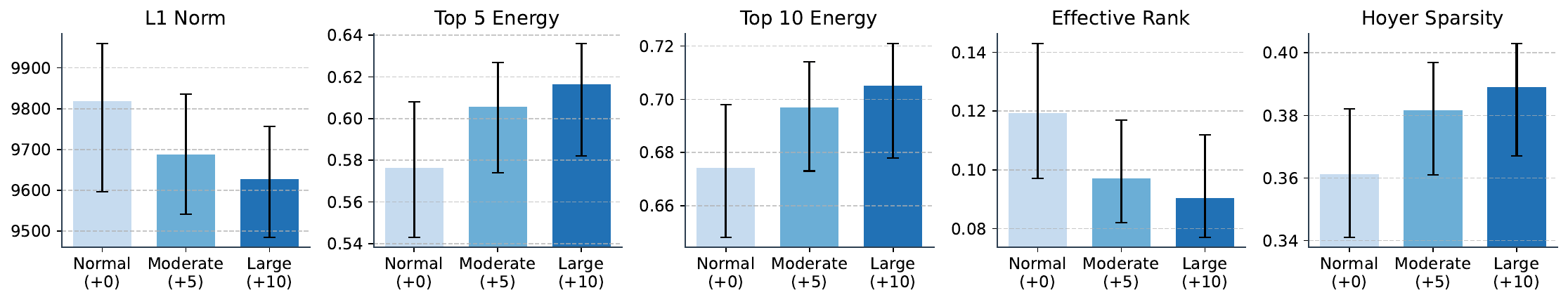}
        \caption{\exampleb{Qwen2.5-14B-Instruct}}
    \end{subfigure}

    \vspace{0.5em}

    \begin{subfigure}[t]{\linewidth}
        \centering
        \includegraphics[width=\linewidth]{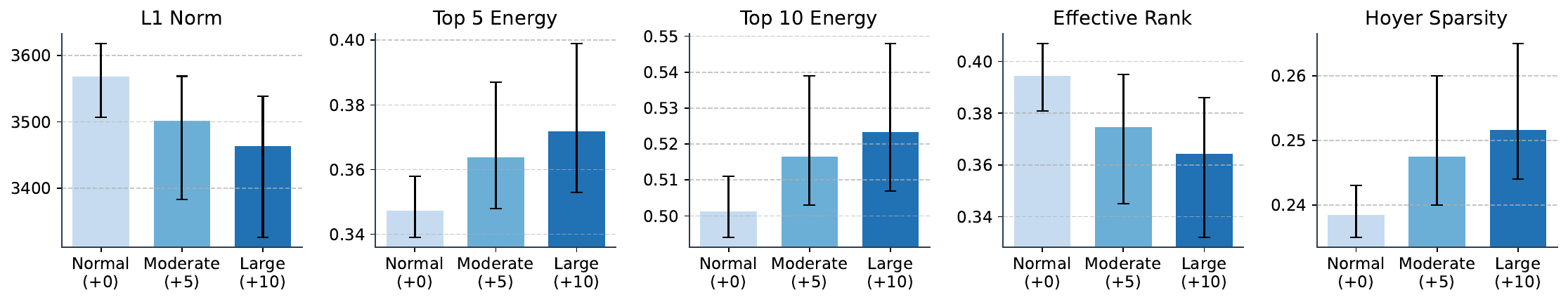}
        \caption{\exampleb{Llama-3.2-1B-Instruct}}
    \end{subfigure}

    \vspace{0.5em}

    \begin{subfigure}[t]{\linewidth}
        \centering
        \includegraphics[width=\linewidth]{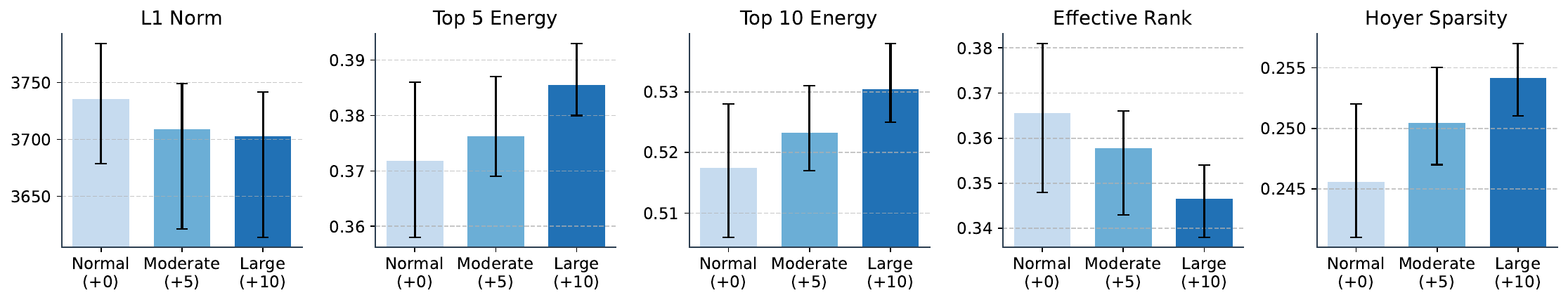}
        \caption{\exampleb{Llama-3.2-3B-Instruct}}
    \end{subfigure}

    \caption{Average sparsity statistics on MMLU-Robust for more LMs.}
    \label{fig:qwen_large_mmlu}
\end{figure*}


\autoref{fig:qwen_breakdown} reports a fine-grained breakdown of sparsity-related results on MMLU-Robust for \exampleb{Qwen2.5-3B-Instruct}, grouped by academic area and evaluated under three adversarial noise levels (Normal (+0) / Moderate (+5) / Hard (+10)). For each area, we aggregate the last hidden state statistics across questions and visualize five metrics: normalized $\ell_1$ magnitude, Top-$5\%$ Energy, Top-$10\%$ Energy, Effective Rank, and Hoyer Sparsity. 

\autoref{fig:qwen_large_mmlu} reports averaged results for four additional LMs. Across these models, we observe the same qualitative pattern as in the main results: increasing OOD difficulty consistently leads to sharper sparsity in the last hidden state across domains and metrics. This consistency suggests that the observed behavior is not specific to a particular model size or architecture. We encourage readers to reproduce and extend our analysis on additional models, datasets, and evaluation settings to further test the generality of this phenomenon.

To address the space limitation in the main text, we further report a per-area analysis on MMLU(-Robust/MMLU-Pro), where we evaluate each academic field separately to verify whether the sparsity--difficulty relationship holds beyond overall averages. As shown in \autoref{fig:qwen_breakdown}, the trend remains robust across domains: as adversarial noise increases (Normal $\rightarrow$ Moderate $\rightarrow$ Hard), the representation becomes consistently \emph{more sparse} in nearly every area. Concretely, harder (more perturbed) inputs exhibit lower normalized $\ell_1$ magnitude and effective rank, together with higher Top-$k$ energy concentration and Hoyer sparsity. While the absolute metric levels vary by subject, the direction of change is highly consistent, supporting the conclusion that the ``harder-is-sparser'' phenomenon is not driven by a small subset of domains but persists broadly across academic areas.

\subsection{Long Context Reasoning Details}
\label{apps:long}
\begin{figure}[t]
    \centering
    \includegraphics[width=0.7\linewidth]{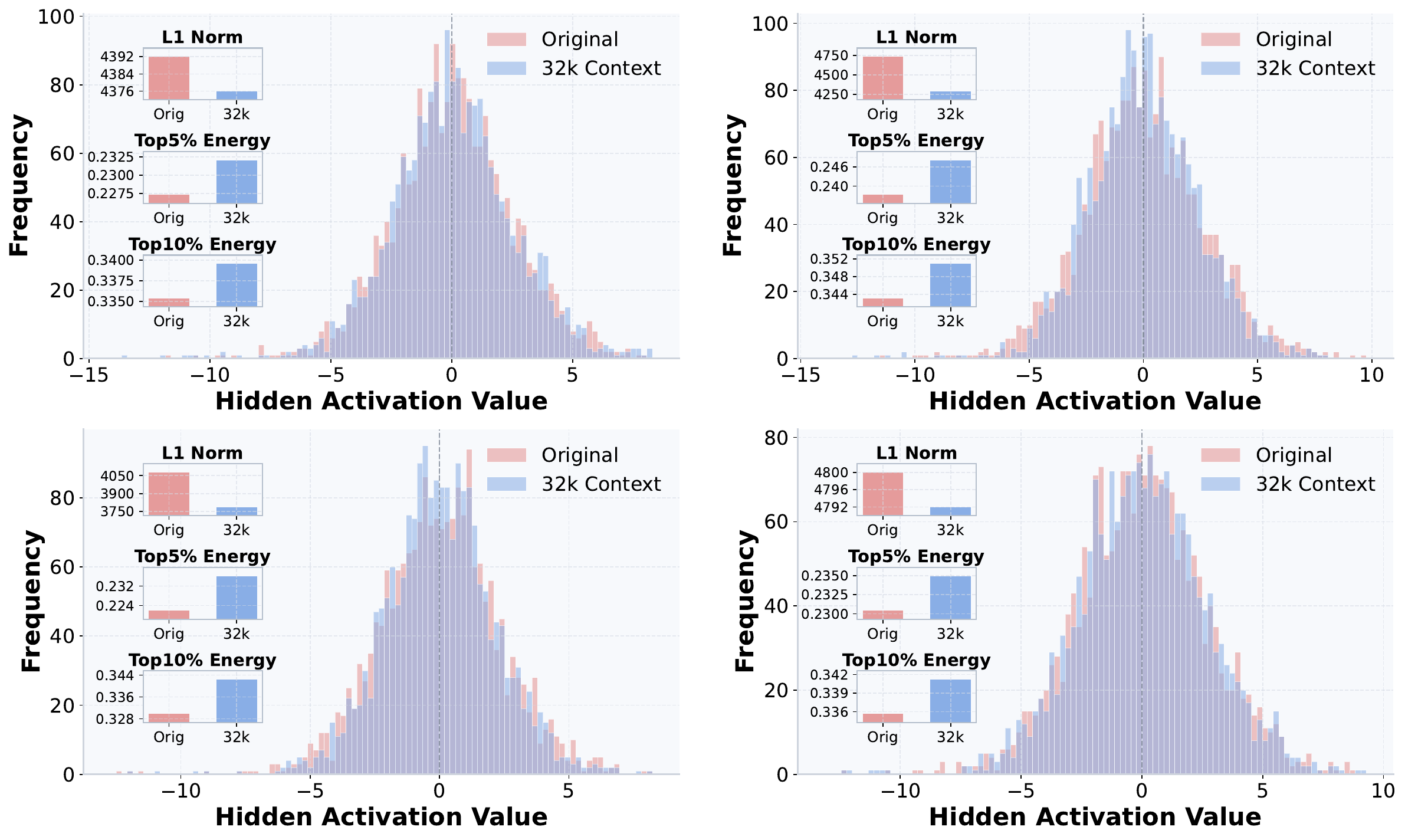}
    \caption{\textbf{Task complexity induces sparsity.} Representation activation showing increased sparsity in harder tasks \textcolor{blue}{(32k context, blue)} compared to original settings \textcolor{red}{(Short context, red)}. Three metrics consistently demonstrate higher sparsity for challenging inputs.}
    \label{fig1}
\end{figure}

\begin{figure*}[!t]  
    \centering
    \includegraphics[width=\textwidth]{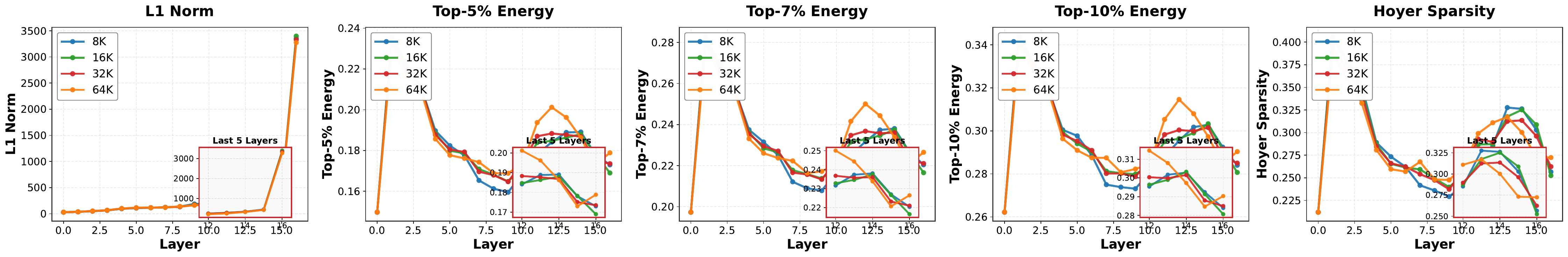}
    \parbox{0.95\linewidth}{
      \centering
      \textcolor{blue2}{\textbf{\texttt{\scriptsize
      CONTEXT: Sparsity metrics across all layers for Llama3.2-1B under varying context lengths (8K, 16K, 32K, 64K).  
      }}}%
    }
    \vspace{0.3em}
    \caption{\textbf{Layer-wise Sparsity across Context Lengths.} While intermediate layers show minimal variation across contexts, the final layers exhibit sharp divergence: longer contexts consistently produce sparser representations. This experiment was done at LongReasonQA~\citep{li2025longcontext}, which can control the background context length.}
    \label{fig10}
\end{figure*}

To further probe whether our ``harder-is-sparser'' finding extends beyond adversarial perturbations, we study a
\emph{long-context reasoning} setting where difficulty is increased by expanding the background context length.
Concretely, we compare short-context prompts against a long-context (32K) variant constructed from LongReasonQA, which
controls the amount of background evidence while keeping the core question unchanged. This setup isolates the effect of
contextual complexity (more tokens, more distractors, and longer-range dependency) on internal representations.

\autoref{fig1} visualizes the distribution of last-layer activations under short vs.\ 32K context. The long-context
condition exhibits a visibly more concentrated distribution, and the accompanying sparsity metrics shift consistently in
the ``sparser'' direction: the normalized $\ell_1$ magnitude decreases, while Top-$k$ energy concentration increases.
These results mirror the robustness experiments in the main text, suggesting that increasing task complexity via longer
contexts induce a sharper, more selective representation at inference time.

We further examine \emph{where} this sparsification emerges along the depth of LLM. \autoref{fig10} reports
layer-wise sparsity metrics for \exampleb{Llama-3.2-1B-Instruct} across 8K/16K/32K/64K contexts. A clear pattern appears: intermediate layers remain largely stable across context lengths, whereas the final layers diverge sharply, with longer contexts producing consistently sparser activations (lower $\ell_1$ Norm and effective-rank, higher Top-$k$ energy and Hoyer sparsity). This localization to late layers supports the view that long-context difficulty primarily changes the \emph{final-stage computation} that consolidates evidence into the prediction-relevant subspace, rather than uniformly reshaping representations throughout the entire LLM.

LongReasonQA~\citep{li2025longcontext} is a long-context reasoning benchmark that systematically controls the amount of background context (e.g., 8K/16K/32K/64K) while keeping the core question and answer format fixed, enabling a clean test of how increasing context length affects model reasoning and internal representations as ~\autoref{fig:longreason_example_8k}.

\begin{figure}[t]
    \centering
    \begin{tcolorbox}[width=\linewidth, title={\textbf{LongReason (8K) example (truncated background)}}]
    \footnotesize
    \textbf{Background (truncated):}\\
    \ttfamily
    Following the destruction of the house and its contents, Parks filed an insurance claim ...\\
    \quad \vdots \\
    ... Understanding this ejection process is essential for comprehending how ``Omega'' achieves its propulsion objectives.\\
    \normalfont
    \vspace{0.6em}

    \textbf{Question:}\\
    In the context of space station ``Omega,'' what does ``these fluids'' refer to?\\
    \vspace{0.3em}

    \textbf{Choices:}\\
    (A) Wave modulator \quad
    (B) Ion converter \quad
    (C) Pulsed high-energy currents \quad
    (D) Ion streams\\
    \vspace{0.3em}

    \textbf{Gold answer:} (D)
    \end{tcolorbox}

    \caption{\textbf{A LongReason 8K instance.} We show one example used in our long-context evaluation. For readability, the background is truncated (head + tail); the full context is available in the dataset.}
    \label{fig:longreason_example_8k}
\end{figure}

\subsection{Pretraining data construction details}
\label{apps:pre}

\begin{figure*}[t]
    \centering
    \begin{tcolorbox}[width=\linewidth, title={\textbf{LongReason (8K) example (paraphrased \& truncated)}}]
    \footnotesize

    \textbf{Background (paraphrased, truncated):}\\
    \ttfamily
    Note that adblockers might block our captcha, and other functionality on BHW so if you don't see the captcha or see reduced functionality please disable adblockers to ensure full functionality, note we only allow relevant management verified ads on BHW.\\
    \quad \vdots \\
   Briner’s observations confirm creationist Ice Age studies. It is most reasonable, given the Bible’s history, that there was a single Ice Age, and that the largest extent of ice lasted for only hundreds of years, beginning around 2400 BC. It is possible that the period of time wherein the “lion’s share” of melting apparently occurred was the only time that it occurred. The remaining thousands of years would therefore be a product not of data, but of the assumption of deep time.\\
    \normalfont
    \vspace{0.6em}

    \textbf{Question:}\\
    Which of the following, if true, does not weaken the viewpoint about the restoration of the jade pine forest in the country of Novaland after a chemical spill?\\
    \vspace{0.3em}

    \textbf{Choices:}\\
    (A) The pollution in the forest may have reduced to a level where some trees can survive. \\
     (B) Apart from artificially planted species, wild plants have also appeared. \\
     (C) Certain specific pollutants have been removed, but the heavy metal content in the forest soil is still higher than before.\\
     (D) The types and numbers of birds appearing in the forest are still very few, not reaching the scale before the pollution.\\
    
    \vspace{0.3em}

    \textbf{Gold answer:} (B)
    \end{tcolorbox}

    \caption{\textbf{A LongReason 8K instance (paraphrased).} We follow the same presentation format as our long-context example: truncated background + multiple-choice inquiry + gold label.}
    \label{fig:longreason_example_ecorestore_8k}
\end{figure*}

\subsubsection{Model and training setup.}
We train a randomly initialized, decoder-only Transformer following the LLaMA architecture.
Given a configuration \texttt{llama-$L$-$H$}, we set the hidden size to $d = 64H$, the number of layers to $L$,
the number of attention heads to $H$, and the MLP intermediate size to $2d$. The model is trained with a lightweight character-level tokenizer whose vocabulary consists of :\texttt{\{Q, P, 0--9, space, newline, '-', '?', <BOS>, <EOS>, <PAD>, <UNK>\}}.
Training data are generated from a latent-rule knowledge graph; each sample is a single triple serialized as \texttt{``Q\{h\} P\{r\} Q\{t\}\textbackslash n''} and optimized with the standard next-token prediction objective (labels identical to inputs, with padding labels masked by \texttt{-100}). We use the HuggingFace \texttt{Trainer} with per-device batch size 32, learning rate $10^{-4}$, cosine learning-rate
schedule, warmup ratio 0.2, and no weight decay. Unless otherwise specified, we train for \texttt{max\_steps} steps
(with optional BF16) and save a single checkpoint at the end of training.

\subsubsection{Synthetic knowledge graph generation.}
We construct a synthetic knowledge graph (KG), rather than a real-world one, because we can control the graph's size and complexity. By coupling a latent rule set with a graph growth process, we follow the setting of ~\cite{wang2025do}. We first sample a collection of acyclic relational rules, where each rule is represented as a relation chain $(r_0 \Leftarrow r_1, \dots, r_\ell)$ with $\ell \in [L_{\min}, L_{\max}]$. Next, we instantiate these rules on fresh entity nodes to form an initial backbone graph: for each rule, we create a directed path following the body relations $(r_1,\dots,r_\ell)$ and additionally insert the implied head edge
$r_0$ between the same endpoints. After building the initial backbone graph, we grow it to the desired size by adding one new entity at a time. For each new entity, we create $m$
m connections to existing entities. The relation type of each new connection is not chosen completely at random; instead, it is restricted by a relation-adjacency map that records which relations tend to appear together in the sampled rules. Optionally, we bias the attachment toward already well-connected entities, producing a power-law degree distribution.

To inject deductive structure, we periodically apply a subset of rules to existing nodes via stochastic traversal. For each candidate head entity, we follow the rule body edges on the current graph (with probability $p_{\text{mcmc}}$ at each hop) to obtain reachable tail entities; whenever a rule fires, we add the implied triple $(h, r_0, t)$ and record its supporting rule(s). We refer to such implied edges as \emph{deductible} triples, while the remaining edges are treated as \emph{atomic} triples. We then form the training set as a mixture of atomic and deductible triples controlled by a deductible ratio $\rho$ and remove all non-training edges from the graph to prevent leakage. Finally, we design three evaluation splits: an in-distribution (ID) set sampled from the training triples (measuring
memorization), and two out-of-distribution (OOD) sets composed of deductible triples stratified by their shortest supporting
rule length, resulting in long-path and short-path OOD tests. Short-path OOD contains deductible triples whose shortest supporting rule is only 1–2 steps, while long-path OOD requires at least 3 steps. In our experiments, the short path is consistently harder than the long path. The key reason is that, under our synthetic generator, short rules are typically \emph{broad} while long rules are more
\emph{selective}. For \textsc{OOD-Short}, the rule body is easy to satisfy, so the same query $(h,r)$ often admits multiple
deducible tails (i.e., several $t$'s are consistent with the rules and the current graph), creating substantial ambiguity
and stronger competition among plausible answers. In contrast, \textsc{OOD-Long} imposes stricter compositional constraints, so far fewer paths match the full chain, and the set of valid deducible tails is usually much smaller (often close to a unique solution). As a result, the long-path split is easier, whereas the short-path split is harder in our setting. The structure of the knowledge graph construction can be found in ~\autoref{tab:synthetic_kg}.

\begin{table*}[t]
\centering
\small
\setlength{\tabcolsep}{5.3pt}
\renewcommand{\arraystretch}{1.10}
\begin{tabular}{|p{3.6cm}|p{10.2cm}|}
\Xhline{1.2pt}
\rowcolor{CadetBlue!20}
\textbf{Stage} & \textbf{Synthetic KG construction (as implemented)} \\
\Xhline{1.2pt}

\textbf{Universe} &
Entities are symbolic nodes $Q_0, Q_1, \ldots, Q_{n-1}$; relations are $P_0, P_1, \ldots, P_{n_r-1}$. \\
\hline

\textbf{Rule sampling} &
Sample a pool of candidate rules as relation chains:
$(r_0 \Leftarrow r_1, r_2, \ldots, r_\ell)$ with $\ell \in [L_{\min}, L_{\max}]$.
Optionally apply length-weighted sampling with temperature $\tau$ (shorter rules sampled more often).
Acyclicity is enforced via a dependency graph over relations. \\
\hline

\textbf{Backbone graph initialization} &
Construct an initial directed multigraph by instantiating each sampled rule on fresh entity nodes:
create a path $Q \xrightarrow{r_1} \cdots \xrightarrow{r_\ell} Q'$ and add the implied edge
$Q \xrightarrow{r_0} Q'$.
Maintain per-relation entity pools (``repeated entities'') for later attachment. \\
\hline

\textbf{Graph growth (structural prior)} &
Iteratively add new nodes until reaching $n$ entities.
Each new node attaches $m$ edges by sampling relations constrained by an adjacency map derived from rule co-occurrence
(also supporting a power-law attachment mode). \\
\hline

\textbf{Deductible edge injection} &
Periodically enumerate candidate heads $h$ and attempt to apply a subset of rules (``deductible rules'') by stochastic traversal:
follow each rule body with probability $p_{\text{mcmc}}$ to obtain reachable tails $t$; if successful, add the implied triple
$(h, r_0, t)$ as a \emph{deductible} edge and record its supporting rule(s). \\
\hline

\textbf{Train graph pruning} &
Form training triples as a mixture of \emph{atomic} edges (not derivable) and \emph{deductible} edges (derivable),
controlled by deductible ratio $\rho$.
All edges not selected for training are removed from the graph to avoid leakage. \\
\hline

\textbf{Test set design (difficulty)} &
Three evaluation splits are constructed:
(i) \textbf{ID/Easy}: sampled directly from training triples (memory);
(ii) \textbf{OOD-Long/Medium}: deductible triples supported by \emph{long} rules (e.g., $\ge 3$ steps);
(iii) \textbf{OOD-Short/Hard}: deductible triples supported by \emph{short} rules (e.g., 1--2 steps).
For OOD splits, ``seen tails'' are collected from alternative deductive paths for harder negative sampling. \\
\hline

\textbf{Text serialization} &
Each triple is serialized as a character-level sequence:
\texttt{Q\{h\} P\{r\} Q\{t\}} (plus newline), enabling a lightweight tokenizer and controlled vocabulary. \\
\Xhline{1.2pt}
\end{tabular}
\caption{Synthetic knowledge graph generation via a latent-rule process: rule sampling, graph growth, deductible edge injection, and difficulty-controlled evaluation splits.}
\label{tab:synthetic_kg}
\end{table*}

\subsection{An Extended Discussion and Future Work}
Our investigation establishes a robust empirical law: \textit{The farther the distribution shift, the sparser the representation.} Beyond the raw observations, this section synthesizes the underlying mechanisms and the broader implications of this phenomenon.

We hypothesize that explicitly training for ``OOD-like'' sparsity could improve model robustness and generalization, effectively simulating the ``active suppression'' mechanism during the learning phase. Our experiments utilized dense architectures (Llama series). Mixture-of-Experts (MoE) models introduce structural sparsity by design. Investigating whether the ``OOD-induced sparsity'' phenomenon persists in MoE architectures, or if the routing mechanism absorbs the shift would be critical for generalizing our findings to the next generation of sparse foundation models. Our study focused on reasoning tasks where the model generally attempts to solve the problem. An intriguing direction is to analyze sparsity patterns during hallucinations. Does a model exhibit \textit{low} sparsity (high confidence/confusion) when it hallucinates, or does it exhibit \textit{extreme} sparsity (information collapse)? Establishing a correlation between sparsity signatures and factual errors could lead to lightweight, white-box hallucination detection methods that do not require external fact-checking.

\end{document}